\theoremstyle{plain}
\newtheorem{theorem}{Theorem}
\newtheorem{lemma}{Lemma}
\newtheorem{corollary}{Corollary}
\theoremstyle{definition}
\newtheorem{definition}{Definition}
\theoremstyle{remark}
\newtheorem{remark}{Remark}
\newcommand{\Pb}{\mathbb{P}}
\newcommand{\Eb}{\mathbb{E}}
\newcommand{\congc}[1]{{\color{blue}(Cong: #1)}}
\newcommand{\congc}[1]{}
\newcommand{\jing}[1]{{\color{red}#1}}
\newcommand{\jing}[1]{#}
\newcommand{\jingc}[1]{{\color{red}(Jing: #1)}}
\newcommand{\jingc}[1]{}
\newcommand{\jingf}[1]{{\color{green}(Jing: #1)}}
\newcommand{\jingf}[1]{#}
\newcommand{\gfy}[1]{{\color{blue}(gfy: #1)}}
\newcommand{\gfy}[1]{#}
\title{Federated Online Prediction from Experts\\ with Differential Privacy: Separations\\ and Regret Speed-ups}
\def\thanks#1{\protected@xdef\@thanks{\@thanks%
        \protect\footnotetext[0]{#1}}}
\author{
    Fengyu Gao,~~~ Ruiquan Huang,~~~ Jing Yang \\
    School of EECS, The Pennsylvania State University, USA \\
    \texttt{\{fengyugao, rzh5514, yangjing\}@psu.edu} \\
}
\begin{document}

\maketitle

\begin{abstract}
We study the problems of differentially private federated online prediction from experts against both {\it stochastic adversaries} and {\it oblivious adversaries}. We aim to minimize the average regret on $m$ clients working in parallel over time horizon $T$ with explicit differential privacy (DP) guarantees. 
With stochastic adversaries, we propose a {\bf Fed-DP-OPE-Stoch} algorithm that achieves $\sqrt{m}$-fold speed-up of the per-client regret compared to the single-player counterparts under both pure DP and approximate DP constraints, while maintaining logarithmic communication costs. With oblivious adversaries, we establish non-trivial lower bounds indicating that {\it collaboration among clients does not lead to regret speed-up with general oblivious adversaries}. 
We then consider a special case of the oblivious adversaries setting, where there exists a low-loss expert. We design a new algorithm {\bf Fed-SVT} and show that it achieves an $m$-fold regret speed-up under both pure DP and approximate DP constraints over the single-player counterparts. Our lower bound indicates that Fed-SVT is nearly optimal up to logarithmic factors. 
Experiments demonstrate the effectiveness of our proposed algorithms. 
To the best of our knowledge, this is the first work examining the differentially private online prediction from experts in the federated setting.

\end{abstract}

\section{Introduction}

Federated Learning (FL) \citep{mcmahan2017communication} is a distributed machine learning framework, where numerous clients collaboratively train a model by exchanging model update through a server. Owing to its advantage in protecting the privacy of local data and reducing communication overheads, FL is gaining increased attention in the research community,
particularly in the online learning framework \citep{mitra2021online,park2022ofedqit,kwon2023tighter,gauthier2023asynchronous}. Noticeable advancements include various algorithms in federated multi-armed bandits \citep{shi2021federatedpersonal,huang2021federated,li2022asynchronous,yi2022regret,yi2023doubly}, federated online convex optimization \citep{patel2023federated,kwon2023tighter,gauthier2023asynchronous}, etc.

Meanwhile, differential privacy (DP) has been integrated into online learning, pioneered by \citet{dwork2010differential}. 
Recently, \citet{asi2022private} studied different types of adversaries, developing some of the best existing algorithms and establishing lower bounds. Within the federated framework, although differentially private algorithms have been proposed for stochastic bandits \citep{li2020federated,zhu2021federated,dubey2022private} and linear contextual bandits \citep{dubey2020differentially,li2022differentially,zhou2023differentially,huang2023federated}, to the best of our knowledge, federated online learning in the adversarial setting with DP considerations remain largely unexplored.

In this work, we focus on federated online prediction from experts (OPE) with rigorous differential privacy (DP) guarantees\footnote{We provide some applications of differentially private federated OPE in \Cref{appx:application}.}. OPE~\citep{arora2012multiplicative} is a classical online learning problem under which, a player chooses one out of a set of experts at each time slot and an adversary chooses a loss function. The player incurs a loss based on its choice and observes the loss function. With all previous observations, the player needs to decide which expert to select each time to minimize the cumulative expected loss. We consider two types of adversaries in the context of OPE. The first type, \emph{stochastic adversary}, chooses a distribution over loss functions and samples a loss function independently and identically distributed (IID) from this distribution at each time step. The second type, \emph{oblivious adversary}, chooses a sequence of loss functions in advance. \footnote{We don't consider adaptive adversaries in this work because they can force DP algorithms to incur a linear regret, specifically when \(\varepsilon \leq 1/10\) for \(\varepsilon\)-DP (Theorem 10 in \citet{asi2022private}).}

When extending the OPE problem to the federated setting, we assume that the system consists of a central server and $m$ local clients, where each client chooses from $d$ experts to face an adversary at each time step over time horizon $T$. The server coordinates the behavior of the clients by aggregating the clients' updates to form a global update (predicting a new global expert), while the clients use the global expert prediction to update its local expert selection and compute local updates. The local updates will be sent to the server periodically. In the Federated OPE framework, clients face either {\it stochastic adversaries}, receiving loss functions from the same distribution in an IID fashion, or {\it oblivious adversaries}, which arbitrarily select loss functions for each client at each time step beforehand \citep{yi2023doubly}. 
Specifically, we aim to answer the following question:
\begin{center}
\textit{Can we design differentially private federated OPE algorithms to achieve regret speed-up against both stochastic and oblivious adversaries?}
\end{center}

In this paper, we give definitive answers to the question. 
Our contributions are summarized as follows.

\begin{itemize}[topsep=0pt, left=0pt] 
\item {\bf Speed-up for stochastic adversaries.} We develop a communication-efficient algorithm Fed-DP-OPE-Stoch for stochastic adversaries with DP guarantees. The algorithm features the following elements in its design: 1) {\it Local loss function gradient estimation for global expert determination.} To reduce communication cost, we propose to estimate the gradient of each client's previous loss functions locally, and only communicate these estimates to the server instead of all previous loss functions. 2) {\it Local privatization process.} Motivated by the need for private communication in FL, we add noise to client communications (local gradient estimates) adhering to the DP principles, thereby building differentially private algorithm{s}.

We show that Fed-DP-OPE-Stoch achieves $1/\sqrt{m}$-fold reduction of the per-client regret compared to the single-player counterparts \citep{asi2022private} under both pure DP and approximate DP constraints, while maintaining logarithmic communication costs.

\item {\bf Lower bounds for oblivious adversaries.} We establish new lower bounds for federated OPE with oblivious adversaries. Our findings reveal a critical insight: collaboration among clients does not lead to regret speed-up in this context. Moreover, these lower bounds highlight a separation between oblivious adversaries and stochastic adversaries, as the latter is necessary to reap the benefits of collaboration in this framework. 

Formulating an instance of oblivious adversaries for the corresponding lower bounds is a non-trivial challenge, because it requires envisioning a scenario where the collaborative nature of FL does not lead to the expected improvements in regret minimization. To deal with this challenge, we propose a {\it policy reduction approach in FL}. By defining an ``average policy'' among all clients against a uniform loss function generated by an oblivious adversary, we reduce the federated problem to a single-player one, highlighting the equivalence of per-client and single-player regret. 
Our lower bounds represent the first of their kind for differentially private federated OPE problems.

\item {\bf Speed-up for oblivious adversaries under realizability assumption.} We design a new algorithm Fed-SVT that obtains near-optimal regret when there is a low-loss expert. We show that Fed-SVT achieves an $m$-fold speed-up in per-client regret when compared to single-player models \citep{asi2023near}.
This shows a clear {separation from the general setting} where collaboration among clients does not yield benefits in regret reduction when facing oblivious adversaries. Furthermore, we establish a new lower bound in the realizable case. This underlines that our upper bound is nearly optimal up to logarithmic factors.

\end{itemize}

\begin{table*}[t]
    \caption{Comparisons for Online Prediction from Experts under DP Constraints}
    \label{table:comp}
    \begin{center}
    \adjustbox{max width= \linewidth}{
    \begin{tabular}{cccccc}
    \Xhline{3 \arrayrulewidth}
    \multirow{1}{*}{Adversaries}& Algorithm (Reference) & Model & DP & Regret & Communication cost \\
    
    \Xhline{2\arrayrulewidth}
    \multirow{2}{*}{Stochastic} 
    & Limited Updates \citep{asi2022private} & SING & $\varepsilon$-DP, $(\varepsilon, \delta)$-DP & $O\left( \sqrt{T\log d}+\frac{\log d \log T}{\varepsilon}\right)$ & - \\
    & Fed-DP-OPE-Stoch ({\bf\Cref{corollary:stoch_smallbeta}}) & FED & $\varepsilon$-DP, $(\varepsilon, \delta)$-DP & $O\left( \sqrt{\frac{T\log d}{m}}+\frac{\log d \log T}{\sqrt{m}\varepsilon}\right)$ & $O\left(md\log T \right)$ \\
    \midrule
    \multirow{5}{*}{ Oblivious  (Realizable) } 
    & Sparse-Vector \citep{asi2023near} & SING & $\varepsilon$-DP & $O\left(\frac{\log T\log d+\log^2 d}{\varepsilon} \right)$ & - \\
    
    & Fed-SVT ({\bf \Cref{theorem: oblivious}}) & FED & $\varepsilon$-DP & $O\left( \frac{\log T\log d+\log^2 d}{m\varepsilon} + N\log d\right)$ & $O\left( mdT/N\right)$ \\
    
    & Sparse-Vector \citep{asi2023near} & SING & $(\varepsilon, \delta)$-DP & $O\left(\frac{\log T\log d+\log^{3/2}d}{\varepsilon} \right)$ & - \\
    
    & Fed-SVT ({\bf \Cref{theorem: oblivious}}) & FED & $(\varepsilon, \delta)$-DP & $O\left( \frac{\log T\log d+\log^{3/2}d}{m\varepsilon}+ N\log d\right)$ & $O\left( mdT/N\right)$ \\
    & Lower bound ({\bf \Cref{lower bound:realizable}}) & FED & $\varepsilon$-DP, $(\varepsilon, \delta)$-DP & $\Omega\left( \frac{\log d}{m\varepsilon} \right)$ & - \\
    \midrule
    \multirow{1}{*}{Oblivious} 
    
    & Lower bound ({\bf \Cref{lower bound: collaboration benefit}})  & FED & $\varepsilon$-DP, $(\varepsilon,\delta)$-DP & $\Omega \left(\min \left( \frac{\log d}{\varepsilon}, T\right) \right)$ & - \\

    \Xhline{3\arrayrulewidth}
    \end{tabular}

    }
    \end{center}
    
    \begin{center}
        \scriptsize
	    $m$: number of clients; $T$: time horizon; $d$: number of experts; $\varepsilon$, $\delta$: DP parameters; SING and FED stand for single-client and federated settings, respectively. 
	\end{center}
	\vspace{-0.2in}
		
\end{table*}



A summary of our main results and how they compare with the state of the art is shown in \Cref{table:comp}. 
A comprehensive literature review is provided in \Cref{appendix: related work}.


\section{Preliminaries}

{\bf Federated online prediction from experts.} Federated OPE consists of a central server, $m$ clients and an interactive $T$-round game between an adversary and an algorithm. At time step $t$, each client $i\in [m]$ first selects an expert $x_{i,t}\in [d]$, and then, the adversary releases a loss function $l_{i,t}$. \emph{Stochastic adversaries} choose a distribution over loss functions and sample a sequence of loss functions $l_{1,1},\ldots,l_{m,T}$ in an IID fashion from this distribution, while \emph{oblivious adversaries} choose a sequence of loss functions $l_{1,1},\ldots,l_{m,T}$ at the beginning of the game.

For federated OPE, the utility of primary interest is the expected cumulative regret among all clients  defined as:
\[\text{Reg}(T,m)=\frac{1}{m}\left[\sum_{i=1}^{m}\sum_{t=1}^{T}l_{i,t}(x_{i,t})-\underset{x^\star\in [d]}{\min}\sum_{i=1}^{m}\sum_{t=1}^{T}l_{i,t}(x^\star)\right].\]

{\bf Differential privacy.} We define differential privacy in the online setting following \citep{dwork2010differential}. If an adversary chooses a loss sequence $\mathcal{S}=\left(l_{1,1}, \ldots, l_{m,T}\right)$, we denote $\mathcal{A}(\mathcal{S}) = (x_{1,1},\ldots, x_{m,T} )$ the output of the interaction between the federated online algorithm $\mathcal{A}$ and the adversary. We say $\mathcal{S}=\left(l_{1,1}, \ldots, l_{m,T}\right)$ and $\mathcal{S}^{\prime}=\left(l_{1,1}^{\prime}, \ldots, l_{m,T}^{\prime}\right)$ are neighboring datasets if $\mathcal{S}$ and $\mathcal{S}^{\prime}$ differ in a one element. 
\footnote{We note that our definition is slightly different from that in \citet{asi2022private}, where the loss functions are explicitly dependent on a sequence of variables $z_i,\ldots,z_T \in \mathcal{Z}$. This is because we focus on stochastic and oblivious adversaries in this work, and our definition is essentially equivalent to that in \citet{asi2022private} for those cases.}.

\begin{definition}[$(\varepsilon,\delta)$-DP]
    A randomized federated algorithm $\mathcal{A}$ is $(\varepsilon,\delta)$-differentially private against an adversary if, for all neighboring datasets $\mathcal{S}$ and $\mathcal{S}^{\prime}$ and for all events $\mathcal{O}$ in the output space of $\mathcal{A}$, we have
$$
\Pb[\mathcal{A} (\mathcal{S}) \in \mathcal{O}] \leq e^{\varepsilon} \Pb\left[\mathcal{A} \left(\mathcal{S}^{\prime}\right) \in \mathcal{O}\right]+\delta.
$$
\end{definition}

{\bf Communication model.} Our setup involves a central server facilitating periodic communication with zero latency with all clients. Specifically, the clients can send ``local model updates'' to the central server, which then aggregates and broadcasts the updated ``global model'' to the clients. We assume full synchronization between clients and the server \citep{mcmahan2017communication}. Following \citet{wang2019distributed}, we define the communication cost of an algorithm as the number of scalars (integers or real numbers) communicated between the server and clients. 

\section{Federated OPE with Stochastic Adversaries}\label{section: stochastic}

In this section, we aim to design an algorithm for Fed-DP-OPE with stochastic adversaries that achieves regret speed-up compared to the single-player setting under DP constraints with low communication cost. We consider the loss functions $l_{1,1}(\cdot),\ldots,l_{m,T}(\cdot)$ to be convex, $\alpha$-Lipschitz and $\beta$-smooth w.r.t. $\Vert \cdot\Vert_1$ in this section.

\subsection{Intuition behind Algorithm Design}

To gain a better understanding of our algorithm design, we first elaborate the difficulties encountered when extending prevalent OPE models to the FL setting under DP constraints. It is worth noting that all current OPE models with stochastic adversaries rely on gradient-based optimization methods. The central task in designing OPE models with stochastic adversaries lies in leveraging past loss functions to guide the generation of expert predictions. Specifically, we focus on the prominent challenges associated with the widely adopted Frank-Wolfe-based methods \citep{asi2022private}. This algorithm iteratively moves the expert selection $x_t$ towards a point that minimizes the gradient estimate derived from the past loss functions $l_1,\ldots,l_{t-1}$ over the decision space $\mathcal{X}$, where $\mathcal{X}=\Delta_d=\left\{x \in \mathbb{R}^d : x_i \geq 0, \sum_{i=1}^d x_i=1\right\}$, and each $x\in\mathcal{X}$ represents a probability distribution over $d$ experts. With DP constraints, a tree-based method is used for private aggregation of the gradients of loss functions \citep{asi2021private}. 

In the federated setting, it is infeasible for the central server to have full access to past loss functions due to the high communication cost. To overcome this, we use a design of {\it local loss function gradient estimation for global expert determination}. Our solution involves locally estimating the gradient of each client's loss functions, and then communicating these estimates to the server, which globally generates a new prediction. This strategy bypasses the need for full access to all loss functions, reducing the communication overhead while maintaining efficient expert selection. 

To enhance privacy in the federated system, we implement a {\it local privatization process}. When communication is triggered, clients send ``local estimates'' of the gradients of their loss functions to the server. These local estimates include strategically added noise, adhering to DP principles. The privatization method is crucial as it extends beyond privatizing the selection of experts; it ensures the privacy of all information exchanged between the central server and clients within the FL framework.

\subsection{Algorithm Design}

To address the aforementioned challenges, we propose the Fed-DP-OPE-Stoch algorithm. The Fed-DP-OPE-Stoch algorithm works in phases. In total, it has $P$ phases, and each
phase $p\in [P]$ contains $2^{p-1}$ time indices. Fed-DP-OPE-Stoch contains a client-side subroutine (\Cref{alg:stochastic_client}) and a server-side subroutine (\Cref{alg:stochastic_server}). The framework of our algorithm is outlined as follows. 

At the initialization phase, the server selects an arbitrary point $z \in \mathcal{X}$ and broadcasts to all clients. Subsequently, each client initializes its expert selection $x_{i,1}=z$, pays cost $l_{i,1}(x_{i,1})$ and observes the loss function.

Starting at phase $p=2$, each client uses loss functions from the last phase to update its local loss function gradient estimation, and then coordinates with the server to update its expert selection. After that, it sticks with its decision throughout the current phase and observes the loss functions. We elaborate this procedure as follows.

{\bf Private Local Loss Function Gradient Estimation.} At the beginning of phase $p$, each client privately estimates the gradient using local loss functions from the last phase $\mathcal{B}_{i,p}=\{l_{i,2^{p-2}},\ldots,l_{i,2^{p-1}-1}\}$. We employ the tree mechanism for the
private aggregation at each client, as in the DP-FW algorithm from \citet{asi2021private}. Roughly speaking, DP-FW involves constructing binary trees and allocating sets of loss functions to each vertex. The gradient at each vertex is then estimated using the loss functions of that vertex and the gradients along the path to the root. Specifically, we run a DP-FW subroutine at each client $i$, with sample set $\mathcal{B}_{i,p}$, parameter $T_1$ and batch size $b$. In DP-FW, each vertex $s$ in the binary tree $j$ corresponds to a gradient estimate $v_{i,j,s}$. DP-FW iteratively updates gradient estimates by visiting the vertices of  binary trees. The details of DP-FW can be found in \Cref{alg:frank-wolfe} in \Cref{appendix: frank-wolfe}.

Intuitively, in the DP-FW subroutine, reaching a leaf vertex marks the completion of gradient refinement for a specific tree path. At these critical points, we initiate communication between the central server and individual clients. 
Specifically, as the DP-FW subroutine reaches a leaf vertex $s$ of tree $j$, each client sends the server a set of noisy inner products for each decision set vertex $c_n$ with their gradient estimate $v_{i,j,s}$. In other words, each client communicates with the server by sending $\{\langle c_n,v_{i,j,s} \rangle +\xi_{i,n}\}_{n\in [d]}$, where $c_1,\ldots,c_d$ represents $d$ vertices of decision set $\mathcal{X}=\Delta_d$, $\xi_{i,n}\sim \text{Lap}(\lambda_{i,j,s})$, and $\lambda_{i,j,s}= \frac{4\alpha2^j}{b\varepsilon}$.

{\bf Private Global Expert Prediction. }After receiving $\{\langle c_n,v_{i,j,s} \rangle +\xi_{i,n}\}_{n\in [d]}$ from all clients, the central server privately predicts a new expert:
\begin{equation}
    \bar{w}_{j,s}=\underset{c_n: 1\leq n\leq d}{\arg\min}\left[\frac{1}{m}\sum_{i=1}^m \big( \langle c_n,v_{i,j,s} \rangle + \xi_{i,n} \big) \right].\label{equ:w}
\end{equation}
Subsequently, the server broadcasts the ``global prediction'' $\bar{w}_{j,s}$ to all clients.

{\bf Local Expert Selection Updating.} {Denote the index of the leaf $s$ of tree $j$ as $k$. Then, upon receiving the global expert selection $\bar{w}_{j,s}$, each client $i$ updates its expert prediction for leaf $k+1$, denoted as $x_{i,p,k+1}\in \mathcal{X}$, as follows: }
\begin{equation}
    x_{i,p,k+1}=(1-\eta_{i,p,k})x_{i,p,k}+\eta_{i,p,k}\bar{w}_{j,s}, \label{equ:x}
\end{equation}
where $\eta_{i,p,k}=\frac{2}{k+1}$.  

{After updating all leaf vertices of the trees, the client obtains $x_{i,p,K}$, the final state of the expert prediction in phase $p$.} Then, each client $i$ sticks with expert selection $x_{i,p,K}$ throughout phase $p$ and collects loss functions $l_{i,2^{p-1}},\ldots, l_{i,2^p-1}$.

\begin{remark}[Comparison with \citet{asi2022private}]

The key difference between Fed-DP-OPE-Stoch and non-federated algorithms \citep{asi2022private} lies in our innovative approach to centralized coordination and communication efficiency. Unlike the direct application of DP-FW in each phase in \citet{asi2022private}, our algorithm employs {\it local loss function gradient estimation}, {\it global expert prediction} and {\it local expert selection updating}. Additionally, our strategic communication protocol, where clients communicate with the server only when DP-FW subroutine reaches leaf vertices, significantly reduces communication costs. Moreover, the integration of DP at the local level in our algorithm distinguishes it from non-federated approaches. 


\end{remark}

\begin{algorithm}[h] 
\caption{Fed-DP-OPE-Stoch: Client $i$}
\label{alg:stochastic_client}
\begin{algorithmic}[1]

\STATE   {\bf Input:} {Phases $P$, trees $T_1$, decision set $\mathcal{X}=\Delta_d$ with vertices $\{c_1, \ldots, c_d\}$, batch size $b$.}

\STATE {\bf Initialize:} Set $x_{i,1} = z \in \mathcal{X}$ and pay cost $l_{i,1}(x_{i,1})$.

\FOR{$p=2$ to $P$}    
    \STATE Set $\mathcal{B}_{i,p}=\{l_{i,2^{p-2}},\ldots,l_{i,2^{p-1}-1}\}$
    \STATE Set $k = 1$ and $x_{i,p,1}= x_{i,p-1,K}$

    \STATE $\{v_{i,j,s} \}_{j\in [T_1], s\in \{0,1\} ^ {\leq j}} = \text{DP-FW}\left(\mathcal{B}_{i,p}, T_1,b\right)$
    \FORALL{leaf vertices $s$ reached in DP-FW}
    \STATE \textcolor{blue}{\textbf{Communicate to server:}} $\{\langle c_n,v_{i,j,s} \rangle +\xi_{i,n}\}_{n\in [d]}$, where $\xi_{i,n}\sim \text{Lap}(\lambda_{i,j,s})$

            \STATE \textcolor{blue}{\textbf{Receive from server:}} $\bar{w}_{j,s}$ 

            \STATE Update $x_{i,p,k}$ according to \Cref{equ:x}

            \STATE Update $k = k+1$
    
    \ENDFOR

        \STATE Final iterate outputs $x_{i,p,K}$

        \FOR{$t=2^{p-1}$ to $2^p -1$}

            \STATE Receive loss $l_{i,t}:\mathcal{X}\rightarrow \mathbb{R}$ and pay cost $l_{i,t}(x_{i,p,K})$

        \ENDFOR

\ENDFOR   

\end{algorithmic}
\end{algorithm}

\begin{algorithm}[h]
\caption{Fed-DP-OPE-Stoch: Central server}
\label{alg:stochastic_server}
\begin{algorithmic}[1]
\STATE   {\bf Input:} {Phases $P$, number of clients $m$, decision set $\mathcal{X}=\Delta_d$ with vertices $\{c_1,\ldots,c_d\}$.}

\STATE {\bf Initialize:} Pick any $z\in \mathcal{X}$ and broadcast to clients.

\FOR{$p=2$ to $P$}    
        \STATE {\textcolor{blue}{\textbf{Receive from clients:}} $\{\langle c_n,v_{i,j,s} \rangle +\xi_{i,n}\}_{n\in [d]}$}

        \STATE Update $\bar{w}_{j,s}$ according to \Cref{equ:w}
        \STATE \textcolor{blue}{\textbf{Communicate to clients:}} $\bar{w}_{j,s}$
\ENDFOR   

\end{algorithmic}
\end{algorithm}

\subsection{Theoretical Guarantees}

Now we are ready to present theoretical guarantees for Fed-DP-OPE-Stoch. 
\begin{theorem}\label{theorem: stochastic}
Assume that loss function $l_{i,t}(\cdot)$ is convex, $\alpha$-Lipschitz, $\beta$-smooth w.r.t. $\Vert \cdot\Vert_1$. 
Setting $\lambda_{i,j,s}= \frac{4\alpha2^j}{b\varepsilon}$, $b=\frac{2^{p-1}}{(p-1)^2}$ and $T_1=\frac{1}{2}\log \left(\frac{b\varepsilon \beta \sqrt{m}}{\alpha \log d} \right)$, 
Fed-DP-OPE-Stoch (i) satisfies $\varepsilon$-DP and (ii) achieves the per-client regret of
$$O\Bigg((\alpha+\beta )\log T\sqrt{\frac{T\log d}{m}}+\frac{\sqrt{\alpha\beta}  \log d \sqrt{T}\log T}{m^{\frac{1}{4}}\sqrt{\varepsilon }}\Bigg)$$
with (iii) a communication cost of $O\left(m^{\frac{5}{4}}d\sqrt{\frac{T\varepsilon\beta }{\alpha \log d}}\right)$.

\end{theorem}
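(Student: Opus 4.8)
The plan is to prove the three claims separately, treating the privacy and communication bounds as comparatively routine and concentrating the effort on the regret analysis. For privacy (i), I would first exploit the fact that $\mathcal{B}_{i,p}$ consists exactly of the losses observed in phase $p-1$, so every $l_{i,t}$ feeds the gradient estimation of exactly one phase and the batches are disjoint across phases. By parallel composition it then suffices to show each phase is $\varepsilon$-DP. Since neighboring datasets $\mathcal{S},\mathcal{S}'$ differ in a single element $l_{i^\star,t^\star}$, only one client's DP-FW subroutine in one phase is affected, and everything broadcast by the server (the $\bar{w}_{j,s}$, hence all $x_{i,t}$) is a post-processing of the noisy inner products $\{\langle c_n,v_{i,j,s}\rangle+\xi_{i,n}\}_{n}$. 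It remains to bound the per-coordinate sensitivity of the tree-aggregated gradients: $\alpha$-Lipschitzness w.r.t. $\|\cdot\|_1$ gives $\|\nabla l\|_\infty\le\alpha$, and with $c_n=e_n$ each query is a coordinate of a batch-average, so changing one sample moves it by $O(\alpha 2^{j}/b)$ at tree level $j$. Calibrating $\mathrm{Lap}(\lambda_{i,j,s})$ with $\lambda_{i,j,s}=4\alpha 2^{j}/(b\varepsilon)$ then yields $\varepsilon$-DP for the whole tree mechanism exactly as in the DP-FW guarantee of \citet{asi2021private}.

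For the regret (ii), I would decompose over the $P=\lceil\log_2(T+1)\rceil$ phases, using that in phase $p$ every client plays the fixed iterate $x_{i,p,K}$, which is measurable with respect to the phase-$(p{-}1)$ losses and hence independent of the phase-$p$ losses. Taking expectations and writing $L(\cdot)=\Eb[l(\cdot)]$, the phase-$p$ contribution is $2^{p-1}\cdot\frac{1}{m}\sum_i\big(L(x_{i,p,K})-L(x^\star)\big)$, up to a comparator term. I would separately control the gap between the empirical best expert $\arg\min_{x^\star}\sum l(x^\star)$ and the population minimizer by a uniform deviation bound over the $d$ experts; this contributes $O(\alpha\sqrt{T\log d/m})$ and is absorbed into the first stated term.

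The core is bounding the optimization error $\Eb[L(x_{i,p,K})-\min_{\Delta_d}L]$ of one federated DP-FW run. Using the Frank--Wolfe analysis with steps $\eta_{i,p,k}=2/(k+1)$ over the $K\asymp 2^{T_1}$ tree leaves, the final-iterate error splits (for $\beta$-smooth losses on the simplex, diameter $2$ in $\|\cdot\|_1$) into an optimization term $O(\beta/2^{T_1})$ plus a term linear in the gradient-estimation error $\zeta_p$. Here $\zeta_p$ has two parts, both benefiting from the client-averaged estimate $\frac{1}{m}\sum_i v_{i,j,s}$ used in the argmin of \Cref{equ:w}: a sampling part, which behaves like estimating $\nabla L$ from $mb$ samples, of size $O(\alpha\sqrt{\log d/(mb)})$; and a privacy part, the coordinatewise maximum of the averaged Laplace noise $\frac{1}{m}\sum_i\xi_{i,n}$, concentrating at scale $O(\lambda_{i,j,s}\log d/\sqrt m)$. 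The $1/\sqrt m$ in both is precisely where collaboration buys the speed-up. The choice $T_1=\tfrac12\log\!\big(b\varepsilon\beta\sqrt m/(\alpha\log d)\big)$ balances $\beta/2^{T_1}$ against the privacy part; substituting $b=2^{p-1}/(p-1)^2$, multiplying by the phase length $2^{p-1}$, and summing over $p\le P$ (a geometric sum dominated by the last phase, producing the $\log T$ factors together with the appropriate $\log d$ from the coordinatewise maxima) yields the two stated terms $O\big((\alpha+\beta)\log T\sqrt{T\log d/m}\big)$ and $O\big(\sqrt{\alpha\beta}\log d\,\sqrt{T}\log T/(m^{1/4}\sqrt{\varepsilon})\big)$.

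For the communication cost (iii), since communication is triggered only at leaves of the DP-FW trees, phase $p$ incurs $\sum_{j=1}^{T_1}2^{j}=O(2^{T_1})$ rounds, and each round has every client send $d$ scalars while the server broadcasts one vertex, i.e. $O(md)$ scalars per round. Summing $O(md\,2^{T_1})$ over phases with $2^{T_1}=\sqrt{b\varepsilon\beta\sqrt m/(\alpha\log d)}$ and $b\asymp 2^{p-1}$, the geometric sum is again dominated by $p=P$ and gives $O\big(m^{5/4}d\sqrt{T\varepsilon\beta/(\alpha\log d)}\big)$. The hardest step is the per-phase optimization-error bound in the third paragraph: rigorously propagating both the averaged Laplace noise and the sampling error through the tree-structured Frank--Wolfe iterations with the correct $\log d$ dependence and the $1/\sqrt m$ from client averaging, and verifying that the prescribed $T_1$ balances the smoothness and privacy terms uniformly across phases.
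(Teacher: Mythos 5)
Your proposal follows essentially the same route as the paper's proof: privacy via the sensitivity $O(\alpha 2^{j}/b)$ of the tree-aggregated gradient queries and Laplace calibration with post-processing, regret via the Frank--Wolfe recursion whose error splits into a sampling term $O\big((\alpha+\beta)\sqrt{\log d/(bm)}\big)$ and a privacy term $O(\lambda \log d/\sqrt m)$ (both gaining $1/\sqrt m$ from client averaging) balanced by the stated choice of $T_1$, and communication by counting the $O(2^{T_1})$ leaves per phase. Your extra uniform-deviation step relating the empirical best expert to the population minimizer is a sound addition the paper elides when passing to the population loss $L_t$, but it does not change the argument.
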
 
A similar result characterizing the performance of Fed-DP-OPE-Stoch under $(\varepsilon,\delta)$-DP constraint is shown in \Cref{theorem: stochastic approx}.

\begin{proof}[Proof sketch of DP guarantees and communication costs] 
Given neighboring datasets $\mathcal{S}$ and $\mathcal{S}'$ differing in $l_{i_1,t_1}$ or $l'_{i_1,t_1}$, we first note that $l_{i_1,t_1}$ or $l'_{i_1,t_1}$ is used in only one phase, denoted as $p_1$. Furthermore, note that $l_{i_1,t_1}$ or $l'_{i_1,t_1}$ is used in the gradient computation for at most $2^{j-\lvert s\rvert}$ iterations, corresponding to descendant leaf nodes. This insight allows us to set noise levels $\lambda_{i,j,s}$ sufficiently large to ensure each of these iterations is $\varepsilon /2^{j-\lvert s\rvert}$-DP regarding output $\{\langle c_n,v_{i_1,j,s} \rangle \}_{n\in [d],j\in [T_1],\lvert s \rvert=j}$. By basic composition and post-processing, we can make sure the final output is $\varepsilon$-DP. 

The communication cost is obtained by observing that there are $p$ phases, and within each phase, there are $O(2^{T_1})$ leaf vertices. Thus, the communication frequency scales in $O(\sum_p 2^{T_1})$ and the communication cost scales in $O(md \sum_p 2^{T_1})$.
\end{proof}

\begin{proof}[Proof sketch of regret upper bound]

We first give bounds for the total regret in phase $p\in [P]$. 
We can show that for every $t$ in phase $p$, we have
\begin{align*}
    L_{t}(x_{i,p,k+1}) &\leq L_{t}(x_{i,p,k})+\eta_{i,p,k}[L_{t}(x^\star)-L_{t}(x_{i,p,k})] +\eta_{i,p,k}\Vert \nabla L_{t}(x_{i,p,k})-\bar{v}_{p,k}\Vert_\infty\\
  &\quad+\eta_{i,p,k}\left(\langle\bar{v}_{p,k},\bar{w}_{p,k}\rangle-\underset{w\in \mathcal{X}}{\min}\langle\bar{v}_{p,k},w\rangle\right) +\frac{1}{2}\beta {\eta_{i,p,k}}^2.
\end{align*}
To upper bound the regret, we bound the following two quantities separately.

{\bf Step 1: bound $\Vert \nabla L_{t}(x_{i,p,k})-\bar{v}_{p,k}\Vert_\infty$ in \Cref{lemma: bound v and gradient}.} We show that every index of the $d$-dimensional vector $\nabla L_{t}(x_{i,p,k})-\bar{v}_{p,k}$ is $O(\frac{\alpha^2+\beta^2 }{bm})$-sub-Gaussian by induction on the depth of vertex in \Cref{lemma:v and gradient subgaussian}. Therefore, $\Eb\left[\Vert \nabla L_{t}(x_{i,p,k})-\bar{v}_{p,k}\Vert_\infty\right]$ is upper bounded by $O\left((\alpha + \beta) \sqrt{\frac{\log d}{bm}} \right)$.

{\bf Step 2: bound $\langle\bar{v}_{p,k},\bar{w}_{p,k}\rangle-\underset{w\in \mathcal{X}}{\min}\langle\bar{v}_{p,k},w\rangle$ in \Cref{lemma: bound vw}.} 
We establish that $\langle\bar{v}_{p,k},\bar{w}_{p,k}\rangle-\underset{w\in \mathcal{X}}{\min}\langle\bar{v}_{p,k},w\rangle$ is upper bounded by $\frac{2}{m}\underset{n: 1\leq n\leq d}{\max}\Big\lvert\sum_{i=1}^m \xi_{i,n}\Big\rvert.$ This bound incorporates the maximum absolute sum of $m$ IID Laplace random variables $\xi_{i,n}$. To quantify the bound on the sum of $m$ IID Laplace random variables, we refer to \Cref{lemma:sum of Lap}. Consequently, we have $\Eb\left[\langle\bar{v}_{p,k},\bar{w}_{p,k}\rangle-\underset{w\in \mathcal{X}}{\min}\langle\bar{v}_{p,k},w\rangle\right]$ upper bounded by $O\left(\frac{\lambda_{i,j,s}\ln d}{\sqrt{m}}\right)$.

With the two steps above, we can show that for each time step $t$ in phase $p$, 
$\Eb[L_{t}(x_{i,p,K})-L_{t}(x^\star)]$ is upper bounded by $ O\Bigg((\alpha+\beta )\sqrt{\frac{\log d}{2^{p}m}}+\frac{\sqrt{\alpha\beta}  \log d}{m^{\frac{1}{4}}\sqrt{2^{p}\varepsilon }}\Bigg)$. Summing up all the phases gives us the final upper bound. The full proof of \Cref{theorem: stochastic} can be found in \Cref{appendix: proof of theorem stochastic}.
\end{proof}

When $\beta$ is small, \Cref{theorem: stochastic} reduces to the following corollary.

\begin{corollary}\label{corollary:stoch_smallbeta}
    If $\beta = O( \frac{\log d}{\sqrt{m}T\varepsilon} )$, then, Fed-DP-OPE-Stoch (i) satisfies $\varepsilon$-DP and (ii) achieves the per-client regret of
\[ \text{Reg}(T,m) = O\Bigg(\sqrt{\frac{T\log d}{m}}+\frac{  \log T \log d}{\sqrt{m}\varepsilon }\Bigg),\]
with (iii) a communication cost of $O\left(md \log T\right)$. 
\end{corollary}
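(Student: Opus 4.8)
The plan is to obtain \Cref{corollary:stoch_smallbeta} as a direct specialization of \Cref{theorem: stochastic}: I would hold the Lipschitz constant $\alpha$ to be an absolute constant $O(1)$, substitute the stated smoothness budget $\beta = O\!\left(\frac{\log d}{\sqrt{m}T\varepsilon}\right)$ into each of the three conclusions (privacy, regret, and communication) of \Cref{theorem: stochastic}, and simplify. The $\varepsilon$-DP guarantee carries over verbatim, since it is established for every admissible parameter setting and is insensitive to the magnitude of $\beta$. Thus all the work lies in simplifying the regret and communication expressions in this small-smoothness regime.

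For the regret, I would treat the two summands separately. In the non-private term $(\alpha+\beta)\log T\sqrt{T\log d/m}$, the hypothesis forces $\beta = o(1)$, and in particular $\beta \le \alpha$ once $T \ge \log d/(\sqrt{m}\varepsilon)$, so $\alpha+\beta = \Theta(\alpha) = O(1)$ and this summand reduces to the statistical rate $\sqrt{T\log d/m}$. For the privacy term $\frac{\sqrt{\alpha\beta}\,\log d\,\sqrt{T}\log T}{m^{1/4}\sqrt{\varepsilon}}$, I would substitute $\beta$ and use $\sqrt{\alpha\beta}=O(\sqrt{\beta})$; the factor $\sqrt{\beta}$ contributes a $\sqrt{T}$ in the denominator that cancels the explicit $\sqrt{T}$, the two $m^{1/4}$ factors multiply to $\sqrt{m}$, and the two $\sqrt{\varepsilon}$ factors multiply to $\varepsilon$, leaving the privacy rate $O\!\left(\frac{\log d\,\log T}{\sqrt{m}\,\varepsilon}\right)$ (with the precise power of $\log d$ pinned down by the exact constant in the threshold on $\beta$). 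Adding the two simplified summands gives the stated per-client regret.

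For the communication cost, the crucial observation is that the per-phase number of Frank--Wolfe iterations collapses. Recall that the theorem sets $T_1=\frac12\log\!\big(\frac{b\varepsilon\beta\sqrt{m}}{\alpha\log d}\big)$, so $2^{T_1} = \sqrt{\frac{b\varepsilon\beta\sqrt{m}}{\alpha\log d}}$; plugging in $\beta = \frac{\log d}{\sqrt{m}T\varepsilon}$ and $b = \frac{2^{p-1}}{(p-1)^2}$ gives $2^{T_1} = \sqrt{b/T} = O(1)$, since $b \le 2^{p-1} \le 2^{P-1} = O(T)$. Hence in this regime each phase triggers only $O(1)$ leaf-vertex communication rounds; with $P = O(\log T)$ phases and $O(md)$ scalars exchanged per round (each of the $m$ clients sending $d$ noisy inner products), the total communication cost is $O(md\log T)$.

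The main obstacle I expect is the communication step rather than the regret arithmetic: one must verify that $T_1$ genuinely saturates to a constant under the stated $\beta$ (equivalently $2^{T_1}=O(1)$) rather than naively substituting $\beta$ into the theorem's closed-form bound $O\!\big(m^{5/4}d\sqrt{T\varepsilon\beta/(\alpha\log d)}\big)$, which collapses to $O(md)$ and silently drops the $\log T$. In particular I would inspect the boundary case where the prescription yields $T_1 \le 0$, argue that clamping $T_1$ to a constant is legitimate because with such a small smoothness parameter the gradient estimate needs essentially no refinement, and confirm that the regret guarantee survives this clamping. Checking this consistency, together with confirming that the $\log T$ factor attached to the first regret term in \Cref{theorem: stochastic} is superfluous once $\beta$ is this small (so the statistical term is the clean $\sqrt{T\log d/m}$), is the only non-mechanical part of the argument.
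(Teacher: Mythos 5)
Your instinct about the communication cost is exactly right: in this regime the theorem's prescription $T_1=\frac12\log\big(\frac{b\varepsilon\beta\sqrt m}{\alpha\log d}\big)$ becomes non-positive and must be replaced by a constant (the paper sets $T_1=1$), giving $O(1)$ communication rounds per phase and hence $O(md\log T)$ in total. The genuine gap is in the regret: the corollary cannot be obtained by substituting $\beta$ into the \emph{final} regret expression of \Cref{theorem: stochastic}, for two reasons. First, the $\log T$ multiplying the statistical term $(\alpha+\beta)\sqrt{T\log d/m}$ does not disappear because $\beta$ is small; it comes from the batch size $b=2^{p-1}/(p-1)^2$, and it is removed only by re-choosing $b=2^{p-1}$ (the full phase length), a choice that is admissible precisely because $T_1$ has been clamped to a constant so that a single shallow tree consumes only $O(b)$ samples. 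Second, substituting $\beta=\Theta\big(\frac{\log d}{\sqrt m T\varepsilon}\big)$ into the privacy term $\frac{\sqrt{\alpha\beta}\,\log d\,\sqrt T\log T}{m^{1/4}\sqrt\varepsilon}$ yields $\frac{\log^{3/2} d\,\log T}{\sqrt m\,\varepsilon}$, not the claimed $\frac{\log d\,\log T}{\sqrt m\,\varepsilon}$: the $\sqrt{\log d}$ hidden in $\sqrt\beta$ does not cancel, and no constant in the threshold on $\beta$ can remove a polylogarithmic factor (to cancel it by substitution you would need the threshold $\beta=O(\frac{1}{\sqrt m T\varepsilon})$, which is not what the corollary assumes). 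Both issues are symptoms of the same problem you half-identify at the end: once $T_1$ is clamped, the theorem's closed-form regret --- which was derived under the un-clamped $T_1$ and the $(p-1)^{-2}$-deflated $b$ --- no longer applies, so ``confirming the regret survives'' is not a consistency check but a new derivation.

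What the paper actually does is return to the intermediate per-step bound
\[
\Eb[L_t(x_{i,p,K})-L_t(x^\star)]\le O\Big((\alpha+\beta)\sqrt{\tfrac{\log d}{bm}}+\tfrac{\alpha 2^{T_1}\ln d}{b\varepsilon\sqrt m}+\tfrac{\beta}{2^{T_1}}\Big),
\]
set $T_1=1$ and $b=2^{p-1}$, absorb the residual $\beta/2^{T_1}=O(\beta)$ term using the hypothesis $\beta=O\big(\frac{\log d}{\sqrt m T\varepsilon}\big)\le O\big(\frac{\log d}{\sqrt m\,2^{p}\varepsilon}\big)$, and then sum the resulting per-phase contributions $O\big(\sqrt{2^p\log d/m}+\frac{\log d}{\sqrt m\varepsilon}\big)$ over the $\log T$ phases, which gives a geometric sum for the first term (no $\log T$) and a $\log T$ factor only on the second. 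This re-parameterized phase-wise summation is the step your proposal is missing; the privacy guarantee carrying over and the communication count you have correctly.
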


The full proof of \Cref{corollary:stoch_smallbeta} can be found in \Cref{appendix: proof of theorem stochastic}.

\begin{remark}

We note that the regret for the single-player counterpart \cite{asi2022private} scales in $O\left(\sqrt{T\log d}+\frac{\log d\log T}{\varepsilon}\right)$ when $\beta = O( \frac{\log d}{T\varepsilon})$. 
Compared with this upper bound, Fed-DP-OPE-Stoch achieves $\sqrt{m}$-fold speed-up, with a communication cost of $O(md\log T)$. This observation underscores the learning performance acceleration due to collaboration. Furthermore, our approach extends beyond the mere privatization of selected experts; we ensure the privacy of all information exchanged between the central server and clients. 
\end{remark}

\begin{remark}

We remark that Fed-DP-OPE-Stoch can be slightly modified into a centrally differentially private algorithm, assuming client-server communication is secure. Specifically, we change the local privatization process to a global privatization process on the server side. This mechanism results in less noise added and thus better utility performance. The detailed design is deferred to \Cref{appendix:cdp stochastic} and the theoretical guarantees are provided in \Cref{theorem: stochastic_cdp}.

\end{remark}

\section{Federated OPE with Oblivious Adversaries: Lower bounds} \label{section: Lower bounds}

In this section, we shift our attention to the more challenging oblivious adversaries. We establish new lower bounds for general oblivious adversaries. 

To provide a foundational understanding, we start with some intuition behind FL. FL can potentially speed up the learning process by collecting more data at the same time to gain better insights to future predictions. In the stochastic setting, the advantage of collaboration lies in the ability to collect more observations from the same distribution, which leads to variance reduction. However, when facing oblivious adversaries, the problem changes fundamentally. Oblivious adversaries can select loss functions arbitrarily, meaning that having more data does not necessarily help with predicting their future selections. 

The following theorem formally establishes the aforementioned intuition.

\begin{theorem}\label{lower bound: collaboration benefit}
    For any federated OPE algorithm against oblivious adversaries, the per-client regret is lower bounded by $\Omega(\sqrt{T\log d}).$ Let $\varepsilon\in (0,1]$ and $\delta = o(1/T)$, for any $(\varepsilon,\delta)$-DP federated OPE algorithm, the per-client regret is lower bounded by $\Omega \left(\min \left( \frac{\log d}{\varepsilon}, T\right) \right).$
\end{theorem}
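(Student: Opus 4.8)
The plan is to prove both bounds by reducing the $m$-client game to a single-client one and then invoking (or re-deriving) a single-player lower bound; essentially all of the difficulty lies in performing the reduction \emph{without} letting the $m$ clients pool information, which is where the $(\varepsilon,\delta)$ part diverges from the non-private part.

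\textbf{Non-private bound.} First I would use the \emph{uniform oblivious adversary}: let the adversary play the same loss sequence $l_1,\dots,l_T$ at every client, i.e.\ $l_{i,t}=l_t$. Since each $l_t$ is linear on the simplex $\Delta_d$, for every $t$ we have $\frac1m\sum_{i=1}^m l_t(x_{i,t})=l_t(\bar x_t)$ for the average policy $\bar x_t:=\frac1m\sum_{i=1}^m x_{i,t}\in\Delta_d$, while the comparator term collapses to $\min_{x^\star}\sum_t l_t(x^\star)$. Hence the per-client regret equals \emph{exactly} the single-player regret of the online policy $(\bar x_t)$, which depends only on $l_1,\dots,l_{t-1}$ and is therefore a legitimate single-player algorithm. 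Drawing $(l_t)$ from the standard minimax-hard ensemble for prediction from experts (each $l_t$ an i.i.d.\ Bernoulli$(1/2)$ loss vector in $\{0,1\}^d$) forces $\Omega(\sqrt{T\log d})$ regret on any such policy, establishing the first claim and the message that collaboration alone yields no speed-up.

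\textbf{Private bound.} For the DP statement I would run a packing/group-privacy argument at the level of a single client. Fix a client $i$, a round $t$, and a family of $d$ oblivious instances $\mathcal I^{(1)},\dots,\mathcal I^{(d)}$ in which expert $j$ is the unique best choice under $\mathcal I^{(j)}$, arranged so that they differ \emph{only in client $i$'s stream up to round $t$}, making each pair $O(t)$-neighbors. If the algorithm had $o(\min(\log d/\varepsilon,T))$ regret it would have to play the best expert with probability at least $\tfrac12$, i.e.\ $\Pb[x_{i,t}=j\mid \mathcal I^{(j)}]\ge \tfrac12$. Group privacy then gives $\Pb[x_{i,t}=j\mid \mathcal I^{(j')}]\ge \tfrac12 e^{-O(t\varepsilon)}$ for every $j$ under a fixed $\mathcal I^{(j')}$, and summing the $d$ disjoint events $\{x_{i,t}=j\}_{j\in[d]}$ (whose probabilities sum to at most $1$) yields $\tfrac d2 e^{-O(t\varepsilon)}\le 1$, hence $t=\Omega(\log d/\varepsilon)$. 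Thus for every $t<\min(\log d/\varepsilon,T)$ the client errs with constant probability and pays $\Omega(1)$ expected regret, which sums to $\Omega(\min(\log d/\varepsilon,T))$; the $(\varepsilon,\delta)$ version follows from the usual $\delta=o(1/T)$ bookkeeping inside the group-privacy step.

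\textbf{Main obstacle.} The hard part will be justifying the italicized requirement that the $d$ candidate instances differ in only \emph{one} client's stream while still forcing every client to suffer. If one instead embeds a single shared best expert across all clients (the uniform reduction above), flipping it perturbs all $m$ streams, group privacy costs a factor $e^{O(mt\varepsilon)}$, and the bound degrades to the realizable $\Omega(\log d/(m\varepsilon))$ --- precisely the speed-up we are trying to rule out. To force $\Omega(\log d/\varepsilon)$ \emph{per client} I must make each client's optimal expert depend on that client's own losses, yet simultaneously keep a single comparator $x^\star$ that is good for all clients, since otherwise the term $\min_{x^\star}\sum_{i,t} l_{i,t}(x^\star)$ inflates and the regret can vanish or turn negative. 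These two demands pull in opposite directions, and reconciling them with a carefully structured (necessarily non-realizable) loss assignment --- so that collaboration provably cannot accelerate the private identification of the comparator --- is the crux of the argument and exactly the ``policy reduction'' difficulty the paper emphasizes.
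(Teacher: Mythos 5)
Your non-private argument is the paper's argument: the uniform adversary $l_{i,t}=l_t'$, the average policy $\bar x_t=\frac1m\sum_i x_{i,t}$ (the paper writes it as $p_t'(k)=\frac1m\sum_i p_{i,t}(k)$), the observation that the per-client regret equals the single-player regret of $(\bar x_t)$ and that $(\bar x_t)$ depends only on $l_1',\dots,l_{t-1}'$ and hence is a legitimate online algorithm, and then the classical $\Omega(\sqrt{T\log d})$ bound (the paper cites Cesa-Bianchi--Lugosi, Theorem~3.7, rather than re-deriving the Bernoulli ensemble). That half is correct and matches the paper.

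The private half is where your proposal stops short of a proof, and you say so yourself. Your packing construction needs the $d$ candidate instances to differ only in one client's stream while still charging every client $\Omega(1)$ per round, and you correctly observe that the two natural resolutions each fail: per-client perturbations only survive the averaging over clients as $\frac1m\Omega(\log d/\varepsilon)$, while shared perturbations cost $e^{O(mt\varepsilon)}$ in group privacy and degrade to $\Omega(\log d/(m\varepsilon))$. The paper does not attempt a packing argument here at all. It keeps the \emph{same} uniform adversary and average-policy reduction from the first half and then invokes a single-player DP-OPE lower bound (its \Cref{lemma:obli DP single player}), which is itself proved by an online-to-batch reduction to the $1$-Select$_d$ problem and the sample-complexity bound $n=\Omega(\log d/(\varepsilon\gamma))$ of Jain et al. So the ingredient missing from your write-up is precisely that single-player DP lower bound together with the claim that the average policy inherits the $(\varepsilon,\delta)$ guarantee. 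I will add that the tension you flag is real and the paper's handling of it is terse: under the paper's neighboring-dataset definition (two federated streams differing in a single $l_{i,t}$), changing one coordinate of the uniform single-player sequence changes $m$ entries of the federated dataset, so a priori the induced average policy is only $(m\varepsilon,\cdot)$-DP as a single-player algorithm; the paper applies its single-player lemma with parameter $\varepsilon$ without discussing this. Your ``main obstacle'' paragraph is therefore a fair diagnosis of the crux, but diagnosing it is not overcoming it, and as written your proposal does not establish the claimed $\Omega\left(\min\left(\frac{\log d}{\varepsilon},T\right)\right)$ bound.
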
 

\begin{remark}
    \Cref{lower bound: collaboration benefit} states that with oblivious adversaries, the per-client regret under any federated OPE is fundamentally independent of the number of clients $m$, indicating that, the collaborative effort among multiple clients does not yield a reduction in the regret lower bound. This is contrary to typical scenarios where collaboration can lead to shared insights and improve overall performance. The varied and unpredictable nature of oblivious adversaries nullifies the typical advantages of collaboration. \Cref{lower bound: collaboration benefit} also emphasizes the influence of the DP guarantees. 
    Our lower bounds represent the first non-trivial impossibility results for the federated OPE to the best of our knowledge.

\end{remark}

\begin{proof}[Proof sketch]

We examine the case when {\it all clients receive the same loss function from the oblivious adversary} at each time step, i.e. $l_{i,t}=l_t'$. Within this framework, we define the {\it ``average policy"} among all clients, i.e., $p_{t}'(k)=\frac{1}{m} \sum_{i=1}^m p_{i,t}(k), \forall{k\in[d]}$. This leads us to express the per-client regret as: $\text{Reg}(T,m)=\sum_{t=1}^{T}\sum_{k=1}^d p_{t}'(k)\cdot l_{t}'(k) - \sum_{t=1}^T l_{t}'(x^\star).$
Note that $p_{t}'(k)$ is defined by $p_{1,t}(k),\ldots,p_{m,t}(k)$, which in turn are determined by $l_{1,1}, \ldots, l_{m,t-1}$. According to our choice of $l_{i,t} = l_{t}'$, $p_{t}'(k)$ is determined by $l_1',l_2',\ldots,l_{t-1}'$. Therefore, $p_1',p_2',\ldots,p_t'$ are generated by a legitimate algorithm for online learning with expert advice problems. Through our {\bf policy reduction approach in FL}, we can reduce the federated problem to a single-player setting, showing the equivalence of per-client and single-player regret against the oblivious adversary we construct, {thus obtaining the regret lower bound.} We believe that this technique will be useful in future analysis of other FL algorithm. 
Incorporating DP constraints, we refer to \Cref{lemma:obli DP single player}, which transforms the DP online learning problem to a well-examined DP batch model. The full proof of \Cref{lower bound: collaboration benefit} can be found in \Cref{appendix: proof of lower bound collaboration benefit}.
\end{proof}

\section{Federated OPE with Oblivious Adversaries: Realizable Setting}\label{section: oblivious}


Given the impossibility results in the general oblivious adversaries setting, one natural question we aim to answer is, \textit{is there any special scenarios where we can still harness the power of federation even in presence of oblivious adversaries?} Towards this end, in this section, we focus on the (near) realizable setting, formally defined below.

\begin{definition}[Realizability]
    A federated OPE problem is {\it realizable} if there exists a feasible solution $x^{\star} \in [d]$ such that $\sum_{t=1}^T l_{i,t}(x^\star)= 0$, $\forall i\in [m]$. If the best expert achieves small loss $L^\star \ll T$, i.e., there exists $x^\star \in [d]$ such that $\sum_{t=1}^T l_{i,t}(x^\star)\leq L^\star$, $\forall i\in [m]$, the problem is near-realizable. 
\end{definition}

Intuitively, collaboration is notably advantageous in this context, as all clients share the same goal of reaching the zero-loss solution \( x^{\star} \). As more clients participate, the shared knowledge pool expands, making the identification of the optimal solution more efficient. In the following, we first provide regret lower bounds to quantify this benefit formally, and then show that a sparse vector based federated algorithm can almost achieve such lower bounds thus is nearly optimal.

\subsection{Lower Bound}
\begin{theorem}\label{lower bound:realizable}
    Let $\varepsilon\leq 1$ and $\delta \leq \varepsilon/d$. For any $(\varepsilon,\delta)$-DP federated OPE algorithm against oblivious adversaries in the realizable setting, the per-client regret is lower bounded by $\Omega\left(\frac{\log (d)}{m\varepsilon}\right) .$

\end{theorem}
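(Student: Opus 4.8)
The plan is to combine the \emph{policy reduction approach} used in the proof of \Cref{lower bound: collaboration benefit} with a group-privacy argument: I will show that the per-client regret of any federated algorithm $\mathcal{A}$ is lower bounded by the single-player regret of an induced algorithm whose privacy budget is inflated by a factor of $m$, and then invoke a single-player realizable DP lower bound with $\varepsilon$ replaced by $m\varepsilon$.

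First I would specialize the oblivious adversary so that all clients receive identical losses, $l_{i,t}=l_t'$ for every $i\in[m]$. Writing $p_t'(k)=\frac{1}{m}\sum_{i=1}^m p_{i,t}(k)$ for the average policy, the per-client regret collapses to
\[
\Eb[\text{Reg}(T,m)] = \sum_{t=1}^T \langle p_t', l_t'\rangle - \min_{x^\star\in[d]}\sum_{t=1}^T l_t'(x^\star),
\]
which is exactly the expected regret of the single-player online algorithm $\bar{\mathcal{A}}$ that plays $p_t'$; this is a valid non-anticipatory policy since $p_t'$ depends only on $l_1',\dots,l_{t-1}'$. Crucially, if $l_t'$ is realizable in the single-player sense (some $x^\star$ has zero cumulative loss), then broadcasting it keeps the federated instance realizable, so the reduction stays inside the realizable setting.

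Next I would track how privacy transfers across this reduction. Changing a single single-player loss $l_{t_0}'$ corresponds to changing $l_{i,t_0}$ for all $m$ clients simultaneously, i.e.\ a Hamming distance of $m$ in the federated loss sequence. Since $\bar{\mathcal{A}}$ is a post-processing of the $(\varepsilon,\delta)$-DP algorithm $\mathcal{A}$, group privacy makes $\bar{\mathcal{A}}$ an $(m\varepsilon,\delta_m)$-DP single-player algorithm with $\delta_m \le \frac{e^{m\varepsilon}-1}{e^{\varepsilon}-1}\,\delta$; the hypotheses $\varepsilon\le 1$ and $\delta\le\varepsilon/d$ are precisely what keep $(m\varepsilon,\delta_m)$ inside the regime where the single-player bound applies. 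I would then invoke the single-player realizable lower bound: via \Cref{lemma:obli DP single player} the realizable online problem reduces to a batch DP selection problem over $d$ candidates, for which a packing argument shows that any $(\varepsilon',\delta')$-DP algorithm incurs regret $\Omega(\log d/\varepsilon')$. Applying this with $\varepsilon'=m\varepsilon$ gives $\Eb[\text{Reg}(T,m)]=\Omega\!\left(\log d/(m\varepsilon)\right)$, as claimed.

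I expect the main obstacle to be twofold: establishing the single-player realizable selection lower bound $\Omega(\log d/\varepsilon')$ through \Cref{lemma:obli DP single player} and the batch packing, and---more delicately---controlling the approximate-DP parameter under group privacy. The $m$-fold composition inflates $\delta$ to $\delta_m$, so one must verify that $\delta\le\varepsilon/d$ together with $\varepsilon\le 1$ keeps $\delta_m$ small enough for the batch selection lower bound to remain valid (rather than letting the $e^{m\varepsilon}$ blow-up vacate the bound); confirming this is where the stated conditions on $\varepsilon$ and $\delta$ are really used.
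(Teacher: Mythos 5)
Your overall strategy---reduce to a single player via the average policy, then pay group privacy over the $m$ identical copies of each loss and invoke a single-player realizable bound at privacy level $m\varepsilon$---is genuinely different from the paper's proof, which never passes through a single-player reduction. The paper instead runs a packing argument directly on the federated algorithm: it builds $d$ loss sequences $\mathcal{S}^1,\dots,\mathcal{S}^d$ that are all-zero except in the last $k=\frac{\log d}{2m\varepsilon}$ rounds, observes that $\mathcal{S}^j$ and $\mathcal{S}^{j'}$ differ in exactly $mk$ entries, and applies group privacy \emph{once} with group size $mk$, so that the multiplicative factor is $e^{-mk\varepsilon}=d^{-1/2}$ and the additive term is $mk\delta\leq\frac{\log d}{2\varepsilon}\cdot\frac{\varepsilon}{d}=\frac{\log d}{2d}$, both controlled by the stated hypotheses. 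Disjointness of the ``low-regret'' output sets $\mathcal{B}_j$ plus Markov then yields the contradiction.

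The gap in your version is exactly at the point you flag as delicate, and it does not resolve the way you hope. Group privacy over a group of size $m$ turns an $(\varepsilon,\delta)$-DP algorithm into an $\bigl(m\varepsilon,\ \tfrac{e^{m\varepsilon}-1}{e^{\varepsilon}-1}\delta\bigr)$-DP one, so under $\delta\leq\varepsilon/d$ you only get $\delta_m\leq e^{m\varepsilon}/d$, which exceeds $1$ (a vacuous privacy guarantee) as soon as $m\varepsilon\gtrsim\log d$, and is already too large for the packing step whenever $m\varepsilon\gtrsim\tfrac{1}{2}\log d$. Moreover, the single-player ingredient you would invoke (\Cref{lemma:obli DP single player}, resting on \Cref{lemma:k-select pure DP}) is only stated for privacy parameter in $(0,1]$ and $\delta'=o(1/n)$, whereas you need it at $\varepsilon'=m\varepsilon$, which can be arbitrarily large; so the key quantitative step is not available off the shelf and would have to be re-proved. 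The repair is to collapse your two group-privacy applications into one: rather than first inflating to $(m\varepsilon,\delta_m)$ and then changing $k$ single-player rounds, change all $mk$ federated entries at once with $mk\varepsilon=\tfrac{\log d}{2}$, so the additive term telescopes to $\tfrac{e^{mk\varepsilon}-1}{e^{\varepsilon}-1}\delta\leq\tfrac{\sqrt{d}}{d}$ and stays harmless. At that point, however, you are no longer invoking a single-player black box---you are carrying out the paper's federated packing argument directly, which is the cleaner route.
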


\begin{remark}


In the single-player setting, the regret bound is $\Omega\left(\frac{\log (d)}{\varepsilon}\right)$ \citep{asi2023near}. In the federated setting, our results imply that the per-client regret is reduced to $\frac{1}{m}$ times the single-player regret. This indicates a possible $m$-fold speed-up in the federated setting. 
\end{remark}

\begin{proof}[Proof sketch]

To begin, we consider a specific oblivious adversary. We introduce two prototype loss functions: $l^0(x)=0$ for all $x\in [d]$ and for $j \in[d]$ $l^j(x)=0$ if $x=j$ and otherwise $l^j(x)=1$. An oblivious adversary picks one of the $d$ sequences $\mathcal{S}^1,\ldots,\mathcal{S}^d$ uniformly at random, where $\mathcal{S}^j = (l_{1,1}^j,\ldots,l_{m,T}^j)$ such that $l_{i,t}^j = l^0$ if $t = 1,\ldots, T-k$, otherwise, $l_{i,t}^j = l^j$, and $k=\frac{\log d}{2m \varepsilon}$. Assume there exists an algorithm such that the per-client regret is upper bounded by $\frac{\log (d)}{32 m\varepsilon}$. This would imply that for at least $d / 2$ of $\mathcal{S}^1,\ldots,\mathcal{S}^d$, the per-client regret is upper bounded by $\frac{\log (d)}{16 m\varepsilon}$. Assume without loss of generality these sequences are $\mathcal{S}^1, \ldots, \mathcal{S}^{d / 2}$. We let $\mathcal{B}_j$ be the set of expert selections that has low regret on $\mathcal{S}^j$. We can show that $\mathcal{B}_j$ and $\mathcal{B}_{j^\prime}$ are disjoint for $j\neq j^\prime$, implying {\it choosing any expert $j$ leads to low regret for loss sequence $\mathcal{S}^j$ but high regret for $\mathcal{S}^{j^\prime}$}. By group privacy, we have
$\mathbb{P}\left(\mathcal{A}\left(\mathcal{S}^j\right) \in \mathcal{B}_{j^{\prime}}\right)  \geq e^{-mk \varepsilon} \mathbb{P}\left(\mathcal{A}\left(\mathcal{S}^{j^{\prime}}\right) \in \mathcal{B}_{j^{\prime}}\right)-mk\delta,$
leading to a contradiction when $d\geq 32$. The full proof of \Cref{lower bound:realizable} can be found in \Cref{appendix: proof lower bound realizable}.
\end{proof}

\subsection{Algorithm Design} 


We develop a new algorithm Fed-SVT.
Our algorithm operates as follows:

{\bf Periodic communication. }
We adopt a fixed communication schedule in our federated setting, splitting the time horizon $T$ into $T/N$ phases, each with length $N$. In Fed-SVT, every client selects the same expert, i.e., $x_{i,t} = x_t$ at each time step $t$.  Initially, each client starts with a randomly chosen expert $x_1$. At the beginning of each phase $n$, each client sends the accumulated loss of the last phase $\sum_{t'=(n-1)N}^{nN-1}l_{i,t'}(x)$ to the central server. 

{\bf Global expert selection. }The server, upon receiving $\sum_{t'=(n-1)N}^{nN-1}l_{i,t'}(x)$ from all clients, decides whether to continue with the current expert or switch to a new one. This decision is grounded in the Sparse-Vector algorithm \citep{asi2023near}, where the accumulated loss from all clients over a phase is treated as a single loss instance in the Sparse-Vector algorithm. 
Based on the server's expert decision, clients update their experts accordingly. The full algorithm is provided in \Cref{appendix: alg Fed-SVT}.

\vspace{-0.03in}
\subsection{Theoretical Guarantees}
\vspace{-0.03in}



\begin{theorem}\label{theorem: oblivious}
    Let $l_{i,t} \in [0,1]^d$ be chosen by an oblivious adversary under near-realizability assumption. 
Fed-SVT is $\varepsilon$-DP, the communication cost scales in $O\left( mdT/N\right)$, and with probability at least $1-O(\rho)$, the pre-client regret is upper bounded by $
O\left(\frac{\log^2 (d)+\log\left(\frac{T^2}{N^2\rho}\right)\log\left(\frac{d}{\rho}\right)}{m\varepsilon} +(N+L^\star)\log\left(\frac{d}{\rho}\right) \right).
$ 
Moreover, 
Fed-SVT is $(\varepsilon,\delta)$-DP, the communication cost scales in $O\left( mdT/N\right)$, and with probability at least $1-O(\rho)$, the pre-client regret is upper bounded by $O\left(\frac{\log^{\frac{3}{2}} (d) \sqrt{\log (\frac{1}{\delta})}+\log\left(\frac{T^2}{N^2\rho}\right)\log\left(\frac{d}{\rho}\right)}{m\varepsilon} +(N+L^\star)\log\left(\frac{d}{\rho}\right) \right)$.

\end{theorem}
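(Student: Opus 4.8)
The plan is to prove the three claims—the $\varepsilon$-DP (and $(\varepsilon,\delta)$-DP) guarantee, the communication cost, and the high-probability regret bound—separately, treating the regret analysis as the technical core. The unifying principle is a reduction to the single-player Sparse-Vector analysis of \citet{asi2023near}, where the entire federated advantage stems from a single observation: when the server aggregates the per-phase accumulated losses across all $m$ clients, changing one loss function $l_{i,t}$ in a neighboring dataset shifts the per-client average $\frac{1}{m}\sum_{i=1}^m \sum_{t'=(n-1)N}^{nN-1} l_{i,t'}(x)$ by at most $1/m$. Hence the sensitivity of each query presented to the server is $1/m$ rather than the single-player value of $1$.

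For the privacy guarantee I would invoke the Sparse-Vector privacy accounting directly. Since SVT expends privacy budget only on the (few) above-threshold firings, and each such firing now has sensitivity $1/m$, the injected Laplace noise can be scaled down by a factor of $m$ while still certifying $\varepsilon$-DP; for the approximate-DP claim I would substitute the Gaussian-noise variant of SVT and apply its corresponding accounting, again exploiting the reduced sensitivity. The communication cost is then immediate: each of the $m$ clients transmits one $d$-dimensional accumulated-loss vector per phase over the $T/N$ phases, yielding $O(mdT/N)$ scalars.

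For the regret I would first condition on a high-probability event (holding with probability $1-O(\rho)$), obtained by a union bound over all $T/N$ phases and $d$ candidate experts, on which every noise term entering the SVT comparisons is simultaneously bounded in magnitude by $O\!\left(\frac{1}{m\varepsilon}\log\!\left(\frac{T^2}{N^2\rho}\right)\right)$. On this event the argument proceeds in three parts: (i) the realizable expert, whose per-client loss is at most $L^\star$, is never discarded, because a switch fires only after the committed expert's aggregated loss provably exceeds the threshold by more than the noise margin; (ii) the total number of expert switches is $O(\log(d/\rho))$, which I would inherit from the sparsity parameter governing the number of above-threshold firings in the single-player algorithm; and (iii) between consecutive switches the committed expert is suboptimal by at most the threshold (of order $L^\star$) plus the one-phase commitment penalty $N$, so summing over the $O(\log(d/\rho))$ switches yields the additive $(N+L^\star)\log(d/\rho)$ term. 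The privacy-dependent term $\frac{\log^2 d + \log(T^2/(N^2\rho))\log(d/\rho)}{m\varepsilon}$ then emerges by accumulating, across the $O(\log d)$ firings, the $1/m$-scaled noise margins—each of order $\frac{\log d}{m\varepsilon}$ from the union bound over $d$ candidates and $\frac{1}{m\varepsilon}\log(T^2/(N^2\rho))$ from the union bound over phases.

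The step I expect to be the main obstacle is parts (ii)–(iii): correctly propagating the $1/m$ sensitivity reduction through the threshold-crossing argument while simultaneously capping the number of switches at $O(\log(d/\rho))$, since an error here would either destroy the $m$-fold speed-up or break the near-optimality that matches the lower bound of \Cref{lower bound:realizable}. A secondary subtlety is ensuring that the phase length $N$ enters only additively, as the per-switch commitment cost, and does not inflate the privacy term; this requires treating the per-phase aggregation as a \emph{single} SVT query rather than as $N$ separate ones, so that the $1/N$ granularity trades off against communication without corrupting the $1/(m\varepsilon)$ scaling of the privacy cost.
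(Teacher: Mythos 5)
Your skeleton is right in outline---aggregate the per-phase losses across clients so that the noise margins of the sparse-vector comparisons become an $O(1/(m\varepsilon))$ perturbation of the \emph{per-client} quantities, bound the number of switches by $O(\log(d/\rho))$, and charge each switch a cost of $N+L^\star$ plus noise margins---and your sensitivity-$1/m$ view of the averaged query is equivalent to the paper's choice of keeping the query as a sum over clients with sensitivity $1$ and noise scale $1/\varepsilon$. But there is a genuine gap at your steps (i)--(ii): Fed-SVT does not work by ``never discarding'' the realizable expert. When the sparse-vector test fires, the server samples a \emph{new} expert via the exponential mechanism with scores $s_t(x)=\max\bigl(\sum_{i,t'}l_{i,t'}(x),\,mL^\star\bigr)$, and the $O(\log(d/\rho))$ bound on the number of switches is \emph{not} a consequence of the sparsity budget $\kappa$ alone---if the budget were exhausted early the algorithm would stop switching and could incur linear regret. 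The bound is proved via a potential function $\phi_n=\sum_{x}e^{-\eta L_n(x)/2}$: each above-threshold firing forces the sampled expert to have accumulated at least $4/\eta$ additional loss before the next firing, so with probability at least $1/3$ each switch halves the potential, the potential can only halve $\lceil\log d\rceil$ times before hitting its floor $e^{-\eta mL^\star/2}$, and a geometric-tail concentration bound turns this into the high-probability switch count. Without the exponential mechanism and this potential argument, nothing in your plan prevents $\Omega(d)$ switches.

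This omission also explains why your account of the two DP regimes is off. The $\log^2(d)/(m\varepsilon)$ term is not the accumulation of union-bound noise margins over $d$ candidates; it is (number of switches) $\times$ ($4/\eta$), where the exponential mechanism's per-application budget is $\eta=\varepsilon/(2\kappa)$ under basic composition over the $\kappa=O(\log(d/\rho))$ applications. The approximate-DP improvement to $\log^{3/2}(d)\sqrt{\log(1/\delta)}$ comes from replacing basic composition of the exponential-mechanism applications with advanced composition ($\eta=\varepsilon/\sqrt{\kappa\log(1/\delta)}$), not from a Gaussian-noise variant of SVT---the paper keeps Laplace noise in the sparse-vector component for both regimes. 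To repair the proposal you would need to (a) state the exponential-mechanism selection rule, (b) prove the switch-count lemma via the potential argument, and (c) rederive the threshold $L=mL^\star+\frac{8\log(2T^2/(N^2\rho))}{\varepsilon}+4/\eta$ so that the per-switch cost decomposes into $mL^\star$, the SVT noise margin, the exponential-mechanism utility term $4/\eta$, and the one-phase commitment penalty $mN$, before dividing by $m$.
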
 

\begin{remark}
Note that when $N + L^\star=O\left( \frac{\log T}{m\varepsilon}\right)$, then, under Fed-SVT, the per-client regret upper bound scales in $O\left(\frac{\log^2 d + \log T\log d}{m\varepsilon} \right)$ for $\varepsilon$-DP and $O\left(\frac{\log T\log d+\log^{3/2}d\sqrt{\log (1/\delta)}}{m\varepsilon} \right)$ for $(\varepsilon, \delta)$-DP.     Compared to the best upper bound $O\left(\frac{\log^2 d + \log T\log d}{\varepsilon} \right)$ and $O\left(\frac{\log T\log d+\log^{3/2}d\sqrt{\log (1/\delta)}}{\varepsilon} \right)$ for the single-player scenario \citep{asi2023near}, our results indicate an $m$-fold regret speed-up. Note that our lower bound (\Cref{lower bound:realizable}) scales in $\Omega\left(\frac{\log d}{m\varepsilon}\right)$. Our upper bound matches with the lower bound in terms of $m$ and $\epsilon$, and is nearly optimal up to logarithmic factors.

\end{remark}



The proof of \cref{theorem: oblivious} is presented in \Cref{appendix: proof of oblivious}.


\section{Numerical Experiments}


\begin{wrapfigure}{r}{0.25\textwidth} \vspace{-0.2in}
  \centering
  \includegraphics[width=0.25\textwidth]{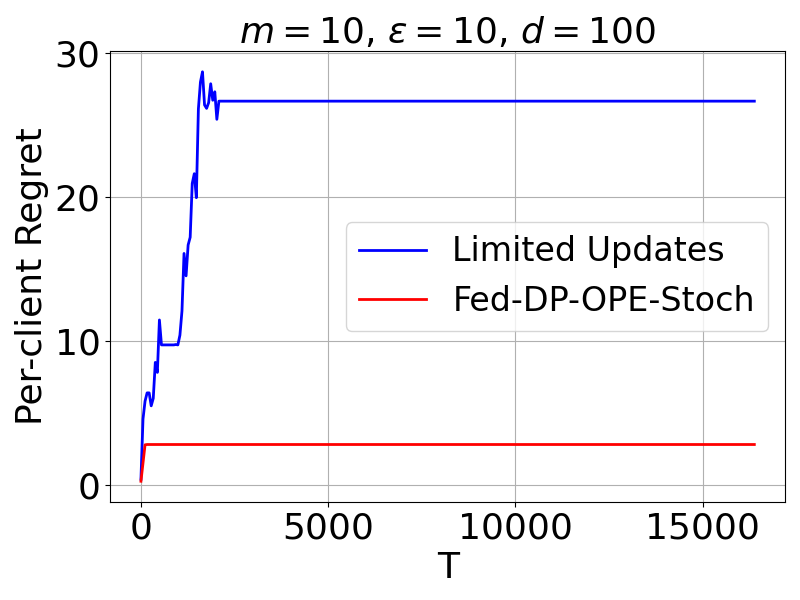}\vspace{-0.1in}
  \caption{\small Per-client regret.}
    \label{fig:stoch}
    \vspace{-0.1in}
\end{wrapfigure}

\textbf{Fed-DP-OPE-Stoch.}
We conduct experiments in a synthetic environment to validate the theoretical performances of Fed-DP-OPE-Stoch, and compare them with its single-player counterpart Limited Updates \citep{asi2022private}.

We first generate true class distributions for each client $i\in [m]$ and timestep $t\in [T]$, which are sampled IID from Gaussian distributions with means and variances sampled uniformly. Following Gaussian sampling, we apply a softmax transformation and normalization to ensure the outputs are valid probability distributions. Then we generate a sequence of IID loss functions $l_{1,1},\ldots,l_{m,T}$ for $m$ clients over $T$ timesteps using cross-entropy between true class distributions and predictions. We set $T_1 =1$ in Fed-DP-OPE-Stoch, therefore the communication cost scales in $O(md\log T)$. We set $m=10$, $T = 2^{14}$, $\varepsilon = 10$, $\delta = 0$ and $d=100$. The per-client cumulative regret as a function of $T$ is plotted in \Cref{fig:stoch}. We see that Fed-DP-OPE-Stoch outperforms Limited Updates significantly, indicating the regret speed-up due to collaboration. Results with different seeds are provided in \Cref{appx:exp}.

\begin{wrapfigure}{r}{0.25\textwidth} \vspace{-0.2in}
  \centering
  \includegraphics[width=0.25\textwidth]{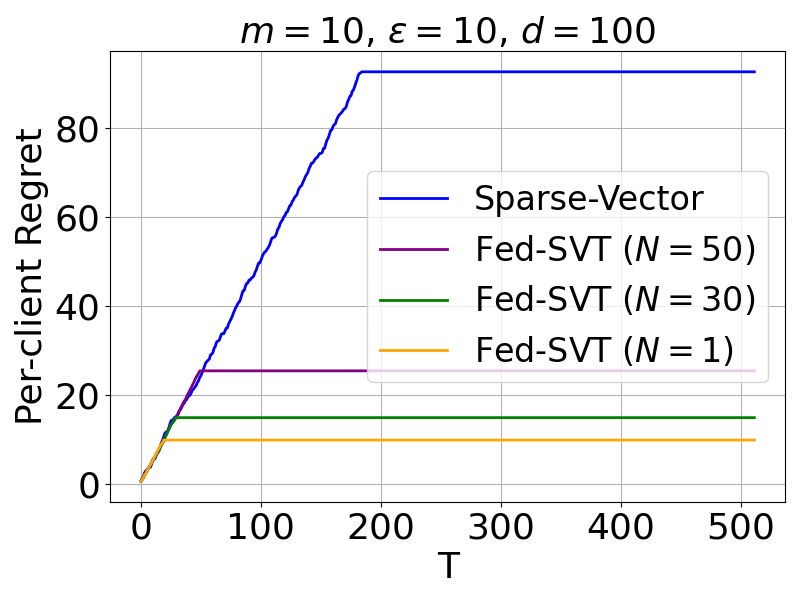}\vspace{-0.1in}
  \caption{\small Per-client regret.}
    \label{fig:fed_svt} \vspace{-0.1in}
\end{wrapfigure}

\textbf{Fed-SVT.}
We conduct experiments in a synthetic environment, comparing Fed-SVT with the single-player model Sparse-Vector \citep{asi2023near}.

We generate random losses for each expert at every timestep and for each client, ensuring that one expert always has zero loss to simulate an optimal choice. We set $m=10$, $T = 2^{9}$, $\varepsilon = 10$, $\delta = 0$ and $d=100$. In Fed-SVT, we experiment with communication intervals $N =1, 30, 50$, where communication cost scales in $O(mdT/N)$. The per-client cumulative regret as a function of $T$ is plotted in \Cref{fig:fed_svt}. Our results show that Fed-SVT significantly outperforms the Sparse-Vector, highlighting the benefits of collaborative expert selection in regret speed-up, even at lower communication costs (notably in the $N=50$ case). Results with different seeds are provided in \Cref{appx:exp}. Additionally, we evaluate the performances of Fed-SVT on the MovieLens-1M dataset \citep{harper2015movielens} in \Cref{appx:movielens}.


\vspace{-0.05in}
\section{Conclusions}
\vspace{-0.05in}
In this paper, we have advanced the state-of-the-art of differentially private federated online prediction from experts, addressing both stochastic and oblivious adversaries. Our Fed-DP-OPE-Stoch algorithm showcases a significant \(\sqrt{m}\)-fold regret speed-up compared to single-player models with stochastic adversaries, while effectively maintaining logarithmic communication costs. For oblivious adversaries, we established non-trivial lower bounds, highlighting the limited benefits of client collaboration. Additionally, our Fed-SVT algorithm demonstrates an \( m \)-fold speed-up, indicating near-optimal performance in settings with low-loss experts. One limitation of this work is the lack of experiments on real-world federated learning scenarios such as recommender systems and healthcare. We leave this exploration to future work.



\bibliography{neurips_2024}
\bibliographystyle{apalike}
\newpage
\appendix

\section{Related Work}\label{appendix: related work}

{\bf Online learning with stochastic adversaries.} Online learning with stochastic adversaries has been extensively studied \citep{rakhlin2011online,duchi2011adaptive,hu2009accelerated,li2020stochastic,caron2012leveraging,buccapatnam2014stochastic,tossou2017thompson,wu2015online}. This problem is also closely related to stochastic convex optimization \citep{shalev2009stochastic,mahdavi2013stochastic,duchi2015asynchronous,agarwal2011stochastic,duchi2016local} since online learning with stochastic adversaries can be transformed into the stochastic convex optimization problem using online-to-batch conversion. In the federated setting, 
\citet{mitra2021online} proposed a federated online mirror descent method. \citet{patel2023federated} presented a federated projected online stochastic gradient descent algorithm.
Research on online learning with stochastic adversaries incorporating DP has also seen significant advancements \citep{bassily2014private,bassily2019private,feldman2020private,asi2021privateada,asi2021private,asi2022private}. 

\textbf{Online learning with non-stochastic adversaries. }Online learning with non-stochastic adversaries has been extensively studied \citep{cesa2006prediction,littlestone1994weighted,freund1997decision,zinkevich2003online}. Federated online learning with non-stochastic adversaries, a more recent development, was introduced by \citet{hong2021communication}. \citet{mitra2021online} developed a federated online mirror descent method. \citet{park2022ofedqit} presented a federated online gradient descent algorithm. \citet{kwon2023tighter} studied data heterogeneity. Furthermore, \citet{li2018differentially} and \citet{patel2023federated} explored distributed online and bandit convex optimization. \citet{gauthier2023asynchronous} focused on the asynchronous settings. Research into online learning with limited switching has also been explored by \citet{kalai2005efficient,geulen2010regret,altschuler2018online,chen2020minimax,sherman2021lazy,he2022simple,li2022federated,cheng2023understanding}. 

Differentially private online learning with non-stochastic adversaries was pioneered by \citet{dwork2010differential}. Several studies have explored OPE problem with DP constraints \citep{jain2012differentially,guha2013nearly,jain2014near,agarwal2017price,asi2022private}. Specifically, \citet{jain2014near} and \citet{agarwal2017price} focused on methods based on follow the regularized leader. \citet{asi2022private} proposed an algorithm using the shrinking dartboard method.

Studies addressing differentially private online learning with non-stochastic adversaries in other contexts have also made significant strides. \citet{kaplan2023differentially} focused on online classification problem with joint DP constraints. \citet{fichtenberger2023constant} introduced a constant improvement. \citet{gonen2019private} and \citet{kaplan2023black} studied the relationship between private learning and online learning. \citet{agarwal2023differentially} and \citet{agarwal2023improved} studied the DP online convex optimization problem. 
Research on differentially private federated online learning with non-stochastic adversaries is very limited to the best of our knowledge.

{\bf Online learning under realizability assumption. }Several works have studied online learning in the realizable setting. \citet{srebro2010smoothness} studied mirror descent for online optimization. \citet{shalev2012online} studied the weighted majority algorithm. There is also a line of work studying stochastic convex optimization in the realizable setting \citep{cotter2011better,ma2018power,vaswani2019fast,liu2018accelerating,woodworth2021even,asi2022privateopt}. Additionally, some researchers have recently focused on the variance or the path length of the best expert \citep{cesa2007improved,steinhardt2014adaptivity,wei2018more,bubeck2019improved}. When DP constraint is considered, \citet{asi2023near} introduced algorithms using the sparse vector technique. \citet{golowich2021littlestone} studied DP online classification.

{\bf Multi-armed bandits with DP.} Research on DP multi-armed bandits has developed along various lines \citep{mishra2015nearly,tossou2016algorithms,sajed2019optimal,azize2022privacy}. Further, \citet{hu2022near} proposed a Thompson-sampling-based approach, \citet{tao2022optimal} explored the heavy-tailed rewards scenario, and \citet{chowdhury2022distributed} achieved optimal regret in a distributed setting. Additionally, \citet{ren2020multi} and \citet{zheng2020locally} investigated local DP constraints. For linear contextual bandits with DP, significant studies were conducted by \citet{shariff2018differentially,wang2020global,wang2022dynamic} and \citet{hanna2022differentially}. Furthermore, \citet{hanna2022differentially,chowdhury2022shuffle,garcelon2022privacy} and \citet{tenenbaum2021differentially} focused on the shuffle model. \citet{li2023private} studied kernelized bandits with distributed biased feedback.
\citet{wu2023private} and \citet{charisopoulos2023robust} tackled privacy and robustness simultaneously.

{\bf Federated multi-armed bandits.} Federated bandits have been studied extensively recently. \citet{shi2021federated} and \citet{shi2021federatedpersonal} investigated federated stochastic multi-armed bandits without and with personalization, respectively. \citet{wang2019distributed} considered the distributed setting. \citet{huang2021federated}, \citet{wang2022federated} and \citet{li2022asynchronous} studied federated linear contextual bandits. \citet{li2022federated} focused on federated $\mathcal{X}$-armed bandits problem. Additionally, \citet{cesa2016delay}, \citet{bar2019individual}, \citet{yi2022regret} and \citet{yi2023doubly} have studied cooperative multi-armed bandits problem with data exchange among neighbors. \citet{m2022personalized}
have explored online model selection where each client learns a kernel-based model, utilizing the specific characteristics of kernel functions.

When data privacy is explicitly considered, \citet{li2020federated} and \citet{zhu2021federated} studied federated bandits with DP guarantee. \citet{dubey2022private} investigated private and byzantine-proof cooperative decision-making in the bandits setting. \citet{dubey2020differentially} and \citet{zhou2023differentially}  considered the linear contextual bandit model with joint DP guarantee. \citet{li2022differentially} studied private distributed linear bandits with partial feedback. \citet{huang2023federated} studied federated linear contextual bandits with user-level DP guarantee.

\section{Applications of Differentially Private Federated OPE}\label{appx:application}

Differentially private federated online prediction has many important real-world applications. We provide three examples below.

{\bf Personalized Healthcare:} Consider a federated online prediction setting where patients' wearable devices collect and process health data locally, and the central server aggregates privacy-preserving updates from devices to provide health recommendations or alerts. DP federated online prediction can speed up the learning process and improve prediction accuracy without exposing individual users' health data, thus ensuring patient privacy.

{\bf Financial Fraud Detection:} DP federated online prediction can also enhance fraud detection systems across banking and financial services. Each client device (e.g. PC) locally analyzes transaction patterns and flags potential fraud without revealing sensitive transaction details to the central server. The server's role is to collect privacy-preserving updates from these clients to improve the global fraud detection model. This method ensures that the financial company can dynamically adapt to new fraudulent tactics, improving detection rates while safeguarding customers' financial privacy.

{\bf Personalized Recommender Systems:} Each client (e.g. smartphone) can personalize content recommendations by analyzing user interactions and preferences locally. The central server (e.g. company) aggregates privacy-preserving updates from all clients to refine the recommendation model. Thus, DP federated online prediction improves the whole recommender system performance while maintaining each client's privacy.

\section{Algorithms and Proofs for Fed-DP-OPE-Stoch}

\allowdisplaybreaks

\subsection{DP-FW}\label{appendix: frank-wolfe}

In Fed-DP-OPE-Stoch, we run a DP-FW subroutine \citep{asi2021private} at each client $i$ in each phase $p$. DP-FW maintains $T_1$ binary trees indexed by $1 \leq j \leq T_1$, each with depth $j$, where $T_1$ is a predetermined parameter. An example of the tree structure is shown below. We introduce the notation $s\in\{0,1\} ^ {\leq j}$ to denote vertices within binary tree $j$. $\emptyset$ signifies the tree's root. For any $s,s'\in \{0,1\} ^ {\leq j}$, if $s=s'0$, then $s$ denotes the left child of $s'$. Conversely, when $s=s'1$, $s$ is attributed as the right child of $s'$. For each client $i$, each vertex $s$ in the binary tree $j$ corresponds to a parameter $x_{i,j,s}$ and a gradient estimate $v_{i,j,s}$. Each client iteratively updates parameters and gradient estimates by visiting the vertices of a binary tree according to the Depth-First Search (DFS) order: as it visits a left child vertex $s$, the algorithm maintains the parameter \( x_{i,j,s} \) and the gradient estimate \( v_{i,j,s} \) identical to those of its parent vertex \( s' \), i.e. $v_{i,j,s}=v_{i,j,s'}$ and $x_{i,j,s}=x_{i,j,s'}$. As it proceeds to a right child vertex, we uniformly select $2^{-\lvert s\rvert}b$ loss functions from set $\mathcal{B}_{i,p}=\{l_{i,2^{p-2}},\ldots,l_{i,2^{p-1}-1}\}$ to subset $\mathcal{B}_{i,j,s}$ without replacement, where $b$ denoting the predetermined batch size and $\lvert s\rvert$ representing the depth of the vertex $s$. Then the algorithm improves the gradient estimate $v_{i,j,s}$ at the current vertex $s$ of tree $j$ using the estimate $v_{i,j,s'}$ at the parent vertex $s'$, i.e. 
\begin{equation}
    v_{i,j,s}=v_{i,j,s'}+\nabla l(x_{i,j,s};\mathcal{B}_{i,j,s})-\nabla l(x_{i,j,s'};\mathcal{B}_{i,j,s}).\label{equ:v}
\end{equation} 
where $\nabla l(\cdot; \mathcal{B}_{i,j,s}) = \frac{1}{|\mathcal{B}_{i,j,s}|} \sum_{l \in \mathcal{B}_{i,j,s}} \nabla l(\cdot)$, and $\mathcal{B}_{i,j,s}$ is the subset of loss functions at vertex $s$ in the binary tree $j$ for client $i$. The full algorithm is shown in \Cref{alg:frank-wolfe}.

\begin{figure}[h]

\centering
\label{fig:tree}
\begin{tikzpicture}[
    level distance=1cm,
    level 1/.style={sibling distance=3cm},
    level 2/.style={sibling distance=1.5cm}]
    
\node[draw, circle]{$\emptyset$}
    child {node[draw, circle] {$0$}
        child {node[draw, circle] {$00$}}
        child {node[draw, circle] {$01$}}
    }
    child {node[draw, circle] {$1$}
        child {node[draw, circle] {$10$}}
        child {node[draw, circle] {$11$}}
    };
\end{tikzpicture}
\caption{Binary tree with depth $j = 2$ in \Cref{alg:frank-wolfe}.}
\end{figure}
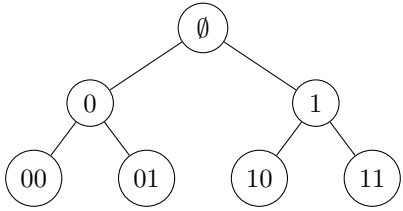

\begin{algorithm}[t] 
\caption{DP-FW at client $i$ \citep{asi2021private}}
\label{alg:frank-wolfe}
\begin{algorithmic}[1]

\STATE   {\bf Input:} {Sample set $\mathcal{B}$, number of trees $T_1$, batch size $b$.}

\FOR{$j=1$ to $T_1$} 
               
                \STATE Set $x_{i,j,\emptyset}=x_{i,j-1,L_{j-1}}$
                    
                \STATE Uniformly select $b$ samples to $\mathcal{B}_{i,j,\emptyset}$ 

                \STATE $v_{i,j,\emptyset}=\nabla l_{i,t}(x_{i,j,\emptyset};\mathcal{B}_{i,j,\emptyset})$

                \FOR{$s\in \text{DFS}[j]$}

                    \STATE Let $s=s'a$ where $a\in \{0,1\}$

                    \IF{$a==0$}

                        \STATE $v_{i,j,s}=v_{i,j,s'}$; $x_{i,j,s}=x_{i,j,s'}$

                    \ELSE

                        \STATE Uniformly select $2^{-\lvert s\rvert}b$ samples to $\mathcal{B}_{i,j,s}$

                        \STATE Update $v_{i,j,s}$ according to \Cref{equ:v} 

                    \ENDIF

                \ENDFOR

            \ENDFOR

\STATE \textbf{Return} $\{v_{i,j,s} \}_{j\in [T_1], s\in \{0,1\} ^ {\leq j}}$

\end{algorithmic}
\end{algorithm}

\subsection{Proof of \Cref{theorem: stochastic} and \Cref{corollary:stoch_smallbeta} (for pure DP)}\label{appendix: proof of theorem stochastic}

\begin{theorem}[Restatement of \Cref{theorem: stochastic}]\label{theorem: stochastic formal}
Assume that loss function $l_{i,t}(\cdot)$ is convex, $\alpha$-Lipschitz, $\beta$-smooth w.r.t. $\Vert \cdot\Vert_1$. 
Setting $\lambda_{i,j,s}= \frac{4\alpha2^j}{b\varepsilon}$, $b=\frac{2^{p-1}}{(p-1)^2}$ and $T_1=\frac{1}{2}\log \left(\frac{b\varepsilon \beta \sqrt{m}}{\alpha \log d} \right)$, 
Fed-DP-OPE-Stoch (i) satisfies $\varepsilon$-DP and (ii) achieves the per-client regret of
$$O\Bigg((\alpha+\beta )\log T\sqrt{\frac{T\log d}{m}}+\frac{\sqrt{\alpha\beta}  \log d \sqrt{T}\log T}{m^{\frac{1}{4}}\sqrt{\varepsilon }}\Bigg)$$
with (iii) a communication cost of $O\left(m^{\frac{5}{4}}d\sqrt{\frac{T\varepsilon\beta }{\alpha \log d}}\right)$.

\end{theorem}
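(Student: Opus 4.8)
The plan is to prove the three claims—$\varepsilon$-DP, the regret bound, and the communication cost—separately, exploiting the phase structure of the algorithm and the recursive tree structure of the DP-FW subroutine. For privacy, I would fix neighboring datasets $\mathcal{S},\mathcal{S}'$ differing in a single loss $l_{i_1,t_1}$ and first observe that this loss lies in exactly one batch $\mathcal{B}_{i_1,p_1}$, so it can influence the output only through the DP-FW run of phase $p_1$ on client $i_1$. Since samples are drawn \emph{without replacement} across that whole run, the loss enters exactly one gradient estimate $v_{i_1,j,s}$, at the vertex $s$ where it is sampled, and is then propagated unchanged to the $2^{j-|s|}$ descendant leaves; as each leaf triggers a single communication of $\{\langle c_n, v_{i_1,j,s}\rangle + \xi_{i_1,n}\}_{n}$, the loss affects at most $2^{j-|s|}$ released vectors. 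The next step is a sensitivity computation: the batch at depth $|s|$ has size $2^{-|s|}b$ and the losses are $\alpha$-Lipschitz, so each inner product has $\ell_\infty$-sensitivity $O(\alpha 2^{|s|}/b)$; the choice $\lambda_{i,j,s}=4\alpha 2^{j}/(b\varepsilon)$ then makes each affected leaf release $\varepsilon/2^{j-|s|}$-DP, so basic composition over the $2^{j-|s|}$ leaves gives total budget $\varepsilon$, and post-processing (the server argmin in \Cref{equ:w} and all downstream selections) preserves it.

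For the regret, I would start from the $\beta$-smoothness of the population loss $L_t$ together with the Frank–Wolfe update $x_{i,p,k+1}=(1-\eta_{i,p,k})x_{i,p,k}+\eta_{i,p,k}\bar w_{p,k}$ to derive the per-step inequality in the sketch, splitting the progress into a convexity term $\eta_{i,p,k}[L_t(x^\star)-L_t(x_{i,p,k})]$, a gradient-estimation error $\|\nabla L_t(x_{i,p,k})-\bar v_{p,k}\|_\infty$, a noisy-argmin error $\langle\bar v_{p,k},\bar w_{p,k}\rangle-\min_w\langle\bar v_{p,k},w\rangle$, and a quadratic term $\tfrac12\beta\eta_{i,p,k}^2$. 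I would then invoke the two supporting lemmas. The first controls the estimation error by showing, via induction on tree depth, that every coordinate of $\nabla L_t-\bar v_{p,k}$ is $O((\alpha^2+\beta^2)/(bm))$-sub-Gaussian—the $1/m$ coming from averaging the $m$ independent client estimates—so that $\mathbb{E}\|\nabla L_t-\bar v_{p,k}\|_\infty=O((\alpha+\beta)\sqrt{\log d/(bm)})$. The second bounds the argmin error by $\tfrac{2}{m}\max_n|\sum_i\xi_{i,n}|$, whose expectation is $O(\lambda_{i,j,s}\log d/\sqrt{m})$ by the concentration of a sum of $m$ i.i.d.\ Laplaces.

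The next step is to telescope the per-step inequality over the $K=O(2^{T_1})$ leaf iterations using $\eta_{i,p,k}=2/(k+1)$. The standard weighted Frank–Wolfe recursion converts the convexity and smoothness terms into an optimization error $O(\beta/2^{T_1})$, while the weighted average of the per-iteration errors (where tree $j$ carries noise scale growing like $2^{j}$) contributes $O(\sqrt{\alpha\beta\log d/(b\varepsilon)}\cdot m^{-1/4})$; the choice $T_1=\tfrac12\log(b\varepsilon\beta\sqrt m/(\alpha\log d))$ is exactly the balance point of these two terms. This yields a per-step regret in phase $p$ of order $(\alpha+\beta)\sqrt{\log d/(2^{p}m)}+\sqrt{\alpha\beta}\log d/(m^{1/4}\sqrt{2^{p}\varepsilon})$, where the factor $(p-1)$ from $b=2^{p-1}/(p-1)^2$ is tracked. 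Multiplying by the phase length $2^{p-1}$ and summing the resulting geometric series over the $P=O(\log T)$ phases—whose dominant last phase carries the extra $(P-1)=O(\log T)$ factor—produces the claimed bound $O((\alpha+\beta)\log T\sqrt{T\log d/m}+\sqrt{\alpha\beta}\log d\sqrt T\log T/(m^{1/4}\sqrt\varepsilon))$. The communication cost then follows by counting $O(2^{T_1})$ leaves per phase, each costing $O(md)$ scalars, and summing the geometric series with $2^{T_1}=\sqrt{b\varepsilon\beta\sqrt m/(\alpha\log d)}$ and $b\le 2^{p-1}$.

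I expect the main obstacle to be the inductive sub-Gaussian bound on $\nabla L_t-\bar v_{p,k}$. One must track how the recursive estimate $v_{i,j,s}=v_{i,j,s'}+\nabla l(x_{i,j,s};\mathcal{B}_{i,j,s})-\nabla l(x_{i,j,s'};\mathcal{B}_{i,j,s})$ accumulates variance across tree depth, arguing that the geometrically shrinking batch sizes $2^{-|s|}b$ keep each increment's contribution controlled while $\beta$-smoothness bounds the gradient differences along the root-to-leaf path, and simultaneously that averaging over $m$ independent clients inserts the crucial $1/m$. Coupling this error profile with the Frank–Wolfe step sizes so that the per-iteration weights combine into the clean $m^{-1/4}$ scaling is the delicate part; by comparison, the privacy and communication arguments are largely mechanical once the tree bookkeeping is in place.
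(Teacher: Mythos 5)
Your proposal follows essentially the same route as the paper's proof: the privacy argument via tracking the single affected batch through its $2^{j-|s|}$ descendant leaves, the sensitivity computation justifying $\lambda_{i,j,s}=4\alpha 2^{j}/(b\varepsilon)$ with basic composition and post-processing, the Frank--Wolfe per-step decomposition into convexity, gradient-estimation, noisy-argmin, and smoothness terms, the two supporting lemmas (depth-induction sub-Gaussian bound with the $1/(bm)$ variance and the Laplace-sum concentration giving the $\lambda\log d/\sqrt m$ term), the telescoping with $\eta_k=2/(k+1)$, the balancing choice of $T_1$, and the leaf-counting communication bound all match the paper's argument. The proposal is correct and not a genuinely different approach.
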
 

 We define population loss as $L_t(x)=\Eb[\frac{1}{m}\sum_{i=1}^{m}l_{i,t}(x_{i,t})]$. The per-client regret can be expressed as $\Eb\left[\sum_{t=1}^{T}L_t(x_{i,t})-\underset{x^\star\in \mathcal{X}}{\min}\sum_{t=1}^{T}L_t(x^\star)\right]$. We use $v_{i,p,k}$, $w_{i,p,k}$, $x_{i,p,k}$ and $\eta_{i,p,k}$ to denote quantities corresponding to phase $p$, iteration $k$, and client $i$. Also we introduce some average quantities $\bar{v}_{p,k}=\frac{1}{m}\sum_{i=1}^mv_{i,p,k}$ and $\bar{w}_{p,k}=\frac{1}{m}\sum_{i=1}^mw_{i,p,k}$. 

To prove the theorem, we start with \Cref{lemma: pure DP stochastic} that gives pure privacy guarantees.

\begin{lemma}\label{lemma: pure DP stochastic}
Assume that $2^{T_1} \leq b$. Setting $\lambda_{i,j,s}= \frac{4\alpha2^j}{b\varepsilon}$, Fed-DP-OPE-Stoch is $\varepsilon$-DP.

\end{lemma}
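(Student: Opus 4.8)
The plan is to establish $\varepsilon$-DP for the output $\mathcal{A}(\mathcal{S})=(x_{1,1},\ldots,x_{m,T})$ by a sensitivity-plus-composition argument on the tree mechanism, after reducing to a single client and a single phase. Fix neighboring datasets $\mathcal{S},\mathcal{S}'$ differing in one loss function $l_{i_1,t_1}$. First I would localize the privacy cost: since the per-phase batches $\mathcal{B}_{i,p}$ partition the time axis, $l_{i_1,t_1}$ enters the computation in exactly one phase $p_1$, and only through client $i_1$'s DP-FW run. Everything produced in other phases, and by other clients in phase $p_1$, is independent of $l_{i_1,t_1}$; by parallel composition across phases it then suffices to bound the privacy loss of client $i_1$'s releases inside phase $p_1$, treating all remaining quantities (the server's $\arg\min$, the broadcasts $\bar{w}_{j,s}$, the local updates \eqref{equ:x}, and the final experts $x_{i,t}$) as post-processing.

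Second, I would trace the single changed loss through the DFS tree. Because the samples allocated to the vertices within the phase are drawn without replacement, $l_{i_1,t_1}$ lands in at most one batch $\mathcal{B}_{i_1,j,s}$, at one vertex $s$ of one tree $j$. Since $\nabla l$ is bounded in $\ell_\infty$ by $\alpha$ (as $l$ is $\alpha$-Lipschitz w.r.t.\ $\Vert\cdot\Vert_1$) and $|\mathcal{B}_{i_1,j,s}|=2^{-|s|}b$, swapping $l_{i_1,t_1}$ changes the batch-gradient difference in \eqref{equ:v}, and hence each coordinate of $v_{i_1,j,s}$, by at most $\tfrac{4\alpha}{|\mathcal{B}_{i_1,j,s}|}=\tfrac{4\alpha 2^{|s|}}{b}$. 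The update \eqref{equ:v} is additive along each root-to-leaf path, so this perturbation propagates unchanged to exactly the $2^{j-|s|}$ leaves descending from $s$ and to no other leaf; every affected leaf estimate thus has $\ell_\infty$-sensitivity at most $\tfrac{4\alpha 2^{|s|}}{b}$. Here the hypothesis $2^{T_1}\le b$ is used to guarantee $2^{-|s|}b\ge 1$ for all $|s|\le T_1$, so every batch is nonempty and this sensitivity bound is well defined.

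Third, I would quantify the per-leaf privacy and compose. At each affected leaf the server's broadcast is $\bar{w}_{j,s}=\arg\min_n \tfrac1m\sum_i(\langle c_n,v_{i,j,s}\rangle+\xi_{i,n})$ with $c_n=e_n$, i.e.\ a noisy $\arg\min$ over the $d$ coordinates. Conditioning on the (data-independent) noise of the other clients, this is a report-noisy-max mechanism whose per-coordinate noise is client $i_1$'s $\xi_{i_1,n}\sim\mathrm{Lap}(\lambda_{i_1,j,s})$ and whose per-coordinate sensitivity is $\tfrac{4\alpha 2^{|s|}}{b}$; with $\lambda_{i_1,j,s}=\tfrac{4\alpha 2^j}{b\varepsilon}$ this makes each affected leaf $\tfrac{\varepsilon}{2^{j-|s|}}$-DP, with no dependence on $d$ because only the $\ell_\infty$-sensitivity enters. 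Applying adaptive basic composition over the $2^{j-|s|}$ affected leaves (the only releases that depend on $l_{i_1,t_1}$) yields $2^{j-|s|}\cdot\tfrac{\varepsilon}{2^{j-|s|}}=\varepsilon$, and post-processing then extends the guarantee to the full output.

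The main obstacle I anticipate is the bookkeeping in the second and third steps: arguing that the corruption reaches exactly $2^{j-|s|}$ leaves with a single uniform $\ell_\infty$-sensitivity, and setting up the report-noisy-max so that the privacy cost scales with the $\ell_\infty$- rather than the $\ell_1$-sensitivity (which is what removes the spurious factor of $d$ and lets the composition telescope to exactly $\varepsilon$, independent of the depth $|s|$). The depth-dependent noise scale $\lambda_{i,j,s}\propto 2^j$ is precisely calibrated so that a deeper vertex, although it corrupts fewer leaves, is charged proportionally more per leaf; verifying this cancellation for every admissible pair $(j,|s|)$ is the crux.
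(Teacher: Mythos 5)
Your proposal follows the same skeleton as the paper's proof: localize the changed loss $l_{i_1,t_1}$ to a single client, phase, and tree vertex; bound the per-coordinate sensitivity of the gradient estimate by $\tfrac{4\alpha}{2^{-|s|}b}$; observe that only the $2^{j-|s|}$ descendant leaves are affected; calibrate $\lambda_{i,j,s}=\tfrac{4\alpha 2^j}{b\varepsilon}$ so each affected release costs $\tfrac{\varepsilon}{2^{j-|s|}}$; and finish by (adaptive) basic composition and post-processing. The one genuine divergence is the mechanism you invoke at each leaf. The paper privatizes the communicated quantities themselves, i.e.\ it shows the collection of noisy inner products $\{\langle c_n,v_{i,j,s}\rangle+\xi_{i,n}\}_n$ is $\varepsilon$-indistinguishable via the Laplace mechanism and then treats the server's $\arg\min$ and everything downstream as post-processing. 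You instead condition on the other clients' (data-independent) noise and apply report-noisy-max directly to the server's $\arg\min$, so that only the $\ell_\infty$-sensitivity enters. Your route is arguably the cleaner accounting: the paper's Laplace-mechanism step, read literally as a $d$-dimensional release, would require the $\ell_1$-sensitivity and hence an extra factor of $d$ in the noise, and it is really the report-noisy-max calculation that makes the numbers work — you make this explicit. The trade-off is scope: your argument certifies $\varepsilon$-DP only for the broadcasts $\bar{w}_{j,s}$ and the final expert selections (which is all the lemma and the paper's DP definition require), whereas the paper's phrasing aims to certify the client-to-server messages themselves, consistent with its claim that all exchanged information is private. Both arguments rely on the same implicit facts — that $l_{i_1,t_1}$ lands in exactly one batch $\mathcal{B}_{i_1,j,s}$ and that the composition over affected leaves is adaptive (conditioning on earlier outputs so the iterates $x_{i,p,k}$ coincide) — which you state at least as carefully as the paper does.
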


\begin{proof}

Let $\mathcal{S}$ and $\mathcal{S}'$ be two neighboring datasets and assume that they differ in sample $l_{i_1,t_1}$ or $l'_{i_1,t_1}$, where $2^{p_1-1}\leq t_1< 2^{p_1}$. 
Let $\mathcal{B}_{i_1,j,s}$ be the set that contains $l_{i_1,t_1}$ or $l'_{i_1,t_1}$. Recall that $\lvert \mathcal{B}_{i_1,j,s}\rvert=2^{-\lvert s\rvert}b$. The key point is that this set is used in the calculation of $v_{i_1,p_1,k}$ for at most $2^{j-\lvert s\rvert}$ iterates, i.e. the leaves that are descendants of the vertex. Let $k_0$ and $k_1$ be the first and last iterate such that $\mathcal{B}_{i_1,j,s}$ is used for the calculation of $v_{i_1,p_1,k}$, hence $k_1-k_0+1\leq2^{j-\lvert s\rvert}$. 

First, we show that $(\langle c_n,v_{1,1,1} \rangle,\ldots,\langle c_n,v_{m,P,K} \rangle) \approx_{(\varepsilon,0)} (\langle c_n,v'_{1,1,1} \rangle,\ldots,\langle c_n,v'_{m,P,K} \rangle)$ holds for $1\leq n\leq d$. For our purpose, it suffices to establish that $(\langle c_n,v_{i_1,p_1,k_0} \rangle,\ldots,\langle c_n,v_{i_1,p_1,k_1} \rangle) \approx_{(\varepsilon,0)} (\langle c_n,v'_{i_1,p_1,k_0} \rangle,\ldots,\langle c_n,v'_{i_1,p_1,k_1} \rangle)$ holds for $1\leq n\leq d$, since $l_{i_1,t_1}$ or $l'_{i_1,t_1}$ is only used in client $i_1$, at phase $p_1$ and iteration $k_0,\ldots,k_1$. Therefore it is enough to show that $\langle c_n,v_{i_1,p_1,k} \rangle \approx_{(\frac{\varepsilon}{2^{j-\lvert s\rvert}},0)} \langle c_n,v'_{i_1,p_1,k} \rangle$ holds for $k_0\leq k\leq k_1 \text{ and }1\leq n\leq d$, because $k_1-k_0+1\leq2^{j-\lvert s\rvert}$. Note that $\lvert \langle c_n,v_{i_1,p_1,k}-v'_{i_1,p_1,k} \rangle\rvert\leq \frac{4\alpha}{2^{-\lvert s\rvert}b}$. Setting $\lambda_{i,j,s}=\frac{4\alpha2^j}{b\varepsilon}$ and applying standard results of Laplace mechanism (\citet{dwork2014algorithmic}) lead to our intended results. Finally, we can show that $(x_{1,1,K},\ldots,x_{m,P,K})\approx_{(\varepsilon,0)} (x'_{1,1,K},\ldots,x'_{m,P,K})$ by post-processing.
\end{proof}

To help prove the upper bound of the regret, we introduce some lemmas first.

\begin{lemma}\label{lemma:v and gradient subgaussian}
    Let $t$ be the time-step, $p$ be the index of phases, $i$ be the index of the clients, and $(j,s)$ be a vertex. For every index $1\leq k\leq d$ of the vectors, we have
    \begin{align*}
        \Eb\left[\exp\left\{c(\bar{v}_{p,j,s,k}-\nabla L_{t,k}(x_{i,p,j,s}))\right\}\right]\leq  \exp\left(\frac{O(1)c^2(\alpha^2+\beta^2 )}{bm}\right),
    \end{align*}
    where $\bar{v}_{p,j,s}=\frac{1}{m}\sum_{i=1}^mv_{i,p,j,s}$.
\end{lemma}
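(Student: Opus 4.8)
The plan is to prove the sub-Gaussianity claim by induction on the depth $\lvert s\rvert$ of the vertex in the binary tree, following the recursive structure of the DP-FW gradient estimates in \Cref{equ:v}. The key observation is that $\bar v_{p,j,s}$ is the client-average of the per-client estimates $v_{i,p,j,s}$, and the error $\bar v_{p,j,s,k}-\nabla L_{t,k}(x_{i,p,j,s})$ decomposes into two distinct sources of randomness: the sampling noise from the mini-batches $\mathcal B_{i,j,s}$ drawn at each vertex, and the Laplace privacy noise $\xi_{i,n}$ added during communication. I would first set up the base case at the root ($s=\emptyset$), where $v_{i,j,\emptyset}=\nabla l(x_{i,j,\emptyset};\mathcal B_{i,j,\emptyset})$ is an empirical mean of $b$ independent gradient samples. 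Since each $\nabla l$ is bounded (via $\alpha$-Lipschitzness, so each coordinate is $O(\alpha)$), each coordinate of the single-client empirical gradient is $O(\alpha^2/b)$-sub-Gaussian by a bounded-difference / Hoeffding argument, and averaging over $m$ independent clients contributes the extra $1/m$ factor, giving the claimed $O(\alpha^2/(bm))$ scale for the MGF bound.

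For the inductive step, I would use the recursion $v_{i,j,s}=v_{i,j,s'}+\nabla l(x_{i,j,s};\mathcal B_{i,j,s})-\nabla l(x_{i,j,s'};\mathcal B_{i,j,s})$. The difference term $\nabla l(x_{i,j,s};\mathcal B_{i,j,s})-\nabla l(x_{i,j,s'};\mathcal B_{i,j,s})$ is an empirical mean (over the batch $\mathcal B_{i,j,s}$ of size $2^{-\lvert s\rvert}b$) of gradient \emph{differences}; by $\beta$-smoothness, $\lVert\nabla l(x_{i,j,s})-\nabla l(x_{i,j,s'})\rVert$ is controlled by $\beta\lVert x_{i,j,s}-x_{i,j,s'}\rVert$, and since the step sizes $\eta=2/(k+1)$ shrink geometrically with depth, the per-coordinate range of each summand is $O(\beta\eta)=O(\beta 2^{-\lvert s\rvert})$-ish. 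The crucial telescoping insight is that $\nabla l(x_{i,j,s};\mathcal B_{i,j,s})-\nabla L_t(x_{i,j,s})$ minus $[\nabla l(x_{i,j,s'};\mathcal B_{i,j,s})-\nabla L_t(x_{i,j,s'})]$ is again a batch average of bounded differences whose variance scales like $\beta^2/\lvert\mathcal B_{i,j,s}\rvert = \beta^2 2^{\lvert s\rvert}/b$; combined with the inductive MGF bound for $v_{i,j,s'}$ and the fact that the new batch is sampled independently, the sum of the two sub-Gaussian parameters telescopes over the $\lvert s\rvert$ levels of the path. Because the variance contributed at depth $\ell$ scales like $\beta^2 2^{\ell}/b$ but is weighted down by the squared step size, one must verify the geometric sum of these per-level contributions stays $O((\alpha^2+\beta^2)/b)$ rather than blowing up; this accounting is the technical heart of the argument. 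Finally, averaging over the $m$ independent clients supplies the $1/m$ factor, and multiplying MGFs of independent contributions (using the independence of distinct batches across vertices and of the clients) yields the stated exponential bound.

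The main obstacle I anticipate is controlling the accumulation of sampling noise down the tree path so that the total sub-Gaussian parameter remains $O((\alpha^2+\beta^2)/(bm))$ uniformly in the depth, rather than growing with $\lvert s\rvert$. This requires carefully pairing the exponential growth of the inverse batch size $2^{\lvert s\rvert}/b$ against the exponential decay of the step size $\eta_{i,p,k}=2/(k+1)$ that governs how far $x_{i,j,s}$ can drift from $x_{i,j,s'}$, so that $\beta\lVert x_{i,j,s}-x_{i,j,s'}\rVert$ counteracts the variance inflation. I would also need to be precise about which randomness is conditioned on at each level: the parameters $x_{i,j,s}$ depend on the global predictions $\bar w$, which in turn depend on earlier Laplace noise and batches, so I would condition on the history up to the parent vertex, apply the MGF bound for the fresh batch noise conditionally, and then combine with the inductive bound via the tower property. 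The remaining steps — the base-case Hoeffding estimate and the final averaging over clients — are routine once this telescoping bookkeeping is set up correctly.
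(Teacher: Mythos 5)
Your proposal matches the paper's proof essentially step for step: induction on the vertex depth, a Hoeffding-type MGF bound at the root giving the $O(\alpha^2/(bm))$ term, and an inductive step that conditions on the history up to the parent, bounds the fresh-batch increment via $\beta$-smoothness and the bound $\Vert x_{i,j,s}-x_{i,j,s'}\Vert_1\le \eta\,2^{j-\lvert s\rvert}=O(2^{-\lvert s\rvert})$, and sums the resulting geometric series $\sum_\ell \beta^2 2^{-\ell}/(bm)$ — exactly the batch-size-versus-drift cancellation you flag as the technical heart. One small correction: the Laplace noise $\xi_{i,n}$ is not a direct component of $\bar v_{p,j,s}-\nabla L_t(x_{i,p,j,s})$ (it is handled separately in \Cref{lemma: bound vw}); it only enters indirectly through the iterates $x$ via $\bar w$, which your conditioning on the history already absorbs.
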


\begin{proof}
    Let us fix $p$, $t$, $k$ and $i$ for simplicity and let $A_{j,s}=\bar{v}_{p,j,s,k}-\nabla L_{t,k}(x_{i,p,j,s})$. We prove the lemma by induction on the depth of the vertex, i.e., $\lvert s\rvert$. If $\lvert s\rvert=0$, then $v_{i,p,j,\emptyset}=\nabla l(x_{i,p,j,\emptyset};\mathcal{B}_{i,p,j,\emptyset})$ where $\mathcal{B}_{i,p,j,\emptyset}$ is a sample set of size $b$. Therefore we have
 \begin{align*}
        \Eb[\exp{c A_{j,s}}] &=\Eb[\exp{c(\bar{v}_{p,j,\emptyset,k}-\nabla L_{t,k}(x_{i,p,j,\emptyset})}]\\
        &=\Eb\left[\exp c\left(\frac{1}{mb}\sum_{i=1}^m\sum_{s\in \mathcal{B}_{i,p,j,\emptyset}}\nabla l_{k}(x_{i,p,j,\emptyset};s)-\nabla L_{t,k}(x_{i,p,j,\emptyset})\right)\right]\\
        &=\prod_{s\in \mathcal{B}_{i,p,j,\emptyset}}\prod_{i\in [m]}\Eb\left[\exp{\frac{c}{bm}(\nabla l_{k}(x_{i,p,j,\emptyset};s)-\nabla L_{t,k}(x_{i,p,j,\emptyset}))}\right]\\
        &\leq \exp\left(\frac{c^2\alpha^2}{2bm}\right),
\end{align*} 
where the last inequality holds because for a random variable $X\in[-\alpha,\alpha]$, we have $\Eb[\exp{c (X-\Eb[X])}]\leq \exp\left(\frac{c^2\alpha^2}{2}\right)$. 

Assume the depth of the vertex $\lvert s\rvert\ge 0$ and let $s=s'a$ where $a\in \{0,1\}$. If $a=0$, clearly the lemma holds. If $a=1$, recall that $v_{i,p,j,s}=v_{i,p,j,s'}+\nabla l(x_{i,p,j,s};\mathcal{B}_{i,p,j,s})-\nabla l(x_{i,p,j,s'};\mathcal{B}_{i,p,j,s})$, then 
\begin{align*}
        A_{j,s} &= \bar{v}_{p,j,s,k}-\nabla L_{t,k}(x_{i,p,j,s})\\
        & = A_{j,s'}+\frac{1}{m}\sum_{i=1}^m\nabla l_{k}(x_{i,p,j,s};\mathcal{B}_{i,p,j,s})-\frac{1}{m}\sum_{i=1}^m\nabla l_{k}(x_{i,p,j,s'};\mathcal{B}_{i,p,j,s})\\
        & \quad-\nabla L_{t,k}(x_{i,p,j,s})+\nabla L_{t,k}(x_{i,p,j,s'}) 
\end{align*} 
Let $\mathcal{B}_{i,p,< (j,s)}=\cup_{(j_1,s_1)<(j,s)}\mathcal{B}_{i,p,j_1,s_1}$ be the set containing all the samples used up to vertex $(j,s)$ in phase $p$ at client $i$. We have
\begin{align*}
        \Eb & \left[ \exp{c A_{j,s}} \right] \\
        & = \Eb \bigg[\exp \bigg(c(A_{j,s'} + \frac{1}{m}\sum_{i=1}^m\nabla l_{k}(x_{i,p,j,s};\mathcal{B}_{i,p,j,s}) - \frac{1}{m} \sum_{i=1}^m\nabla l_{k}(x_{i,p,j,s'};\mathcal{B}_{i,p,j,s}) \bigg) \\
        & \quad\quad  \times \exp\bigg(-\nabla L_{t,k}(x_{i,p,j,s})+\nabla L_{t,k}(x_{i,p,j,s'}) \bigg) \bigg] \\
        & = \Eb \bigg[ \Eb\bigg[ \exp \bigg(c(A_{j,s'}+\frac{1}{m}\sum_{i=1}^m\nabla l_{k}(x_{i,p,j,s};\mathcal{B}_{i,p,j,s})-\frac{1}{m}\sum_{i=1}^m\nabla l_{k}(x_{i,p,j,s'};\mathcal{B}_{i,p,j,s}) \\
        & \quad\quad -\nabla L_{t,k}(x_{i,p,j,s})+\nabla L_{t,k}(x_{i,p,j,s'}) \bigg) \bigg| \mathcal{B}_{i,p,< (j,s)} \bigg] \bigg] \\
        & = \Eb\bigg[  \Eb\left[ \exp \left(c A_{j,s'} \right) \big| \mathcal{B}_{i,p,< (j,s)}  \right] \\
        & \quad\quad \times \Eb[\exp\bigg(c(\frac{1}{m}\sum_{i=1}^m\nabla l_{k}(x_{i,p,j,s};\mathcal{B}_{i,p,j,s})-\frac{1}{m}\sum_{i=1}^m\nabla l_{k}(x_{i,p,j,s'};\mathcal{B}_{i,p,j,s}) \\
        & \quad \quad-\nabla L_{t,k}(x_{i,p,j,s})+\nabla L_{t,k}(x_{i,p,j,s'})\bigg) \bigg| \mathcal{B}_{i,p,< (j,s)} \bigg]\bigg].
\end{align*} 
Since $l_{i,t}(\cdot;s)$ is $\beta$-smooth w.r.t. $\Vert \cdot\Vert_1$, we have $$\lvert\nabla l_{k}(x_{i,p,j,s};\mathcal{B}_{i,p,j,s})-\nabla l_{k}(x_{i,p,j,s'};\mathcal{B}_{i,p,j,s})\rvert\le\beta\Vert x_{i,p,j,s}-x_{i,p,j,s'} \Vert_1.$$
Since vertex $(j,s)$ is the right son of vertex $(j,s')$, the number of updates between $x_{i,p,j,s}$ and $x_{i,p,j,s'}$ is at most the number of leafs visited between these two vertices i.e. $2^{j-\lvert s\rvert}$. Therefore we have
$$\Vert x_{i,p,j,s}-x_{i,p,j,s'} \Vert_1\le \eta_{i,p,j,s'}2^{j-\lvert s\rvert}\le 2^{2-\lvert s\rvert}.$$
By using similar arguments to the case $\lvert s\rvert=0$, we can get 
\begin{align*}
        \Eb&\bigg[\exp \bigg(c(\frac{1}{m}\sum_{i=1}^m\nabla l_{k}(x_{i,p,j,s};\mathcal{B}_{i,p,j,s})-\frac{1}{m}\sum_{i=1}^m\nabla l_{k}(x_{i,p,j,s'};\mathcal{B}_{i,p,j,s}) \bigg)\\
        &\quad \times \exp\bigg(-\nabla L_{t,k}(x_{i,p,j,s})+\nabla L_{t,k}(x_{i,p,j,s'}) \bigg) \bigg| \mathcal{B}_{i,p,< (j,s)} \bigg] \\
        &\leq \exp\left(\frac{O(1)c^2\beta^2  2^{-2\lvert s \rvert}}{m\lvert \mathcal{B}_{i,p,j,s}\rvert}\right) \\
        &\leq \exp\left(\frac{O(1)c^2\beta^2  2^{-\lvert s \rvert}}{bm}\right).
\end{align*}    
Then we get 
\[ \Eb[\exp{c A_{j,s}}]\le \Eb[\exp{c A_{j,s'}}]\exp\left(\frac{O(1)c^2\beta^2 2^{-\lvert s \rvert}}{bm}\right). \]
Applying this inductively, we have for every index $1\leq k\leq d$
\[ \Eb[\exp{c A_{j,s}}]\le \exp\left(\frac{O(1)c^2(\alpha^2+\beta^2 )}{bm}\right). \]
\end{proof}

\Cref{lemma: bound v and gradient} upper bounds the variance of the average gradient.

\begin{lemma}\label{lemma: bound v and gradient}
    At phase $p$, for each vertex $(j,s)$ and $2^{p-1}\leq t < 2^p -1$, we have
    \begin{align*}
        \Eb\left[\Vert \bar{v}_{p,j,s}-\nabla L_{t}(x_{i,p,j,s})\Vert_\infty\right]\leq (\alpha+\beta ) O\left( \sqrt{\frac{\log d}{bm}} \right),
    \end{align*}
    where $\bar{v}_{p,j,s}=\frac{1}{m}\sum_{i=1}^mv_{i,p,j,s}$.
\end{lemma}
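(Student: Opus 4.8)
The plan is to reduce the claim to a standard maximal inequality for sub-Gaussian random variables, feeding it the coordinate-wise moment generating function bound already established in \Cref{lemma:v and gradient subgaussian}. First I would fix the phase $p$, the time step $t$, the client $i$, and the vertex $(j,s)$, and abbreviate $X_k := \bar{v}_{p,j,s,k} - \nabla L_{t,k}(x_{i,p,j,s})$ for each coordinate $k \in [d]$, so that $\Vert \bar{v}_{p,j,s} - \nabla L_{t}(x_{i,p,j,s})\Vert_\infty = \max_{k} |X_k|$. By \Cref{lemma:v and gradient subgaussian}, each $X_k$ satisfies $\Eb[\exp(cX_k)] \le \exp\!\left(O(1)\,c^2(\alpha^2+\beta^2)/(bm)\right)$ for every $c \in \mathbb{R}$; that is, each $X_k$ is sub-Gaussian with variance proxy $\sigma^2 = O\!\left((\alpha^2+\beta^2)/(bm)\right)$. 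Because the right-hand side depends only on $c^2$, the same bound holds with $c$ replaced by $-c$, so $-X_k$ enjoys identical control, and hence so does $|X_k| = \max(X_k,-X_k)$.

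Next I would invoke the exponential-moment (Chernoff plus union) method to pass from the per-coordinate bound to the expected maximum. For any $c > 0$, Jensen's inequality gives $\exp\!\big(c\,\Eb[\max_k |X_k|]\big) \le \Eb[\exp(c\max_k |X_k|)] \le \sum_{k}\big(\Eb[\exp(cX_k)] + \Eb[\exp(-cX_k)]\big) \le 2d\exp\!\big(O(1)c^2\sigma^2\big)$, where I have bounded the maximum over the $2d$ variables $\{X_k,-X_k\}_{k\in[d]}$ by their sum. Taking logarithms yields $\Eb[\max_k |X_k|] \le c^{-1}\log(2d) + O(1)\,c\,\sigma^2$, and optimizing over $c$ (taking $c \asymp \sqrt{\log d}/\sigma$) gives $\Eb[\max_k |X_k|] = O(\sigma\sqrt{\log d})$. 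Substituting $\sigma^2 = O((\alpha^2+\beta^2)/(bm))$ and using $\sqrt{\alpha^2+\beta^2} \le \alpha+\beta$ produces exactly the claimed bound $(\alpha+\beta)\,O\!\big(\sqrt{\log d/(bm)}\big)$.

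I do not expect a genuine obstacle here, since the argument is standard once \Cref{lemma:v and gradient subgaussian} is in hand; the only points requiring care are bookkeeping. Specifically, I must check that the variance proxy is \emph{uniform} over all coordinates $k$ and over the vertices $(j,s)$ — which \Cref{lemma:v and gradient subgaussian} supplies, as its implicit constant depends on neither $k$ nor $s$ — and that the two-sided control needed to handle the absolute value is legitimately inherited from the symmetric $c^2$ form of the MGF bound. If a high-probability statement were desired instead of the in-expectation one, the identical $2d$-fold union bound over the sub-Gaussian tails would deliver it; the in-expectation version stated here is what the subsequent regret analysis consumes.
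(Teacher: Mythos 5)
Your proposal is correct and follows the same route as the paper: the paper likewise cites \Cref{lemma:v and gradient subgaussian} to conclude each coordinate difference is $O((\alpha^2+\beta^2)/(bm))$-sub-Gaussian and then invokes the standard maximal inequality for $d$ sub-Gaussian variables to get the $\sqrt{\log d}$ factor. The only difference is that you spell out the Chernoff-plus-union derivation and the two-sided symmetrization that the paper leaves implicit as ``standard results.''
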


\begin{proof}
    \Cref{lemma:v and gradient subgaussian} implies that $\bar{v}_{p,j,s,k}-\nabla L_{t,k}(x_{i,p,j,s})$ is $O(\frac{\alpha^2+\beta^2 }{bm})$-sub-Gaussian for every index $1\leq k\leq d$ of the vectors. Applying standard results of the maximum of $d$ sub-Gaussian random variables, we get
    $$\Eb[\Vert \bar{v}_{p,j,s}-\nabla L_{t}(x_{i,p,j,s})\Vert_\infty]\leq O\left(\sqrt{\frac{\alpha^2+\beta^2 }{bm}}\right)\sqrt{\log d}.$$
\end{proof}

\Cref{lemma:sum of Lap} gives the tail bound of the sum of i.i.d. random variables following Laplace distribution.

\begin{lemma}\label{lemma:sum of Lap}
Let $\xi_{i,n}$ be IID random variables following the distribution $\text{Lap}(\lambda_{i,j,s})$. Then we have
    \begin{align*}
        \mathbb{E}\left[\underset{n: 1\leq n\leq d}{\max}\Big\vert\sum_{i=1}^m \xi_{i,n}\Big\vert
        \right]\leq O(\sqrt{m}\lambda_{i,j,s}\ln d).
    \end{align*}
\end{lemma}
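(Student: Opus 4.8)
The plan is to control the expected maximum through the moment-generating-function (MGF) method, exploiting the fact that the Laplace distribution has an explicit MGF. Write $\lambda = \lambda_{i,j,s}$ and $S_n = \sum_{i=1}^m \xi_{i,n}$, so that the quantity of interest is $\mathbb{E}[\max_n |S_n|]$ over $n\in[d]$. For a single $\xi \sim \text{Lap}(\lambda)$ one has $\mathbb{E}[e^{c\xi}] = (1-\lambda^2 c^2)^{-1}$ whenever $|c| < 1/\lambda$; hence by independence $\mathbb{E}[e^{cS_n}] = (1-\lambda^2 c^2)^{-m}$, and since $e^{c|x|} \le e^{cx} + e^{-cx}$ together with the symmetry of the Laplace law gives $\mathbb{E}[e^{c|S_n|}] \le 2(1-\lambda^2 c^2)^{-m}$.

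First I would apply Jensen's inequality to the convex map $z \mapsto e^{cz}$ (for $c>0$) and then a union bound across the $d$ coordinates:
\[ \exp\!\big(c\,\mathbb{E}[\max_n |S_n|]\big) \le \mathbb{E}[\max_n e^{c|S_n|}] \le \sum_{n=1}^d \mathbb{E}[e^{c|S_n|}] \le 2d\,(1-\lambda^2 c^2)^{-m}. \]
Taking logarithms and using the elementary bound $-\ln(1-x) \le 2x$, valid for $0\le x \le 1/2$, this yields, for every $c$ with $\lambda^2 c^2 \le 1/2$,
\[ \mathbb{E}[\max_n |S_n|] \le \frac{\ln(2d)}{c} + 2m\lambda^2 c. \]

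Next I would optimize the free parameter $c$. Balancing the two terms suggests $c^\star = \tfrac{1}{\lambda}\sqrt{\ln(2d)/(2m)}$, which satisfies the admissibility constraint $\lambda^2 (c^\star)^2\le 1/2$ precisely when $\ln(2d) \le m$; in that regime it delivers the sub-Gaussian rate $O(\sqrt{m}\,\lambda\,\sqrt{\ln d})$. In the complementary regime $\ln(2d) > m$, I would instead cap $c$ at the largest admissible value, e.g. $c = 1/(2\lambda)$, which gives $O(\lambda\ln d + m\lambda) = O(\lambda \ln d)$ since $m < \ln(2d)$ there. Because $\sqrt{m}\,\lambda\sqrt{\ln d} \le \sqrt{m}\,\lambda\ln d$ and $\lambda \ln d \le \sqrt{m}\,\lambda \ln d$, both regimes are dominated by $O(\sqrt{m}\,\lambda\ln d)$, establishing the claim.

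I expect the main subtlety to be the two-regime behaviour caused by the heavier-than-Gaussian tails of the Laplace sum: the MGF blows up as $|c|\to 1/\lambda$, so the optimizing $c^\star$ is only feasible when $\ln d$ is at most of order $m$. Recognizing that the stated $\ln d$ factor, rather than the sharper $\sqrt{\ln d}$ one would obtain purely in the Gaussian regime, is exactly what lets a single bound absorb both the sub-Gaussian centre and the sub-exponential tail uniformly is the crux; once the regime split is in place, the remaining manipulations are routine.
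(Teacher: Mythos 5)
Your proof is correct, and it takes a genuinely different route from the paper's. The paper first upgrades the Laplace MGF bound $\tfrac{1}{1-\lambda^2u^2}\le e^{2\lambda^2u^2}$ (for $|u|\le \tfrac{1}{2\lambda}$) into a statement that each $Y_n=\sum_i\xi_{i,n}$ is sub-exponential with parameters $(4m\lambda^2,2\lambda)$, then writes down the corresponding two-piece tail bound (Gaussian-type for $c\le 2m\lambda$, exponential-type beyond), takes a union bound over $n$, and integrates $\int_0^\infty \Pb(\max_n|Y_n|\ge c)\,dc$ by splitting the integral at $\sqrt{8m\ln d}\,\lambda$ and $2m\lambda$. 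You instead keep the exact MGF $(1-\lambda^2c^2)^{-m}$, pass it directly through Jensen plus a union bound on the exponential scale, and handle the sub-Gaussian/sub-exponential dichotomy by a case split on the admissibility of the optimizing $c^\star$ rather than by splitting a tail integral. The two arguments are morally the same computation --- both ultimately balance a $\ln(d)/c$ term against an $m\lambda^2 c$ term, and both produce the sharper intermediate bound $O(\sqrt{m\ln d}\,\lambda+\lambda\ln d)$ before relaxing to the stated $O(\sqrt{m}\,\lambda\ln d)$ --- but yours avoids both the explicit sub-exponential tail inequalities and the three-piece integration, at the cost of the slightly delicate observation that $c^\star$ is only feasible when $\ln(2d)\lesssim m$ and that the capped choice $c=1/(2\lambda)$ suffices otherwise. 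One cosmetic caution: your final domination step $\sqrt{m}\,\lambda\sqrt{\ln d}\le \sqrt{m}\,\lambda\ln d$ needs $\ln d\ge 1$ (or the $\ln(2d)$ variant you actually carry), which is harmless inside the $O(\cdot)$ but worth stating if you write this up.
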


\begin{proof}
$\xi_{i,n}$'s are $md$ IID random variables following the distribution $\text{Lap}(\lambda_{i,j,s})$.
We note that 
$$\mathbb{E}\left[\exp(u\xi_{i,n})\right]=\frac{1}{1-{\lambda_{i,j,s}}^2u^2},  \lvert u\rvert\le \frac{1}{\lambda_{i,j,s}}.$$
Since $\frac{1}{1-{\lambda_{i,j,s}}^2u^2}\leq 1+2{\lambda_{i,j,s}}^2u^2\leq \exp(2{\lambda_{i,j,s}}^2u^2), \text{ when }\lvert u\rvert\le \frac{1}{2{\lambda_{i,j,s}}}$, $\xi_{i,n}$ is sub-exponential with parameter $(4{\lambda_{i,j,s}}^2,2{\lambda_{i,j,s}})$. Applying standard results of linear combination of sub-exponential random variables, we can conclude that $\sum_{i=1}^m \xi_{i,n}$, denoted as $Y_n$, is sub-exponential with parameter $(4m{\lambda_{i,j,s}}^2,2{\lambda_{i,j,s}})$. From standard results of tail bounds of sub-exponential random variables, we have
$$\Pb\left(\lvert Y_n\rvert\geq c\right)\leq 2\exp\left(-\frac{c^2}{8m{\lambda_{i,j,s}}^2}\right),\text{ if }0\le c\leq 2m\lambda_{i,j,s},$$
$$
\Pb(\lvert Y_n\rvert\geq c)\leq 2\exp\left(-\frac{c}{4{\lambda_{i,j,s}}}\right),\text{ if }c\geq 2m{\lambda_{i,j,s}}.$$
Since $\Pb\left(\underset{n: 1\leq n\leq d}{\max}\lvert Y_n\rvert\geq c\right)\leq \sum_{n: 1\leq n\leq d}\Pb\left(\lvert Y_n\rvert\geq c\right)$, we have 
$$\Pb(\underset{n: 1\leq n\leq d}{\max}\lvert Y_n\rvert\geq c)\leq 2d\exp\left(-\frac{c^2}{8m{\lambda_{i,j,s}}^2}\right),\text{ if }0\le c\leq 2m{\lambda_{i,j,s}},$$
$$\Pb(\underset{n: 1\leq n\leq d}{\max}\lvert Y_n\rvert\geq c)\leq 2d\exp\left(-\frac{c}{4\lambda_{i,j,s}}\right),\text{ if }c\geq 2m{\lambda_{i,j,s}}.$$

Then we have

\begin{equation*}
    \begin{split}
        \mathbb{E}\left[\underset{n: 1\leq n\leq d}{\max}\lvert Y_n\rvert\right] &= \int_{0}^{\infty} \Pb\left(\underset{n: 1\leq n\leq d}{\max}\lvert Y_n\rvert\geq c\right) dc\\
        &= \sqrt{8m\ln d}\lambda_{i,j,s} + \int_{\sqrt{8m\ln d}\lambda_{i,j,s}}^{2m\lambda_{i,j,s}} \Pb\left(\underset{n: 1\leq n\leq d}{\max}\lvert Y_n\rvert\geq c\right) dc \\
        &\quad+ \int_{2m\lambda_{i,j,s}}^{\infty} \Pb\left(\underset{n: 1\leq n\leq d}{\max}\lvert Y_n\rvert\geq c\right) dc\\
        &\leq \sqrt{8m\ln d}\lambda_{i,j,s}  + \int_{\sqrt{8m\ln d}\lambda_{i,j,s} }^{2m\lambda_{i,j,s}} 2\exp\left(\ln d-\frac{c^2}{8m{\lambda_{i,j,s}}^2}\right) dc \\
    &\quad+\int_{2m\lambda_{i,j,s}}^{\infty} 2\exp\left(\ln d-\frac{c}{4\lambda_{i,j,s}}\right) dc\\
        &\leq \sqrt{8m\ln d}\lambda_{i,j,s} +\sqrt{8m}\lambda_{i,j,s}\int_{-\infty}^{+\infty} \exp\left(-u^2\right) du\\
        &\quad+ 8\lambda_{i,j,s}\int_{2/m - \ln d}^{\infty} \exp(-u) du\\
        &\leq \sqrt{8m\ln d}\lambda_{i,j,s} +\sqrt{8m}\lambda_{i,j,s}\int_{-\infty}^{+\infty} \exp\left(-u^2\right) du\\
        &\quad+ 8\lambda_{i,j,s}\ln d +8\lambda_{i,j,s}\int_{0}^{\infty} \exp(-u) du\\
        &=\sqrt{8m\ln d}\lambda_{i,j,s} +\sqrt{8m\pi}\lambda_{i,j,s}+8\lambda_{i,j,s}\ln d +8\lambda_{i,j,s}\\
        &=O(\sqrt{m}\lambda_{i,j,s}\ln d).
    \end{split}
\end{equation*} 
\end{proof}

\begin{lemma}\label{lemma: bound vw}
   Setting $\lambda_{i,j,s}=\frac{4\alpha2^j}{b\varepsilon}$, we have
    \begin{align*}
        \mathbb{E}[\langle\bar{v}_{p,k},\bar{w}_{p,k}\rangle]\leq\mathbb{E}\left[\underset{w\in \mathcal{X}}{\min}\langle\bar{v}_{p,k},w\rangle\right]+O\left(\frac{\alpha2^j\ln d}{b\varepsilon\sqrt{m}}\right).
    \end{align*}
\end{lemma}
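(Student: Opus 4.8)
The plan is to reduce the suboptimality of the noisy choice $\bar{w}_{p,k}$ to pure noise, using the fact that both $\bar{w}_{p,k}$ and the exact minimizer of $\langle\bar{v}_{p,k},\cdot\rangle$ are vertices of the simplex. Write $\Xi_n=\frac{1}{m}\sum_{i=1}^m\xi_{i,n}$ for the averaged Laplace noise attached to vertex $c_n$. Since $w\mapsto\langle\bar{v}_{p,k},w\rangle$ is linear, its minimum over $\mathcal{X}=\Delta_d$ is attained at a vertex, so setting $n^\star=\arg\min_{n}\langle c_n,\bar{v}_{p,k}\rangle$ gives $\min_{w\in\mathcal{X}}\langle\bar{v}_{p,k},w\rangle=\langle c_{n^\star},\bar{v}_{p,k}\rangle$. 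By \Cref{equ:w}, the broadcast prediction is $\bar{w}_{p,k}=c_{\hat{n}}$ where $\hat{n}$ minimizes the \emph{noisy} objective $\langle c_n,\bar{v}_{p,k}\rangle+\Xi_n$.

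The key step I would carry out first is writing the optimality inequality of the noisy minimizer against the true one. Since $\langle c_{\hat{n}},\bar{v}_{p,k}\rangle+\Xi_{\hat{n}}\leq\langle c_{n^\star},\bar{v}_{p,k}\rangle+\Xi_{n^\star}$, rearranging yields
\[
\langle\bar{v}_{p,k},\bar{w}_{p,k}\rangle-\min_{w\in\mathcal{X}}\langle\bar{v}_{p,k},w\rangle
=\langle c_{\hat{n}},\bar{v}_{p,k}\rangle-\langle c_{n^\star},\bar{v}_{p,k}\rangle
\leq\Xi_{n^\star}-\Xi_{\hat{n}}
\leq 2\max_{1\leq n\leq d}|\Xi_n|.
\]
This is the central reduction: it cancels all data-dependent terms and leaves a bound in terms of the noise alone, matching the quantity $\frac{2}{m}\max_n\bigl|\sum_i\xi_{i,n}\bigr|$ flagged in the proof sketch.

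Next I would take expectations and invoke \Cref{lemma:sum of Lap}, which gives $\Eb[\max_n|\sum_{i=1}^m\xi_{i,n}|]\leq O(\sqrt{m}\,\lambda_{i,j,s}\ln d)$ for IID $\xi_{i,n}\sim\text{Lap}(\lambda_{i,j,s})$. Dividing by $m$ produces $\Eb[2\max_n|\Xi_n|]=O(\lambda_{i,j,s}\ln d/\sqrt{m})$, and substituting the prescribed noise scale $\lambda_{i,j,s}=4\alpha 2^j/(b\varepsilon)$ yields the stated bound $O(\alpha 2^j\ln d/(b\varepsilon\sqrt{m}))$.

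Since this is in essence a report-noisy-min analysis, I do not anticipate a genuine obstacle; the only point requiring care is the \emph{two-sided} control of the noise, namely that one must dominate both $\Xi_{n^\star}$ and $-\Xi_{\hat{n}}$ by a single quantity $\max_n|\Xi_n|$ (since $\hat{n}$ is random and data-dependent, one cannot simply appeal to the tail of the Laplace variable at a fixed coordinate). This is precisely why the maximum over all $d$ coordinates appears, and why \Cref{lemma:sum of Lap} is stated for $\max_{1\leq n\leq d}$ rather than for a single sum of Laplace variables.
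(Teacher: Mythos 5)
Your proposal is correct and follows essentially the same route as the paper's proof: both bound the suboptimality of the noisy argmin by $\frac{2}{m}\max_{1\leq n\leq d}\bigl\lvert\sum_{i=1}^m \xi_{i,n}\bigr\rvert$ via the optimality inequality of the noisy minimizer, then apply \Cref{lemma:sum of Lap} and substitute $\lambda_{i,j,s}$. The only differences are notational (you separate the true and noisy argmin indices explicitly, and make explicit that the linear minimum over $\Delta_d$ is attained at a vertex), which if anything makes the argument slightly cleaner.
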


\begin{proof}

Since $\bar{w}_{p,k}=\arg\min_{c_n: 1\leq n\leq d}\left[\frac{1}{m}\sum_{i=1}^m \big( \langle c_n,v_{i,p,k} \rangle + \xi_{i,n} \big)\right]$, where $\xi_{i,n}\sim \text{Lap}(\lambda_{i,j,s})$, we denote $\bar{w}_{p,k}$ as $c_{n^\star}$ and we have

\begin{equation*}
    \begin{split}
        \langle\bar{w}_{p,k},\bar{v}_{p,k}\rangle& =\langle c_{n^\star},\bar{v}_{p,k}\rangle\\
        & = \underset{n: 1\leq n\leq d}{\min}\left(\langle c_{n},\bar{v}_{p,k}\rangle+\frac{1}{m}\sum_{i=1}^m\xi_{i,n}\right)-\frac{1}{m}\sum_{i=1}^m\xi_{i,n^\star}\\
        & \leq \underset{n: 1\leq n\leq d}{\min}\langle c_{n},\bar{v}_{p,k}\rangle+\frac{1}{m}\underset{n: 1\leq n\leq d}{\max}\sum_{i=1}^m\xi_{i,n}-\frac{1}{m}\underset{n: 1\leq n\leq d}{\min}\sum_{i=1}^m\xi_{i,n}\\
        & \leq \underset{n: 1\leq n\leq d}{\min}\langle c_{n},\bar{v}_{p,k}\rangle+\frac{2}{m}\underset{n: 1\leq n\leq d}{\max}\Big\lvert\sum_{i=1}^m \xi_{i,n}\Big\rvert.
    \end{split}
\end{equation*}

Applying \Cref{lemma:sum of Lap}, we get
$$\mathbb{E}[\langle\bar{v}_{p,k},\bar{w}_{p,k}\rangle]\leq\mathbb{E}\left[\underset{w\in \mathcal{X}}{\min}\langle\bar{v}_{p,k},w\rangle\right]+O\left(\frac{\lambda_{i,j,s}\ln d}{\sqrt{m}}\right).$$
\end{proof}

With the lemmas above, we are ready to prove \Cref{theorem: stochastic}.
\allowdisplaybreaks
\begin{proof}
    
\Cref{lemma: pure DP stochastic} implies the claim about privacy. We proceed to prove the regret upper bound.
\begin{align*}
        L_{t}(x_{i,p,k+1}) & \overset{(a)}\leq L_{t}(x_{i,p,k})+\langle\nabla L_{t}(x_{i,p,k}),x_{i,p,k+1}-x_{i,p,k}\rangle+\beta \frac{\Vert x_{i,p,k+1}-x_{i,p,k}\Vert_1^2}{2}\\
        & \overset{(b)}\leq L_{t}(x_{i,p,k})+\eta_{i,p,k}\langle\nabla L_{t}(x_{i,p,k}),\bar{w}_{p,k}-x_{i,p,k}\rangle+\frac{1}{2}\beta {\eta_{i,p,k}}^2\\   
        & =  L_{t}(x_{i,p,k})+\eta_{i,p,k}\langle\nabla L_{t}(x_{i,p,k}),x^\star-x_{i,p,k}\rangle+\eta_{i,p,k}\langle\nabla L_{t}(x_{i,p,k})-\bar{v}_{p,k},\bar{w}_{p,k}-x^\star\rangle\\& \quad+\eta_{i,p,k}\langle\bar{v}_{p,k},\bar{w}_{p,k}-x^\star\rangle +\frac{1}{2}\beta {\eta_{i,p,k}}^2 \\
        & \overset{(c)}\leq L_{t}(x_{i,p,k})+\eta_{i,p,k}[L_{t}(x^\star)-L_{t}(x_{i,p,k})]+\eta_{i,p,k}\Vert \nabla L_{t}(x_{i,p,k})-\bar{v}_{p,k}\Vert_\infty+\eta_{i,p,k}(\langle\bar{v}_{p,k},\bar{w}_{p,k}\rangle\\&\quad-\underset{w\in \mathcal{X}}{\min}\langle\bar{v}_{p,k},w\rangle) +\frac{1}{2}\beta {\eta_{i,p,k}}^2,
\end{align*}
where $(a)$ is due to $\beta$-smoothness, $(b)$ is because of the updating rule of $x_{i,p,k}$, and $(c)$ follows from the convexity of the loss function and H\"{o}lder's inequality.

Subtracting $L_{t}(x^\star)$ from each side and taking expectations, we have
\begin{equation*}
    \begin{split}
        \Eb[L_{t}(x_{i,p,k+1})-L_{t}(x^\star)]&\leq (1-\eta_{i,p,k})\Eb[L_{t}(x_{i,p,k})-L_{t}(x^\star)]+\eta_{i,p,k}\Eb[\Vert \nabla L_{t}(x_{i,p,k})-\bar{v}_{p,k}\Vert_\infty]\\& \quad+\eta_{i,p,k}\Eb[\langle\bar{v}_{p,k},\bar{w}_{p,k}\rangle-\underset{w\in \mathcal{X}}{\min}\langle\bar{v}_{p,k},w\rangle] +\frac{1}{2}\beta {\eta_{i,p,k}}^2.
    \end{split}
\end{equation*}    
Applying  \Cref{lemma: bound vw} and \Cref{lemma: bound v and gradient}, we have
\begin{equation*}
    \begin{split}
        \Eb[L_{t}(x_{i,p,k+1})-L_{t}(x^\star)]&\leq (1-\eta_{i,p,k})\Eb[L_{t}(x_{i,p,k})-L_{t}(x^\star)]+\eta_{i,p,k}(\alpha+\beta )O\left(\sqrt{\frac{\log d}{bm}}\right)\\& \quad+\frac{\eta_{i,p,k}\alpha2^j}{b\varepsilon}O\left(\frac{\ln d}{\sqrt{m}}\right) +\frac{1}{2}\beta {\eta_{i,p,k}}^2.
    \end{split}
\end{equation*}

Let $\alpha_k = \eta_{i,p,k}(\alpha+\beta )O\left(\sqrt{\frac{\log d}{bm}}\right)+\frac{\eta_{i,p,k}\alpha2^j}{b\varepsilon}O\left(\frac{\ln d}{\sqrt{m}}\right) +\frac{1}{2}\beta {\eta_{i,p,k}}^2$. We simplify the notion of $\eta_{i,p,k}$ to $\eta_k$. Then we have
\begin{equation*}
    \begin{split}
        \Eb[L_{t}(x_{i,p,k})-L_{t}(x^\star)]&\leq \sum_{k'=1}^{K}\alpha_k\prod_{i>k}^{K-1}(1-\eta_{k'})\\
        &=\sum_{k=1}^{K}\alpha_k\frac{(k+1)k}{K(K-1)}\\
        &\leq \sum_{k=1}^{K}\alpha_k\frac{(k+1)^2}{(K-1)^2},
    \end{split}
\end{equation*}
where $\eta_k=\frac{2}{k+1}$.

Since $K=2^{T_1}$, simply algebra implies that
\begin{equation}
    \begin{split}
        \Eb[L_{t}(x_{i,p,K})-L_{t}(x^\star)] \le O\Bigg((\alpha+\beta )\sqrt{\frac{\log d}{bm}}+\frac{\alpha2^{T_1}\ln d}{b\varepsilon\sqrt{m}} + \frac{\beta }{2^{T_1}}\Bigg). \label{equ:sto}
    \end{split}
\end{equation}

At iteration $2^{p-1}\leq t< 2^p$, setting $b=\frac{2^{p-1}}{(p-1)^2}$ and $T_1=\frac{1}{2}\log \left(\frac{b\varepsilon \beta \sqrt{m}}{\alpha \log d} \right)$ in \Cref{equ:sto}, we have
$$\Eb[L_{t}(x_{i,p,K})-L_{t}(x^\star)]\le O\Bigg((\alpha+\beta )p\sqrt{\frac{\log d}{2^{p}m}}+\frac{p\sqrt{\alpha\beta}  \log d}{m^{\frac{1}{4}}\sqrt{2^{p}\varepsilon }}\Bigg).$$
Therefore, the total regret from time step $2^p$ to $2^{p+1}-1$ is at most
$$\Eb\left[\sum_{t=2^p}^{2^{p+1}-1}L_{t}(x_{i,t})-\underset{u\in \mathcal{X}}{\min}\sum_{t=2^p}^{2^{p+1}-1}L_t(u)\right]\le O\Bigg((\alpha+\beta )p\sqrt{\frac{2^{p}\log d}{m}}+\frac{p\sqrt{\alpha\beta}  \log d \sqrt{2^p}}{m^{\frac{1}{4}}\sqrt{\varepsilon }}\Bigg).$$
Summing over $p$, we can get

\begin{equation*}
    \begin{split}
        \Eb\left[\sum_{t=1}^{T}L_{t}(x_{i,t})-\underset{u\in \mathcal{X}}{\min}\sum_{t=1}^{T}L_t(u)\right]&\le \sum_{p=1}^{\log T} O\Bigg((\alpha+\beta )p\sqrt{\frac{2^{p}\log d}{m}}+\frac{p\sqrt{\alpha\beta}  \log d \sqrt{2^p}}{m^{\frac{1}{4}}\sqrt{\varepsilon }} \Bigg)\\
        &\le O\Bigg((\alpha+\beta )\sum_{p=1}^{\log T}p\sqrt{\frac{2^{p}\log d}{m}}+ \frac{\sqrt{\alpha\beta}  \log d }{m^{\frac{1}{4}}\sqrt{\varepsilon }}\sum_{p=1}^{\log T} {p\sqrt{2^p}}\Bigg)\\
        &\le O\Bigg((\alpha+\beta )\log T\sqrt{\frac{T\log d}{m}}+\frac{\sqrt{\alpha\beta}  \log d \sqrt{T}\log T}{m^{\frac{1}{4}}\sqrt{\varepsilon }}\Bigg).
    \end{split}
\end{equation*}
Now we turn our focus to communication cost. Since there are $\log T$ phases, and within each phase, there are $O(2^{T_1})$ leaf vertices where communication is initiated, the communication frequency scales in $O( \sum_p 2^{T_1} )$. Therefore, the communication cost scales in $O(m^{5/4}d\sqrt{T\varepsilon\beta/(\alpha\log d)})$.
\end{proof}

\begin{corollary}[Restatement of \Cref{corollary:stoch_smallbeta}]
    If $\beta = O( \frac{\log d}{\sqrt{m}T\varepsilon} )$, then, Fed-DP-OPE-Stoch (i) satisfies $\varepsilon$-DP and (ii) achieves the per-client regret of
\[ \text{Reg}(T,m) = O\Bigg(\sqrt{\frac{T\log d}{m}}+\frac{  \log T \log d}{\sqrt{m}\varepsilon }\Bigg),\]
with (iii) a communication cost of $O\left(md \log T\right)$. 
\end{corollary}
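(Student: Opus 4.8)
The plan is to avoid specializing the closed-form bound of \Cref{theorem: stochastic formal} and instead return to the per-iteration estimate \Cref{equ:sto} from its proof, which says that for each $t$ in phase $p$, $\Eb[L_t(x_{i,p,K})-L_t(x^\star)] = O\big((\alpha+\beta)\sqrt{\log d/(bm)}+\alpha 2^{T_1}\ln d/(b\varepsilon\sqrt m)+\beta/2^{T_1}\big)$. Direct substitution into the final theorem bound is both invalid and lossy: once $\beta=O(\log d/(\sqrt m T\varepsilon))$, the prescribed depth $T_1=\tfrac12\log\big(b\varepsilon\beta\sqrt m/(\alpha\log d)\big)$ becomes non-positive, and plugging $\beta$ in would leave a spurious $\log T$ in the leading term and a $(\log d)^{3/2}$ in the privacy term. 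The key idea is to exploit the smallness of $\beta$ by choosing a shallow tree and an undamped batch.

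Concretely, I would set $T_1=1$ and $b=\Theta(2^{p})$ (a constant fraction of the available batch $|\mathcal B_{i,p}|=2^{p-2}$), dropping the $(p-1)^2$ damping that \Cref{theorem: stochastic formal} needed only to control its nontrivial bias term $\beta/2^{T_1}$. With $T_1=1$ the DP prerequisite $2^{T_1}\le b$ of \Cref{lemma: pure DP stochastic} holds for all but the first couple of (negligible) phases, so the $\varepsilon$-DP guarantee carries over unchanged. Substituting these choices into \Cref{equ:sto} collapses the per-step bound to $O\big(\sqrt{\log d/(2^p m)}+\log d/(2^p\varepsilon\sqrt m)+\beta\big)$.

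It then remains to multiply by the $2^{p-1}$ steps of phase $p$ and sum over $p\le\log T$. The variance term yields $\sum_p\sqrt{2^p\log d/m}=O(\sqrt{T\log d/m})$, a clean geometric sum with no $p$-weighting, which is precisely what removes the stray $\log T$ present in \Cref{theorem: stochastic formal}; the privacy term contributes $O(\log d/(\varepsilon\sqrt m))$ per phase and hence $O(\log T\log d/(\sqrt m\varepsilon))$ overall; and the smoothness term sums to $T\beta/2=O(\log d/(\sqrt m\varepsilon))$, which is absorbed into the privacy term. Adding these gives the claimed $O\big(\sqrt{T\log d/m}+\log T\log d/(\sqrt m\varepsilon)\big)$. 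The communication cost then follows from the accounting in \Cref{theorem: stochastic formal}: there are $\log T$ phases with $O(2^{T_1})=O(1)$ leaf vertices each, every leaf triggering $O(md)$ scalars, for a total of $O(md\log T)$.

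The main obstacle is conceptual rather than computational: one must recognize that the corollary is not a mechanical corollary of the theorem's stated bound, but requires retuning the pair $(T_1,b)$ for the small-$\beta$ regime so that the leading geometric sum loses its $p$-weighting and the bias term $\beta/2^{T_1}$ becomes harmless. Once that retuning is in place, every remaining step is a routine geometric summation.
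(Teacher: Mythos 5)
Your proposal is correct and follows essentially the same route as the paper's own proof: the paper likewise returns to the per-iterate bound, retunes to $T_1=1$ and $b=2^{p-1}$ (dropping the $(p-1)^2$ damping), absorbs the now-constant $\beta$ term using $\beta = O\big(\tfrac{\log d}{\sqrt{m}T\varepsilon}\big)$, and performs the same geometric summation over phases, with the DP and communication claims inherited from \Cref{theorem: stochastic}. Your additional observation that the theorem's prescribed $T_1$ degenerates in this regime is a fair diagnosis of why the corollary requires retuning rather than mechanical substitution.
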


\begin{proof}
    Similar to the proof of \Cref{theorem: stochastic}, we can show that
    \begin{equation}
    \begin{split}
        \Eb[L_{t}(x_{i,p,K})-L_{t}(x^\star)] \le O\Bigg((\alpha+\beta )\sqrt{\frac{\log d}{bm}}+\frac{\alpha2^{T_1}\ln d}{b\varepsilon\sqrt{m}} + \frac{\beta }{2^{T_1}}\Bigg). \label{equ:sto_corollary}
    \end{split}
\end{equation}
At iteration $2^{p-1}\leq t< 2^p$, setting $b=2^{p-1}$ and $T_1=1$ in \Cref{equ:sto_corollary}, we have
$$\Eb[L_{t}(x_{i,p,K})-L_{t}(x^\star)]\le O\Bigg((\alpha+\beta )\sqrt{\frac{\log d}{2^{p}m}}+\frac{\alpha \ln d}{2^{p}\varepsilon\sqrt{m}} + \beta \Bigg).$$
Since $\beta = O( \frac{\log d}{\sqrt{m}T\varepsilon} )$, we have
$$\Eb[L_{t}(x_{i,p,K})-L_{t}(x^\star)]\le O\Bigg((\alpha+\beta )\sqrt{\frac{\log d}{2^{p}m}}+\frac{\alpha \ln d}{2^{p}\varepsilon\sqrt{m}}\Bigg).$$
Therefore, the total regret from time step $2^p$ to $2^{p+1}$ is at most
$$\Eb\left[\sum_{t=2^p}^{2^{p+1}-1}L_{t}(x_{i,t})-\underset{u\in \mathcal{X}}{\min}\sum_{t=2^p}^{2^{p+1}-1}L_t(u)\right]\le O\Bigg(\sqrt{\frac{2^{p}\log d}{m}}+\frac{ \log d}{\sqrt{m}\varepsilon} \Bigg).$$
Summing over $p$, we can get

\begin{equation*}
    \begin{split}
        \Eb\left[\sum_{t=1}^{T}L_{t}(x_{i,t})-\underset{u\in \mathcal{X}}{\min}\sum_{t=1}^{T}L_t(u)\right]&\le \sum_{p=1}^{\log T} O\Bigg(\sqrt{\frac{2^{p}\log d}{m}}+\frac{\alpha \log d}{\sqrt{m}\varepsilon} \Bigg)\\
        &\le O\Bigg(\sum_{p=1}^{\log T}\sqrt{\frac{2^{p}\log d}{m}}+\frac{ \log d\log T}{\sqrt{m}\varepsilon}\Bigg)\\
        &\le O\Bigg(\sqrt{\frac{T\log d}{m}}+\frac{\log d\log T}{\sqrt{m}\varepsilon}\Bigg).
    \end{split}
\end{equation*}

The proof of the DP guarantee and communication costs follows from the proof of \Cref{theorem: stochastic}.
\end{proof}

\subsection{\Cref{theorem: stochastic approx} (for approximate DP)}
\begin{theorem}\label{theorem: stochastic approx}
Let $\delta \leq 1/T$. Assume that loss function $l_{i,t}(\cdot)$ is convex, $\alpha$-Lipschitz, $\beta$-smooth w.r.t. $\Vert \cdot\Vert_1$. 
Assume 
$\varepsilon \leq \frac{ (\alpha \log (2^{p-1}/\delta)\log d)^{1/4} \sqrt{p-1}}{(\beta 2^{p-1})^{1/4}}$. Setting $\lambda_{i,j,s}=\frac{\alpha2^{T_1 /2}\log (2^{p-1}/\delta)}{b\varepsilon}$
, $b = \frac{2^{p-1}}{(p-1)^2}$, and $T_1 = \frac{2}{3}\log \left( \frac{b \varepsilon \sqrt{m}\beta}{\alpha \log (2^{p-1}/\delta)\log d}\right)$, Fed-DP-OPE-Stoch is $(\varepsilon, \delta)$-DP, the communication cost scales in $O\left(md 2^{T_1}\log T\right)$ and the per-client regret is upper bounded by
 \[O\Bigg((\alpha+\beta )\log T\sqrt{\frac{T\log d}{m}}+\frac{\alpha^{2/3}\beta^{1/3}   \log^{2/3}( d) \log ^{2/3}(1/\delta)T^{1/3}\log^{4/3}  (T) }{m ^{1/3}\varepsilon^{2/3}}\Bigg).\]
\end{theorem}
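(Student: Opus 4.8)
The plan is to mirror the proof of \Cref{theorem: stochastic} almost verbatim, changing only the privacy accounting (basic composition $\to$ advanced composition) and re-optimizing the noise scale and tree depth $T_1$ accordingly. First I would revisit \Cref{lemma: pure DP stochastic}. Exactly as there, a changed sample $l_{i_1,t_1}$ lives in a single phase $p_1$ and, inside a DP-FW tree $j$, in a batch $\mathcal{B}_{i_1,j,s}$ of size $2^{-\lvert s\rvert}b$; it therefore influences the released inner products $\langle c_n, v_{i_1,p_1,k}\rangle$ for at most the $2^{j-\lvert s\rvert}$ descendant leaves, each with $\ell_1$-sensitivity $O(\alpha 2^{\lvert s\rvert}/b)$. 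The only substantive change is that instead of splitting the budget as $\varepsilon/2^{j-\lvert s\rvert}$ per leaf via basic composition, I would invoke the advanced composition theorem \citep{dwork2014algorithmic}: the $2^{j-\lvert s\rvert}$-fold composition of $(\varepsilon_0,0)$-DP releases is $(\sqrt{2\cdot 2^{j-\lvert s\rvert}\ln(1/\delta')}\,\varepsilon_0 + 2^{j-\lvert s\rvert}\varepsilon_0(e^{\varepsilon_0}-1),\,\delta')$-DP. This lets each leaf tolerate roughly a $\sqrt{2^{j-\lvert s\rvert}}$ factor more noise than under basic composition, which is precisely what converts the pure-DP scale $2^j$ into $2^{T_1/2}$ and introduces the extra $\log(2^{p-1}/\delta)$ factor in the prescribed $\lambda_{i,j,s}$. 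I would then close by post-processing, since $\bar w_{j,s}$ and the final iterates $x_{i,p,K}$ are functions of the noisy inner products, exactly as in \Cref{lemma: pure DP stochastic}.

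The regret analysis reuses the recursion leading to \Cref{equ:sto} unchanged through Step 1 and Step 2: \Cref{lemma: bound v and gradient} is untouched, and \Cref{lemma: bound vw} together with \Cref{lemma:sum of Lap} already express the privacy penalty as $O(\lambda_{i,j,s}\log d/\sqrt m)$ for a generic Laplace scale. Substituting the new $\lambda_{i,j,s}=\frac{\alpha 2^{T_1/2}\log(2^{p-1}/\delta)}{b\varepsilon}$ yields the per-step bound
\[\Eb[L_t(x_{i,p,K})-L_t(x^\star)] \le O\Bigg((\alpha+\beta)\sqrt{\frac{\log d}{bm}} + \frac{\alpha 2^{T_1/2}\log(2^{p-1}/\delta)\log d}{b\varepsilon\sqrt m} + \frac{\beta}{2^{T_1}}\Bigg),\]
in which the privacy term now grows like $2^{T_1/2}$ while the Frank--Wolfe optimization error still decays like $2^{-T_1}$.

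Balancing these two competing terms gives $2^{3T_1/2}\asymp \frac{b\varepsilon\sqrt m\,\beta}{\alpha\log(2^{p-1}/\delta)\log d}$, i.e. exactly $T_1=\frac{2}{3}\log\!\big(\frac{b\varepsilon\sqrt m\,\beta}{\alpha\log(2^{p-1}/\delta)\log d}\big)$; at this value the combined privacy-plus-optimization error is $O\big(\alpha^{2/3}\beta^{1/3}\log^{2/3}(1/\delta)\log^{2/3} d\,(b\varepsilon\sqrt m)^{-2/3}\big)$. Plugging in $b=2^{p-1}/(p-1)^2$, multiplying by the $2^{p-1}$ time steps of phase $p$, and summing the resulting series over $p\le\log T$ (dominated by the final phase, where $2^{(p-1)/3}\asymp T^{1/3}$ and $(p-1)^{4/3}\asymp\log^{4/3}T$) reproduces the second stated term, while the statistical term sums to $O((\alpha+\beta)\log T\sqrt{T\log d/m})$ exactly as in \Cref{theorem: stochastic}. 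The communication cost is structurally identical: $\log T$ phases, each with $O(2^{T_1})$ leaf communications of $md$ scalars, giving $O(md\,2^{T_1}\log T)$.

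The hard part will be the privacy bookkeeping rather than the regret algebra. Under advanced composition the effective contribution of a vertex $(j,s)$ scales like $(\text{sensitivity})\cdot\sqrt{\#\text{leaves}}\asymp (2^{\lvert s\rvert}/b)\cdot 2^{(j-\lvert s\rvert)/2}$, which depends on the depth $\lvert s\rvert$, whereas the prescribed $\lambda_{i,j,s}$ pins the noise to a single worst-case depth via $2^{T_1/2}$. I expect the main effort to be checking that this uniform noise choice dominates the depth-dependent requirement at \emph{every} vertex simultaneously, and that the per-component parameter $\varepsilon_0$ stays in the regime $\varepsilon_0(e^{\varepsilon_0}-1)=O(\varepsilon_0^2)$ where advanced composition is tight. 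This is exactly what the hypothesis $\varepsilon\le (\alpha\log(2^{p-1}/\delta)\log d)^{1/4}\sqrt{p-1}/(\beta 2^{p-1})^{1/4}$ enforces, since by $2^{p-1}=b(p-1)^2$ it is equivalent to $\beta b\varepsilon^4 \le \alpha\log(2^{p-1}/\delta)\log d$, a condition I would verify is precisely the one bounding $\varepsilon_0$ at the optimal $T_1$.
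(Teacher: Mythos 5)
Your proposal follows essentially the same route as the paper: the paper's own proof simply invokes Lemma~4.4 of \citet{asi2021private} (which is exactly the advanced-composition calibration you re-derive, yielding the $2^{T_1/2}$ noise scale and the $\log(2^{p-1}/\delta)$ factor, with the hypothesis on $\varepsilon$ playing the role of their condition $\varepsilon\le\sqrt{2^{-T_1}\log(1/\delta)}$) and then states that the regret bound ``follows using similar arguments'' to \Cref{theorem: stochastic}. Your balancing of the $2^{T_1/2}$ privacy term against the $\beta/2^{T_1}$ Frank--Wolfe term, the resulting choice of $T_1$, the phase-by-phase summation, and the communication count all reproduce exactly the computation the paper leaves implicit, so the proposal is correct and matches the paper's argument.
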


\begin{proof}

We apply the following lemma to prove privacy in this setting.

\begin{lemma}[\citet{asi2021private}, Lemma 4.4]
   Let $b \geq 2^{T_1}$, $\delta \leq 1/T$ and $\varepsilon \leq \sqrt{2^{-T_1}\log (1/\delta)}$. Setting $\lambda_{i,j,s}=\frac{\alpha2^{T_1 /2}\log (2^{p-1}/\delta)}{b\varepsilon}$, we have Fed-DP-OPE is $(O(\varepsilon),O(\delta))$-DP.
\end{lemma}

\Cref{theorem: stochastic approx} then follows using similar arguments to the proof of \Cref{theorem: stochastic}.    
\end{proof}

\subsection{Extension to Central DP setting}\label{appendix:cdp stochastic}

In this section, we extend our Fed-DP-OPE-Stoch algorithm to the setting where client-server communication is secure with some modifications, achieving better utility performance.

We present the algorithm design in \Cref{alg:stochastic_client_cdp} and \Cref{alg:stochastic_server_cdp}. We change the local privatization process to a global privatization process on the server side. Specifically, in Line~\ref{algline:send_gradient}, when communication is triggered, each client sends $\{\langle c_n,v_{i,j,s} \rangle\}_{n\in [d]}$ to the server, where $c_1,\ldots,c_d$ represents $d$ vertices of decision set $\mathcal{X}=\Delta_d$. In Line~\ref{algline:w_cdp}, after receiving $\{\langle c_n,v_{i,j,s} \rangle \}_{n\in [d]}$ from all clients, the central server privately predicts a new expert:
\begin{equation}
    \bar{w}_{j,s}=\underset{c_n: 1\leq n\leq d}{\arg\min}\left[\frac{1}{m}\sum_{i=1}^m  \langle c_n,v_{i,j,s} \rangle  +\zeta_n\right],
\end{equation}
where $\zeta_n\sim \text{Lap}(\mu_{j,s})$ and $\mu_{j,s}=\frac{4\alpha2^j}{bm\varepsilon}$. Other components remain the same.

\begin{algorithm}[h] 
\caption{Fed-DP-OPE-Stoch (CDP): Client $i$}
\label{alg:stochastic_client_cdp}
\begin{algorithmic}[1]

\STATE   {\bf Input:} {Phases $P$, trees $T_1$, decision set $\mathcal{X}=\Delta_d$ with vertices $\{c_1, \ldots, c_d\}$, batch size $b$.}

\STATE {\bf Initialize:} Set $x_{i,1} = z \in \mathcal{X}$ and pay cost $l_{i,1}(x_{i,1})$.

\FOR{$p=2$ to $P$}    
    \STATE Set $\mathcal{B}_{i,p}=\{l_{i,2^{p-2}},\ldots,l_{i,2^{p-1}-1}\}$
    \STATE Set $k = 1$ and $x_{i,p,1}= x_{i,p-1,K}$

    \STATE $\{v_{i,j,s} \}_{j\in [T_1], s\in \{0,1\} ^ {\leq j}} = \text{DP-FW}\left(\mathcal{B}_{i,p}, T_1,b\right)$
    \FORALL{leaf vertices $s$ reached in DP-FW}
    \STATE \textcolor{blue}{\textbf{Communicate to server:}} $\{\langle c_n,v_{i,j,s} \rangle \}_{n\in [d]}$\label{algline:send_gradient}

            \STATE \textcolor{blue}{\textbf{Receive from server:}} $\bar{w}_{j,s}$ 

            \STATE Update $x_{i,p,k}$ according to \Cref{equ:x}

            \STATE Update $k = k+1$
    
    \ENDFOR

        \STATE Final iterate outputs $x_{i,p,K}$

        \FOR{$t=2^{p-1}$ to $2^p -1$}

            \STATE Receive loss $l_{i,t}:\mathcal{X}\rightarrow \mathbb{R}$ and pay cost $l_{i,t}(x_{i,p,K})$

        \ENDFOR

\ENDFOR   

\end{algorithmic}
\end{algorithm}

\begin{algorithm}[h]
\caption{Fed-DP-OPE-Stoch (CDP): Central server}
\label{alg:stochastic_server_cdp}
\begin{algorithmic}[1]

\STATE   {\bf Input:} {Phases $P$, number of clients $m$, decision set $\mathcal{X}=\Delta_d$ with vertices $\{c_1,\ldots,c_d\}$.}

\STATE {\bf Initialize:} Pick any $z\in \mathcal{X}$ and broadcast to clients.

\FOR{$p=2$ to $P$}    
        \STATE {\textcolor{blue}{\textbf{Receive from clients:}} $\{\langle c_n,v_{i,j,s} \rangle\}_{n\in [d]}$}

        \STATE $\bar{w}_{j,s}=\underset{c_n: 1\leq n\leq d}{\arg\min}\left[\frac{1}{m}\sum_{i=1}^m  \langle c_n,v_{i,j,s} \rangle  +\zeta_n\right]$\label{algline:w_cdp}
        
        \STATE \textcolor{blue}{\textbf{Communicate to clients:}} $\bar{w}_{j,s}$
\ENDFOR   

\end{algorithmic}
\end{algorithm}

Now we present the theoretical guarantees in this setting.

\begin{theorem}\label{theorem: stochastic_cdp}
Assume that loss function $l_{i,t}(\cdot)$ is convex, $\alpha$-Lipschitz, $\beta$-smooth w.r.t. $\Vert \cdot\Vert_1$.  
Fed-DP-OPE-Stoch (i) satisfies $\varepsilon$-DP and (ii) achieves the per-client regret of
\[O\Bigg((\alpha+\beta )\log T\sqrt{\frac{T\log d}{m}}+\frac{\sqrt{\alpha\beta}  \log d \sqrt{T}\log T}{\sqrt{m}\sqrt{\varepsilon }}\Bigg),\]
with (iii) a communication cost of $O\left(m^{\frac{3}{2}}d\sqrt{\frac{T\varepsilon\beta }{\alpha \log d}}\right)$.


\end{theorem}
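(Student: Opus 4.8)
The plan is to mirror the proof of \Cref{theorem: stochastic}, changing only the two places where the privatization enters: the sensitivity calculation in the DP analysis, and the Laplace-noise bound that feeds into \Cref{lemma: bound vw}. The one structural observation driving everything is that, because the server adds noise $\zeta_n$ to the \emph{averaged} inner products $\frac{1}{m}\sum_i \langle c_n, v_{i,j,s}\rangle$ rather than having each client perturb its own contribution, the relevant sensitivity shrinks by a factor of $m$. This is what permits the noise scale $\mu_{j,s}=\frac{4\alpha 2^j}{bm\varepsilon}$ to be set a factor of $m$ smaller than $\lambda_{i,j,s}=\frac{4\alpha 2^j}{b\varepsilon}$ used in the local scheme.

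For the privacy guarantee I would reuse the bookkeeping of \Cref{lemma: pure DP stochastic} verbatim up to the sensitivity step: a changed loss $l_{i_1,t_1}$ lives in a single phase $p_1$ and a single subset $\mathcal{B}_{i_1,j,s}$ of size $2^{-|s|}b$, hence influences at most $2^{j-|s|}$ descendant leaves. The only difference is that the released quantity per leaf is now $\frac{1}{m}\sum_i\langle c_n, v_{i,j,s}\rangle+\zeta_n$, whose sensitivity to changing one client's sample is $\frac{1}{m}\cdot\frac{4\alpha}{2^{-|s|}b}=\frac{4\alpha 2^{|s|}}{bm}$. With $\mu_{j,s}=\frac{4\alpha 2^j}{bm\varepsilon}$, the Laplace mechanism makes each affected iteration $\frac{\varepsilon}{2^{j-|s|}}$-DP, and composing over the at most $2^{j-|s|}$ such iterations together with post-processing yields $\varepsilon$-DP, exactly as before.

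For the regret, \Cref{lemma: bound v and gradient} (and the underlying sub-Gaussian argument of \Cref{lemma:v and gradient subgaussian}) is untouched, since it concerns only the gradient-estimation error and never references the privatization noise. The modification is confined to \Cref{lemma: bound vw}: here $\bar{w}_{p,k}=\arg\min_{c_n}\big[\frac{1}{m}\sum_i\langle c_n, v_{i,p,k}\rangle+\zeta_n\big]$, so the suboptimality is bounded by $2\max_n|\zeta_n|$, the maximum of $d$ IID $\mathrm{Lap}(\mu_{j,s})$ variables, whose expectation is $O(\mu_{j,s}\ln d)$ by a standard single-Laplace tail bound (in place of the $m$-term sum handled by \Cref{lemma:sum of Lap}). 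This replaces the earlier $O(\lambda_{i,j,s}\ln d/\sqrt{m})$ contribution by $O(\mu_{j,s}\ln d)=O(\alpha 2^j\ln d/(bm\varepsilon))$, an improvement of $\sqrt{m}$. Propagating this through the recursion exactly as in \Cref{equ:sto} gives the per-step bound $O\big((\alpha+\beta)\sqrt{\log d/(bm)}+\alpha 2^{T_1}\ln d/(bm\varepsilon)+\beta/2^{T_1}\big)$.

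The final step is retuning. Balancing the last two per-step terms now yields $2^{T_1}=\sqrt{b\varepsilon\beta m/(\alpha\log d)}$, a factor $m^{1/4}$ larger than in the local case; substituting $b=2^{p-1}/(p-1)^2$ and summing over the $\log T$ phases produces the stated regret, in which the privacy term improves from $m^{-1/4}$ to $m^{-1/2}$. The communication cost follows from the frequency $O(md\sum_p 2^{T_1})$, which with the larger $T_1$ now scales as $O\big(m^{3/2}d\sqrt{T\varepsilon\beta/(\alpha\log d)}\big)$, reflecting the expected tradeoff that the central scheme communicates a factor $m^{1/4}$ more while attaining a factor $m^{1/4}$ smaller regret. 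I expect no genuinely hard step here: the two substantive edits are the sensitivity-reduction bookkeeping and the single-Laplace maximum bound, and the only subtlety worth double-checking is that relocating the noise to the server leaves the sub-Gaussian gradient analysis of \Cref{lemma:v and gradient subgaussian} intact, which it does precisely because that argument is independent of the privatization.
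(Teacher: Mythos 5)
Your proposal is correct and follows essentially the same route as the paper: identical privacy bookkeeping with the sensitivity of the averaged inner products reduced by a factor of $m$, the replacement of \Cref{lemma: bound vw} by a single-Laplace maximum bound of $O(\mu_{j,s}\ln d)$ (the paper's \Cref{lemma: bound vw cdp}), and the same retuning of $T_1$ to $\frac{1}{2}\log\left(\frac{b\varepsilon\beta m}{\alpha\log d}\right)$. The only cosmetic difference is that the paper justifies the per-leaf privacy via the report-noisy-max guarantee on the released argmin $\bar{w}_{j,s}$ rather than by releasing the $d$ noisy averages and post-processing, which is the cleaner formalization since only the selected index leaves the server.
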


To prove the theorem, we start with \Cref{lemma: pure DP stochastic cdp} that gives pure privacy guarantees.

\begin{lemma}\label{lemma: pure DP stochastic cdp}
Assume that $2^{T_1} \leq b$. Setting  $\mu_{j,s}=\frac{4\alpha2^j}{bm\varepsilon}$, Fed-DP-OPE-Stoch is $\varepsilon$-DP.

\end{lemma}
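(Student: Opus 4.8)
The plan is to mirror the proof of \Cref{lemma: pure DP stochastic} almost verbatim, changing only the sensitivity bookkeeping to reflect that in the central setting the Laplace noise is injected once, on the server, into the \emph{averaged} inner product $\frac{1}{m}\sum_{i=1}^m\langle c_n, v_{i,j,s}\rangle$, rather than separately into each client's contribution. I would begin by fixing two neighboring datasets $\mathcal{S}$ and $\mathcal{S}'$ differing in a single loss $l_{i_1,t_1}$ (or $l'_{i_1,t_1}$) with $2^{p_1-1}\le t_1 < 2^{p_1}$. Exactly as in the local case, this sample influences the computation only in phase $p_1$, only at client $i_1$, and only through the unique batch $\mathcal{B}_{i_1,j,s}$ of size $2^{-|s|}b$ that contains it; consequently it affects $v_{i_1,p_1,k}$ for at most the $2^{j-|s|}$ leaf iterates descending from vertex $(j,s)$. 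The assumption $2^{T_1}\le b$ guarantees every batch size $2^{-|s|}b\ge 2^{-T_1}b\ge 1$, so the sensitivity bound below is well defined.

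The single new computation is the sensitivity of the server's released statistic. Since changing the differing sample alters only $v_{i_1,j,s}$ and leaves $v_{i',j,s}$ untouched for $i'\ne i_1$, the change in the averaged inner product is
\begin{equation*}
\Big|\tfrac{1}{m}\sum_{i=1}^m\langle c_n, v_{i,j,s}\rangle - \tfrac{1}{m}\sum_{i=1}^m\langle c_n, v'_{i,j,s}\rangle\Big| = \tfrac{1}{m}\big|\langle c_n, v_{i_1,j,s}-v'_{i_1,j,s}\rangle\big| \le \tfrac{1}{m}\cdot\tfrac{4\alpha}{2^{-|s|}b} = \tfrac{4\alpha 2^{|s|}}{bm}.
\end{equation*}
This is a factor $1/m$ smaller than the per-client sensitivity used in \Cref{lemma: pure DP stochastic}, which is precisely why $\mu_{j,s}=\lambda_{i,j,s}/m=\frac{4\alpha 2^j}{bm\varepsilon}$ suffices. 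Plugging this $\ell_1$-sensitivity and noise scale into the Laplace mechanism guarantee of \citet{dwork2014algorithmic}, each leaf iterate's released vector $\{\frac{1}{m}\sum_i\langle c_n,v_{i,j,s}\rangle+\zeta_n\}_{n\in[d]}$ is $\frac{\varepsilon}{2^{j-|s|}}$-DP.

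I would then compose exactly as before: the differing sample touches at most $2^{j-|s|}$ such leaf iterates, so by basic composition the collection of all server-side noisy statistics depending on $l_{i_1,t_1}$ is $2^{j-|s|}\cdot\frac{\varepsilon}{2^{j-|s|}}=\varepsilon$-DP, hence $(\bar w_{1,1},\ldots,\bar w_{P,K})$ is $\varepsilon$-DP. Finally, since each played action $x_{i,p,k}$ is a deterministic function of the $\bar w_{j,s}$'s through the update \Cref{equ:x}, post-processing yields that $(x_{1,1,K},\ldots,x_{m,P,K})$ is $\varepsilon$-DP. The only genuinely new point relative to the local-DP argument is verifying the $1/m$ sensitivity reduction together with the fact that perturbing one client's sample changes only that client's gradient estimate (no cross-client leakage), so I expect no substantive obstacle beyond confirming this scaling.
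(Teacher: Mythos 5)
Your setup and bookkeeping match the paper's proof of \Cref{lemma: pure DP stochastic cdp}: the differing sample affects only phase $p_1$, client $i_1$, and the $\le 2^{j-\lvert s\rvert}$ descendant leaf iterates of the vertex whose batch contains it; the per-coordinate sensitivity $\frac{1}{m}\cdot\frac{4\alpha}{2^{-\lvert s\rvert}b}=\frac{4\alpha 2^{\lvert s\rvert}}{bm}$ is exactly the paper's, as is the final composition over iterates and the post-processing step. However, there is a genuine gap in the middle step. You claim that the full $d$-dimensional noisy vector $\{\frac{1}{m}\sum_i\langle c_n,v_{i,j,s}\rangle+\zeta_n\}_{n\in[d]}$ is $\frac{\varepsilon}{2^{j-\lvert s\rvert}}$-DP ``by the Laplace mechanism guarantee.'' The Laplace mechanism guarantee for a vector-valued query is governed by its \emph{$\ell_1$-sensitivity}, and the bound you computed is only the per-coordinate ($\ell_\infty$) sensitivity. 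Since all $d$ coordinates of $v_{i_1,j,s}$ can change simultaneously (the Lipschitz assumption only controls $\|\nabla l\|_\infty$), the $\ell_1$-sensitivity of the released vector can be as large as $d\cdot\frac{4\alpha 2^{\lvert s\rvert}}{bm}$, so with noise scale $\mu_{j,s}$ your claimed release is only $\frac{d\varepsilon}{2^{j-\lvert s\rvert}}$-DP; completing your argument as written would lose a factor of $d$ in the final privacy guarantee (or force noise a factor $d$ larger, destroying the utility bound).

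The paper avoids this by exploiting that the server never releases the noisy vector at all: the only output per leaf is the index $\bar w_{j,s}=\arg\min_{c_n}\bigl[\frac{1}{m}\sum_i\langle c_n,v_{i,j,s}\rangle+\zeta_n\bigr]$, which is an instance of Report Noisy Max (\Cref{lemma: Report Noisy Max}). That mechanism is $\varepsilon'$-DP with per-query sensitivity $\Delta$ and noise $\mathrm{Lap}(\Delta/\varepsilon')$ \emph{independently of the number $d$ of candidates}, which is precisely what delivers $\frac{\varepsilon}{2^{j-\lvert s\rvert}}$-DP per leaf from the per-coordinate sensitivity you computed. So the fix is not a different sensitivity calculation but a different mechanism: replace your appeal to the vector Laplace mechanism with the Report Noisy Max guarantee applied to $\bar w_{j,s}$, after which your composition and post-processing steps go through unchanged.
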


\begin{proof}

Let $\mathcal{S}$ and $\mathcal{S}'$ be two neighboring datasets and assume that they differ in sample $l_{i_1,t_1}$ or $l'_{i_1,t_1}$, where $2^{p_1-1}\leq t_1< 2^{p_1}$. The algorithm is $\varepsilon$-DP if we have $(x_{1,1,K},\ldots,x_{m,P,K})\approx_{(\varepsilon,0)} (x'_{1,1,K},\ldots,x'_{m,P,K})$.

Let $\mathcal{B}_{i_1,j,s}$ be the set that contains $l_{i_1,t_1}$ or $l'_{i_1,t_1}$. Recall that $\lvert \mathcal{B}_{i_1,j,s}\rvert=2^{-\lvert s\rvert}b$. The key point is that this set is used in the calculation of $v_{i_1,p_1,k}$ for at most $2^{j-\lvert s\rvert}$ iterates, i.e. the leaves that are descendants of the vertex. Let $k_0$ and $k_1$ be the first and last iterate such that $\mathcal{B}_{i_1,j,s}$ is used for the calculation of $v_{i_1,p_1,k}$, hence $k_1-k_0+1\leq2^{j-\lvert s\rvert}$. For a
sequence $a_i,\ldots,a_j$, we use the shorthand $a_{i:j}=\{a_i,\ldots,a_j\}$.

{\bf Step 1: $\mathbf{x_{1:m,p_1,k_0:k_1}\approx_{(\varepsilon,0)}x'_{1:m,p_1,k_0:k_1}}$ and $\mathbf{\bar{w}_{p_1,k_0:k_1}\approx_{(\varepsilon,0)}\bar{w}'_{p_1,k_0:k_1}}$ by basic composition, post-processing and report noisy max}

We will show that $(x_{1,p_1,k_0},\ldots,x_{m,p_1,k_1})$ and $(x'_{1,p_1,k_0},\ldots,x'_{m,p_1,k_1})$ are $\varepsilon$-indistinguishable. Since $\mathcal{B}_{i_1,j,s}$ is used to calculate $v_{i_1,p_1,k}$ for at most $2^{j-\lvert s\rvert}$ iterates, i.e. $k_1-k_0+1\leq2^{j-\lvert s\rvert}$, it is enough to show that $\bar{w}_{p_1,k}\approx_{(\frac{\varepsilon}{2^{j-\lvert s\rvert}},0)} \bar{w}'_{p_1,k}$ for $k_0\leq k\leq k_1$ and then apply basic composition and post-processing. Note that for every $k_0\leq k\leq k_1$, the sensitivity $\lvert \langle c_n,v_{i_1,p_1,k}-v'_{i_1,p_1,k} \rangle\rvert\leq \frac{4\alpha}{2^{-\lvert s\rvert}b}$, therefore, $\lvert\frac{1}{m}\sum_{i=1}^m \langle c_n,v_{i,p_1,k} \rangle -\frac{1}{m}\sum_{i=1}^m  \langle c_n,v'_{i,p_1,k} \rangle \rvert\leq\frac{4\alpha}{2^{-\lvert s\rvert}mb}$. Using privacy guarantees of report noisy max (\Cref{lemma: Report Noisy Max}), we have $\bar{w}_{p_1,k}\approx_{(\frac{\varepsilon}{2^{j-\lvert s\rvert}},0)} \bar{w}'_{p_1,k}$ for $k_0\leq k\leq k_1$ with $\mu_{j,s}=\frac{4\alpha2^j}{bm\varepsilon}$.

{\bf Step 2: $\mathbf{x_{1:m,1:P,K}\approx_{(\varepsilon,0)} x'_{1:m,1:P,K}}$ by post-processing}

In order to show $(x_{1,1,K},\ldots,x_{m,P,K})\approx_{(\varepsilon,0)} (x'_{1,1,K},\ldots,x'_{m,P,K})$, we only need to prove that $(x_{1,p_1,K},\ldots,x_{m,p_1,K})\approx_{(\varepsilon,0)} (x'_{1,p_1,K},\ldots,x'_{m,p_1,K})$ and apply post-processing. It is enough to show that iterates $(x_{1,p_1,1},\ldots,x_{m,p_1,K})$ and $(x'_{1,p_1,1},\ldots,x'_{m,p_1,K})$ is $\varepsilon$-indistinguishable.  

The iterates $x_{1:m,p_1,1:k_0-1}$ and $x'_{1:m,p_1,1:k_0-1}$ do not depend on $l_{i_1,t_1}$ or $l'_{i_1,t_1}$, hence 0-indistinguishable. Moreover, given that $(x_{1,p_1,k_0},\ldots,x_{m,p_1,k_1})$ and $(x'_{1,p_1,k_0},\ldots,x'_{m,p_1,k_1})$ are $\varepsilon$-indistinguishable, it is clear that $(x_{1,p_1,k_1+1},\ldots,x_{m,p_1,K})$ and $(x'_{1,p_1,k_1+1},\ldots,x'_{m,p_1,K})$ are $\varepsilon$-indistinguishable by post-processing. 
\end{proof}

To help prove the upper bound of the regret, we introduce some lemmas first.

\begin{lemma}\label{lemma: bound vw cdp}
   Setting $\mu_{j,s}= \frac{4\alpha2^j}{mb\varepsilon}$, we have
    \begin{align*}
        \mathbb{E}[\langle\bar{v}_{p,k},\bar{w}_{p,k}\rangle]\leq\mathbb{E}\left[\underset{w\in \mathcal{X}}{\min}\langle\bar{v}_{p,k},w\rangle\right]+O\left(\frac{\alpha2^j\ln d}{mb\varepsilon}\right).
    \end{align*}
\end{lemma}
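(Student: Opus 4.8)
The plan is to follow the structure of the proof of \Cref{lemma: bound vw}, adapting it to the central-DP mechanism in which a single Laplace draw $\zeta_n \sim \text{Lap}(\mu_{j,s})$ perturbs each coordinate $n$ rather than an average $\frac{1}{m}\sum_{i=1}^m \xi_{i,n}$ of $m$ local noises. Writing $\bar{w}_{p,k} = c_{n^\star}$ for the selected vertex of the report-noisy-min, I would first unpack the definition: since $c_{n^\star}$ minimizes $\langle c_n, \bar{v}_{p,k}\rangle + \zeta_n$ over $n$,
\begin{align*}
\langle \bar{w}_{p,k}, \bar{v}_{p,k}\rangle &= \underset{n: 1\leq n\leq d}{\min}\left(\langle c_n, \bar{v}_{p,k}\rangle + \zeta_n\right) - \zeta_{n^\star}\\
&\leq \underset{n: 1\leq n\leq d}{\min}\langle c_n, \bar{v}_{p,k}\rangle + 2\underset{n: 1\leq n\leq d}{\max}\lvert \zeta_n\rvert,
\end{align*}
using $\min_n(a_n + \zeta_n) \leq \min_n a_n + \max_n \zeta_n$ and $-\zeta_{n^\star} \leq \max_n \lvert \zeta_n\rvert$. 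Because the $c_n$ are exactly the vertices of the simplex $\mathcal{X} = \Delta_d$ and a linear objective over a simplex attains its minimum at a vertex, $\min_n \langle c_n, \bar{v}_{p,k}\rangle = \min_{w \in \mathcal{X}} \langle \bar{v}_{p,k}, w\rangle$.

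The only remaining work is to control $\Eb[\max_n \lvert \zeta_n\rvert]$, the expected maximum of $d$ IID $\text{Lap}(\mu_{j,s})$ random variables. This replaces the more involved \Cref{lemma:sum of Lap} and is in fact simpler: because the central model injects noise \emph{after} aggregation, there is no sum over clients, so I avoid the sub-exponential concentration of a sum of $m$ Laplaces entirely. Using the tail bound $\Pb(\lvert \zeta_n\rvert \geq c) = e^{-c/\mu_{j,s}}$ together with a union bound over the $d$ coordinates, I would split the integral $\Eb[\max_n \lvert \zeta_n\rvert] = \int_0^\infty \Pb(\max_n \lvert \zeta_n\rvert \geq c)\, dc$ at $c = \mu_{j,s}\ln d$, obtaining $\mu_{j,s}\ln d + \int_{\mu_{j,s}\ln d}^\infty d\, e^{-c/\mu_{j,s}}\, dc = \mu_{j,s}\ln d + \mu_{j,s} = O(\mu_{j,s}\ln d)$.

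Taking expectations in the displayed inequality and substituting this bound yields $\Eb[\langle \bar{v}_{p,k}, \bar{w}_{p,k}\rangle] \leq \Eb[\min_{w \in \mathcal{X}}\langle \bar{v}_{p,k}, w\rangle] + O(\mu_{j,s}\ln d)$, and plugging in $\mu_{j,s} = \frac{4\alpha 2^j}{mb\varepsilon}$ gives the claimed $O\!\left(\frac{\alpha 2^j \ln d}{mb\varepsilon}\right)$. I do not anticipate a genuine obstacle here — the essential point is simply that the central mechanism uses noise of scale smaller by a factor $m$ (i.e. $\mu_{j,s} = \lambda_{i,j,s}/m$), while the expected max of $d$ IID Laplaces scales as $O(\mu_{j,s}\ln d)$ without the $\sqrt{m}$-averaging that appeared in the local-DP analysis. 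Consequently the resulting utility term improves by a factor $\sqrt{m}$ relative to \Cref{lemma: bound vw}, which is exactly what drives the sharper regret in \Cref{theorem: stochastic_cdp}.
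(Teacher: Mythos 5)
Your proposal is correct and follows essentially the same route as the paper's proof: the identical report-noisy-min decomposition $\langle \bar{w}_{p,k},\bar{v}_{p,k}\rangle \leq \min_n \langle c_n,\bar{v}_{p,k}\rangle + 2\max_n|\zeta_n|$, followed by the $O(\mu_{j,s}\ln d)$ bound on the expected maximum of $d$ IID Laplace variables, which the paper simply cites as standard while you compute it explicitly (and you also make explicit the vertex-versus-simplex identification that the paper leaves implicit). No gaps.
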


\begin{proof}

Since $\bar{w}_{p,k}=\arg\min_{c_n: 1\leq n\leq d}\left[\frac{1}{m}\sum_{i=1}^m \langle c_n,v_{i,p,k} \rangle +\zeta_n\right]$, where $\zeta_n\sim \text{Lap}(\mu_{j,s})$, we denote $\bar{w}_{p,k}$ as $c_{n^\star}$ and we have

\begin{equation*}
    \begin{split}
        \langle\bar{w}_{p,k},\bar{v}_{p,k}\rangle& =\langle c_{n^\star},\bar{v}_{p,k}\rangle\\
        & = \underset{n: 1\leq n\leq d}{\min}\left(\langle c_{n},\bar{v}_{p,k}\rangle+\zeta_n\right)-\zeta_{n^\star}\\
        & \leq \underset{n: 1\leq n\leq d}{\min}\langle c_{n},\bar{v}_{p,k}\rangle+2\underset{n: 1\leq n\leq d}{\max}\lvert\zeta_n \rvert .
    \end{split}
\end{equation*}

Standard results for the expectation of maximum of $d$ i.i.d. Laplace random variables imply that $\mathbb{E}\left[\underset{n: 1\leq n\leq d}{\max}\lvert\zeta_n \rvert\right]\leq O(\mu_{j,s}\ln d)$. Therefore,
$$\mathbb{E}[\langle\bar{v}_{p,k},\bar{w}_{p,k}\rangle]\leq\mathbb{E}\left[\underset{w\in \mathcal{X}}{\min}\langle\bar{v}_{p,k},w\rangle\right]+O\left(\mu_{j,s}\ln d\right).$$
\end{proof}

Then we are ready to prove \Cref{theorem: stochastic_cdp}.
\allowdisplaybreaks

\begin{proof}
    
\Cref{lemma: pure DP stochastic cdp} implies the claim about privacy. Following the same arguments in the proof of \Cref{theorem: stochastic}, we can get
\begin{equation*}
    \begin{split}
        \Eb[L_{t}(x_{i,p,k+1})-L_{t}(x^\star)]&\leq (1-\eta_{i,p,k})\Eb[L_{t}(x_{i,p,k})-L_{t}(x^\star)]+\eta_{i,p,k}\Eb[\Vert \nabla L_{t}(x_{i,p,k})-\bar{v}_{p,k}\Vert_\infty]\\& \quad+\eta_{i,p,k}\Eb[\langle\bar{v}_{p,k},\bar{w}_{p,k}\rangle-\underset{w\in \mathcal{X}}{\min}\langle\bar{v}_{p,k},w\rangle] +\frac{1}{2}\beta {\eta_{i,p,k}}^2.
    \end{split}
\end{equation*}    
Applying  \Cref{lemma: bound vw cdp} and \Cref{lemma: bound v and gradient}, we have
\begin{equation*}
    \begin{split}
        \Eb[L_{t}(x_{i,p,k+1})-L_{t}(x^\star)]&\leq (1-\eta_{i,p,k})\Eb[L_{t}(x_{i,p,k})-L_{t}(x^\star)]+\eta_{i,p,k}(\alpha+\beta )O\left(\sqrt{\frac{\log d}{bm}}\right)\\& \quad+\frac{\eta_{i,p,k}\alpha2^j}{b\varepsilon}O\left(\frac{\ln d}{m}\right) +\frac{1}{2}\beta {\eta_{i,p,k}}^2.
    \end{split}
\end{equation*}

Let $\alpha_k = \eta_{i,p,k}(\alpha+\beta )O\left(\sqrt{\frac{\log d}{bm}}\right)+\frac{\eta_{i,p,k}\alpha2^j}{b\varepsilon}O\left(\frac{\ln d}{m}\right) +\frac{1}{2}\beta {\eta_{i,p,k}}^2$. We simplify the notion of $\eta_{i,p,k}$ to $\eta_k$. Then we have
\begin{equation*}
    \begin{split}
        \Eb[L_{t}(x_{i,p,k})-L_{t}(x^\star)]&\leq \sum_{k'=1}^{K}\alpha_k\prod_{i>k}^{K-1}(1-\eta_{k'})\\
        &=\sum_{k=1}^{K}\alpha_k\frac{(k+1)k}{K(K-1)}\\
        &\leq \sum_{k=1}^{K}\alpha_k\frac{(k+1)^2}{(K-1)^2},
    \end{split}
\end{equation*}
where $\eta_k=\frac{2}{k+1}$.

Since $K=2^{T_1}$, simply algebra implies that
\begin{equation}
    \begin{split}
        \Eb[L_{t}(x_{i,p,K})-L_{t}(x^\star)] \le O\Bigg((\alpha+\beta )\sqrt{\frac{\log d}{bm}}+\frac{\alpha2^{T_1}\ln d}{b\varepsilon m} + \frac{\beta }{2^{T_1}}\Bigg). 
    \end{split}
\end{equation}

At iteration $2^{p-1}\leq t< 2^p$, setting $b=\frac{2^{p-1}}{(p-1)^2}$ and $T_1=\frac{1}{2}\log \left(\frac{b\varepsilon \beta m}{\alpha \log d} \right)$, we have
$$\Eb[L_{t}(x_{i,p,K})-L_{t}(x^\star)]\le O\Bigg((\alpha+\beta )p\sqrt{\frac{\log d}{2^{p}m}}+\frac{p\sqrt{\alpha\beta}  \log d}{\sqrt{m}\sqrt{2^{p}\varepsilon }}\Bigg).$$
Summing over all the timesteps, we have
\begin{equation*}
    \begin{split}
        \Eb\left[\sum_{t=1}^{T}L_{t}(x_{i,t})-\underset{u\in \mathcal{X}}{\min}\sum_{t=1}^{T}L_t(u)\right]
        &\le O\Bigg((\alpha+\beta )\log T\sqrt{\frac{T\log d}{m}}+\frac{\sqrt{\alpha\beta}  \log d \sqrt{T}\log T}{\sqrt{m\varepsilon }}\Bigg).
    \end{split}
\end{equation*}

The proof of communication cost is similar to that in proof of \Cref{theorem: stochastic}.
\end{proof}

\section{Proof of Lower Bounds}\label{appendix: proof of lower bound collaboration benefit}

\begin{theorem}[Restatement of \Cref{lower bound: collaboration benefit}]
    For any federated OPE algorithm against oblivious adversaries, the per-client regret is lower bounded by $\Omega(\sqrt{T\log d}).$ Let $\varepsilon\in (0,1]$ and $\delta = o(1/T)$, for any $(\varepsilon,\delta)$-DP federated OPE algorithm, the per-client regret is lower bounded by $\Omega \left(\min \left( \frac{\log d}{\varepsilon}, T\right) \right).$
\end{theorem}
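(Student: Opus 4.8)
The plan is to prove the two bounds separately, both by reducing the $m$-client game to a single-player online-prediction game through a \emph{policy-averaging} argument. For the first (information-theoretic) bound I would specialize the oblivious adversary to the case $l_{i,t}=l_t'$, i.e.\ every client sees the \emph{same} loss at each round, and introduce the average policy $p_t'(k)=\frac1m\sum_{i=1}^m p_{i,t}(k)$. Two things have to be checked: (i) under this adversary the per-client regret equals exactly $\sum_{t}\sum_{k}p_t'(k)l_t'(k)-\min_{x^\star}\sum_t l_t'(x^\star)$, so it coincides with the regret of $p'$ against the sequence $\{l_t'\}$; and (ii) $p_t'$ depends only on $l_1',\dots,l_{t-1}'$, hence $\{p_t'\}$ is a \emph{legitimate} (causal) single-player algorithm. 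Granting these, the federated per-client regret is bounded below by the single-player minimax regret, for which I would invoke the classical $\Omega(\sqrt{T\log d})$ lower bound (e.g.\ i.i.d.\ random-sign losses together with a standard estimate that the expected minimum loss among $d$ experts beats the average by $\Theta(\sqrt{T\log d})$).

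For the private bound I would keep the same reduction but carry the privacy budget through it. The induced single-player algorithm $p'$ is still causal, and by post-processing/marginalization it inherits a DP guarantee from the federated algorithm. I would then feed $p'$ into \Cref{lemma:obli DP single player}, which converts an $(\varepsilon,\delta)$-DP online-prediction algorithm into a private \emph{batch} selection procedure. The batch side is the well-understood private-selection / report-noisy-max lower bound: identifying the coordinate of minimum aggregate loss among $d$ experts under $(\varepsilon,\delta)$-DP forces additive error $\Omega(\log d/\varepsilon)$, with the choice $\delta=o(1/T)$ making the additive $\delta$ contributions negligible. Transferring this back through the online-to-batch conversion gives the per-client lower bound $\Omega(\log d/\varepsilon)$; the truncation at $T$ appears simply because any algorithm has regret at most $T$ for losses in $[0,1]$, yielding the stated $\Omega(\min(\log d/\varepsilon,T))$.

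The delicate step — and the one I expect to be the main obstacle — is the privacy bookkeeping inside the reduction. A naive accounting replaces a single-round change in the single-player instance by $m$ simultaneous element changes in the federated instance, so group privacy inflates the effective parameter to $m\varepsilon$ and degrades the bound to $\Omega(\log d/(m\varepsilon))$; this is exactly the (weaker) speed-up regime that \emph{does} hold in the realizable case (\Cref{lower bound:realizable}). To obtain the $m$-independent bound of \Cref{lower bound: collaboration benefit}, the hard oblivious instance must be designed so that the information the adversary conceals is protected by altering only $O(\log d/\varepsilon)$ \emph{elements} while still forcing the \emph{common-benchmark} regret (not merely each client's individual-benchmark regret) to grow. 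Showing that averaging over clients cannot be traded against the privacy cost — so the accounting lands at $\log d/\varepsilon$ rather than $\log d/(m\varepsilon)$ — is where essentially all the work lies, and it is precisely what encodes the conceptual message that collaboration yields no speed-up against general oblivious adversaries.
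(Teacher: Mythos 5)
Your reduction is exactly the one the paper uses: restrict the oblivious adversary to common losses $l_{i,t}=l_t'$, define the average policy $p_t'(k)=\frac{1}{m}\sum_{i=1}^m p_{i,t}(k)$, verify that the per-client regret equals the regret of $p'$ on $\{l_t'\}$ and that $p'$ is a causal single-player algorithm, and then invoke the single-player lower bounds. For the non-private $\Omega(\sqrt{T\log d})$ part this is complete and coincides with the paper's argument (which cites Theorem~3.7 of Cesa-Bianchi and Lugosi for the single-player minimax bound).

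For the private part, the step you single out as ``where essentially all the work lies'' is indeed the crux, and you should know that the paper's proof does not do that work: it feeds the average policy $p'$ directly into \Cref{lemma:obli DP single player} as an ``$(\varepsilon,\delta)$-DP algorithm,'' with no accounting for the change of neighboring relation. Under the paper's own definition, neighboring federated datasets differ in a \emph{single entry} $l_{i,t}$, whereas changing one round of the common loss $l_t'$ alters all $m$ entries $l_{1,t},\dots,l_{m,t}$; by group privacy the induced single-player algorithm is only $(m\varepsilon,\cdot)$-DP with respect to the single-player neighboring relation, so \Cref{lemma:obli DP single player} yields $\Omega\left(\min\left(\frac{\log d}{m\varepsilon},T\right)\right)$ --- exactly the degraded bound you predicted, and the same $1/m$ scaling that genuinely appears in \Cref{lower bound:realizable}. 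Placing the distinguishing losses on a single client avoids the group-privacy loss but divides the per-client regret by $m$, landing in the same place. The gap does not look merely presentational: a server that runs a single-player DP-OPE routine on the averaged loss $\frac{1}{m}\sum_i l_{i,t}$, whose per-entry sensitivity is $1/m$, would appear to incur a privacy overhead of order $\frac{\log d\,\mathrm{polylog}(T)}{m\varepsilon}$, consistent with the $1/m$-scaled bound but not with the stated one. So your proposal is not missing a step that the paper supplies; as written, neither argument establishes the $m$-independent bound under the element-level DP definition, and such a statement would require either a genuinely different hard instance or a neighboring relation defined at the granularity of a whole round of the common loss sequence.
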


\begin{proof}

Consider the case when all clients receive the same loss function from the oblivious adversary at each time step, i.e. $l_{i,t}=l_t'$. Then we define the average policy among all clients $p_{t}'(k)=\frac{1}{m} \sum_{i=1}^m p_{i,t}(k), \forall{k\in[d]}$. Now the regret is
\begin{align*}
    \mathbb{E}\left[ \frac{1}{m} \sum_{i=1}^m \sum_{t=1}^{T}l_{i,t}(x_{i,t})\right] - \frac{1}{m}\sum_{i=1}^m\sum_{t=1}^T l_{i,t}(x^\star) &= \mathbb{E}\left[ \frac{1}{m} \sum_{i=1}^m \sum_{t=1}^{T}l_{t}'(x_{i,t})\right] - \sum_{t=1}^T l_{t}'(x^\star)\\
    &=  \frac{1}{m} \sum_{i=1}^m \sum_{t=1}^{T}\sum_{k=1}^d p_{i,t}(k)\cdot l_{t}'(k) - \sum_{t=1}^T l_{t}'(x^\star)\\
    &=   \sum_{t=1}^{T}\sum_{k=1}^d \left(\frac{1}{m} \sum_{i=1}^m p_{i,t}(k)\right)\cdot l_{t}'(k) - \sum_{t=1}^T l_{t}'(x^\star)\\
    &=   \sum_{t=1}^{T}\sum_{k=1}^d p_{t}'(k)\cdot l_{t}'(k) - \sum_{t=1}^T l_{t}'(x^\star).
\end{align*}
Note that $p_{t}'(k)$ is defined by $p_{1,t}(k),\ldots,p_{m,t}(k)$, which in turn are determined by $l_{1,1}, \ldots, l_{m,t-1}$. According to our choice of $l_{i,t} = l_{t}'$, $p_{t}'(k)$ is determined by $l_1',l_2',\ldots,l_{t-1}'$. Therefore $p_1',p_2',\ldots,p_t'$ are generated by a legitimate algorithm for online learning with expert advice problems.

There exists a sequence of losses $l_1',l_2',\ldots,l_t'$ such that for any algorithm for online learning with expert advice problem, the expected regret satisfies (\citet{cesa2006prediction}, Theorem 3.7)

\[\sum_{t=1}^{T}\sum_{k=1}^d p_{t}'(k)\cdot l_{t}'(k) - \sum_{t=1}^T l_{t}'(x^\star) \geq \Omega(\sqrt{T\log d}).\]

Therefore, we have

\[\mathbb{E}\left[ \frac{1}{m} \sum_{i=1}^m \sum_{t=1}^{T}l_{i,t}(x_{i,t})\right] - \frac{1}{m}\sum_{i=1}^m\sum_{t=1}^T l_{i,t}(x^\star) \geq \Omega(\sqrt{T\log d}).\]

From \Cref{lemma:obli DP single player}, if $\varepsilon \in (0,1]$ and $\delta = o(1/T)$, then there exists a sequence of losses $l_1',l_2',\ldots,l_t'$ such that for any $(\varepsilon,\delta)$-DP algorithm for online learning with expert advice problem against oblivious adversaries, the expected regret satisfies 
\[\sum_{t=1}^{T}\sum_{k=1}^d p_{t}'(k)\cdot l_{t}'(k) - \sum_{t=1}^T l_{t}'(x^\star) \geq \Omega \left(\min \left( \frac{\log d}{\varepsilon}, T\right) \right).\]

Therefore, we have for any $(\varepsilon,\delta)$-DP algorithm,
\[\mathbb{E}\left[ \frac{1}{m} \sum_{i=1}^m \sum_{t=1}^{T}l_{i,t}(x_{i,t})\right] - \frac{1}{m}\sum_{i=1}^m\sum_{t=1}^T l_{i,t}(x^\star) \geq \Omega \left(\min \left( \frac{\log d}{\varepsilon}, T\right) \right).\]
\end{proof}

\begin{lemma}\label{lemma:obli DP single player}
    Let $\varepsilon \in (0,1]$ and $\delta = o(1/T)$. There exists a sequence of losses $l_1,\ldots,l_T$ such that for any $(\varepsilon,\delta)$-DP algorithm $\mathcal{A}$ for DP-OPE against oblivious adversaries satisfies
    $$\sum_{t=1}^{T}l_{t}(x_{t}) - \min_{x^\star \in [d]}\sum_{t=1}^T l_{t}(x^\star)\geq \Omega \left(\min \left( \frac{\log d}{\varepsilon}, T\right) \right).$$
\end{lemma}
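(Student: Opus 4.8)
The plan is to prove this single-player bound on a \emph{realizable} oblivious instance (a valid sub-case of the oblivious class), reusing the prototype losses from the sketch of \Cref{lower bound:realizable}: $l^0\equiv 0$ and, for $j\in[d]$, $l^j(x)=\mathbf 1[x\neq j]$. Fix the horizon split parameter $k=\Theta\!\left(\min(\log d/\varepsilon,\,T)\right)$, draw $j\sim\mathrm{Unif}[d]$, and let the adversary commit to $\mathcal S^j$ that plays $l^0$ on rounds $1,\dots,T-k$ and $l^j$ on the last $k$ rounds. Since expert $j$ incurs zero total loss, $\min_{x^\star}\sum_t l_t(x^\star)=0$, so the regret on $\mathcal S^j$ is exactly the number of ``misses'' $\sum_{t=T-k+1}^{T}\mathbf 1[x_t\neq j]$. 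Thus small regret forces the algorithm to name the hidden index $j$ on most of the last $k$ rounds, and the first $T-k$ all-zero rounds leak no information about $j$. This is precisely a private \emph{selection} (best-expert identification) task from $k$ revealed loss functions, which is the ``well-examined DP batch model'' alluded to in the theorem sketch.

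The key step I would carry out is an online-to-batch reduction that inherits the DP guarantee by post-processing. Given any $(\varepsilon,\delta)$-DP online algorithm $\mathcal{A}$, build a batch selection mechanism $\mathcal{M}$ on a dataset $(y_1,\dots,y_k)\in[d]^k$ as follows: feed $\mathcal{A}$ the loss sequence whose last $k$ losses are $l^{y_1},\dots,l^{y_k}$ (and $l^0$ earlier), and output the plurality vote $\hat\jmath$ of its last $k$ plays $x_{T-k+1},\dots,x_T$. Changing one coordinate $y_i$ alters a single loss function in the online stream, so neighboring datasets induce neighboring loss sequences; hence $\mathcal{M}$ is $(\varepsilon,\delta)$-DP by post-processing. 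On the planted instance $y_1=\dots=y_k=j^\star$ (i.e.\ $\mathcal S^{j^\star}$), a correct plurality $\hat\jmath=j^\star$ means the majority of the last $k$ plays equal $j^\star$, so low regret would make $\mathcal{M}$ succeed with constant probability. Invoking the $\Omega(\log d/\varepsilon)$ lower bound for $(\varepsilon,\delta)$-DP selection over $d$ candidates, any such $\mathcal{M}$ with $k=o(\log d/\varepsilon)$ samples fails to recover $j^\star$ with probability exceeding $\tfrac12$; equivalently, the plurality differs from $j^\star$ with probability $\ge\tfrac12$, which means more than $k/2$ misses in that event and forces $\Eb[\text{regret}]\ge\Omega(k)=\Omega(\log d/\varepsilon)$, averaged over $j$ and hence for some worst-case sequence.

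For the regime split I would set $k=\lfloor c\min(\log d/\varepsilon,\,T)\rfloor$ for a small constant $c$: when $\log d/\varepsilon\le T$ this gives $\Omega(\log d/\varepsilon)$, and when $\log d/\varepsilon> T$ I cap $k=T$ (all rounds adversarial), for which the identical argument yields $\Omega(T)$; together these give $\Omega(\min(\log d/\varepsilon,T))$. The main obstacle I anticipate is the approximate-DP ($\delta>0$) case: a naive $d$-way group-privacy packing incurs an $e^{k\varepsilon}$ blow-up in the additive error term that is too lossy once $d$ is large, which is exactly why I route the proof through the batch selection lower bound rather than packing directly on $\mathcal{S}^1,\dots,\mathcal{S}^d$. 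That lower bound (e.g.\ the fingerprinting-based tight selection bound) tolerates $\delta=o(1/k)$, and since the differing block has size $k\le T$, the hypothesis $\delta=o(1/T)$ guarantees $\delta k\le \delta T\to 0$, i.e.\ $\delta=o(1/k)$, precisely meeting the requirement. The pure-DP case ($\delta=0$) is immediate from the same reduction. Assembling the two regimes yields the claimed $\Omega\!\left(\min(\log d/\varepsilon,\,T)\right)$ bound, mirroring the $m=1$ specialization of \Cref{lower bound:realizable}.
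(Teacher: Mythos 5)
Your high-level route is the same as the paper's: turn the online problem into a differentially private batch \emph{selection} problem and invoke a selection lower bound that tolerates $\delta=o(1/n)$ (the paper does this via the $1$-Select$_d$ problem, the reduction in \Cref{lemma:regret k-select upper bound}, and Lemma 4.2 of Jain et al.\ (2023), quoted as \Cref{lemma:k-select pure DP}). You also correctly diagnose the central obstacle: a direct $d$-way packing over the planted sequences $\mathcal{S}^1,\ldots,\mathcal{S}^d$ picks up an additive group-privacy term of order $e^{k\varepsilon}\delta\approx\mathrm{poly}(d)\,\delta$, so it needs $\delta\lesssim\varepsilon/d$ rather than the stated hypothesis $\delta=o(1/T)$. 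The pure-DP half of your argument is fine and is essentially the $m=1$ specialization of the paper's proof of \Cref{lower bound:realizable}.

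The gap is in the approximate-DP case, where your two design choices work against each other. First, the only selection lower bound that is witnessed by \emph{planted} (all-records-identical) datasets is exactly the packing argument you are trying to avoid; the $\delta$-tolerant selection bounds (fingerprinting-style, or \Cref{lemma:k-select pure DP}) are proved against non-degenerate hard distributions over datasets, so the claim that $\mathcal{M}$ ``fails to recover $j^\star$ with probability exceeding $1/2$'' \emph{on the planted instance} does not follow from them. Second, if you instead run your reduction on the general datasets those bounds require, the plurality-vote aggregator no longer transfers hardness: take $d=2$ with counts $c(1)=\tfrac34 k$, $c(2)=\tfrac14 k$ and stream all the $2$-records first; an online algorithm can play expert $2$ on $k/2+1$ rounds and expert $1$ on $k/2-1$ rounds while matching $\tfrac34 k-1$ records, giving online regret $1$, yet the plurality is expert $2$ with selection regret $k/2$. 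So ``low online regret $\Rightarrow$ the plurality has low selection regret'' is false precisely on the instances where you need it. The repair is what the paper does: define the batch problem on arbitrary datasets $\mathbf{y}\in(\{0,1\}^{d})^{n}$, let $\mathcal{M}$ output a \emph{single} play of $\mathcal{A}$ (the last one, as in Algorithm 2 of Jain et al.), bound its normalized selection regret by $\alpha/n$ as in \Cref{lemma:regret k-select upper bound}, and then read off $\alpha=\Omega\left(\min\left(\frac{\log d}{\varepsilon},T\right)\right)$ from \Cref{lemma:k-select pure DP}.
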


\begin{proof}
Let $n, d\in \mathbb{N}$. Define $\mathbf{y}\in \mathcal{Y}^n$ containing $n$ records, where $\mathcal{Y}=\{0,1\}^{d}$. The function $1$-Select$_d$:  $\mathcal{Y}^n \rightarrow[d]$ corresponds to selecting a coordinate $b \in [d]$ in the batch model.

Then we define the regret of $1$-Select$_d$. For a batched algorithm $\mathcal{M}$ with input dataset $\mathbf{y}$ and output $b \in[d]$, define 
$$\operatorname{Reg}_{1 \text{-Select}_d}(\mathcal{M}(\mathbf{y}))= \frac{1}{n}\left[\sum_{i=1}^n y_i(b)-{\min}_{x^\star\in[d]} \left(\sum_{i=1}^n y_i(x^\star)\right)\right].
$$

Let $\mathcal{A}$ be an $(\varepsilon, \delta)$-DP algorithm $\left(\{0,1\}^{ d}\right)^T \rightarrow([d])^T$ for DP-OPE against oblivious adversaries with regret $\sum_{t=1}^{T}l_{t}(x_{t}) - \min_{x^\star \in [d]}\sum_{t=1}^T l_{t}(x^\star)\leq \alpha$. Setting $T=n$, we can use $\mathcal{A}$ to construct an $(\varepsilon, \delta)$-DP algorithm $\mathcal{M}$ for $1$-Select$_d$ in the batch model. The details of the algorithm appear in \Cref{alg:k-select}.

\begin{algorithm}[H]
\caption{Batch algorithm $\mathcal{M}$ for $1$-Select (\citet{jain2023price}, Algorithm 2 with $k = 1$)}
\label{alg:k-select}
\begin{algorithmic}[1]

\STATE   {\bf Input:} {$ \mathbf{y}=\left(y_1, \ldots, y_n\right) \in \mathcal{Y}^n$, where $\mathcal{Y}=\{0,1\}^{d}$, and black-box access to a DP-OPE algorithm $\mathcal{A}$ for oblivious adversaries.}



\STATE Construct a stream $\mathbf{z} \leftarrow \mathbf{y}$ with $n$ records.

\FOR{$t=1$ to $n$}    
        \STATE Send the record $z_t$ to $\mathcal{A}$ and get the corresponding output $x_t$.
\ENDFOR   


\STATE Output $b = x_n$.

\end{algorithmic}
\end{algorithm}

Let $\varepsilon>0, \alpha \in \mathbb{R}^{+}$, and $T, d, n \in \mathbb{N}$, where $T = n$. If a DP-OPE algorithm $\mathcal{A}$: $\left(\{0,1\}^{d}\right)^T \rightarrow([d])^T$ for oblivious adversaries is ($\varepsilon, \delta$)-DP and the regret is upper bounded by $\alpha$, i.e. $\sum_{t=1}^{T}l_{t}(x_{t}) - \min_{x^\star \in [d]}\sum_{t=1}^T l_{t}(x^\star)\leq \alpha$, then by \Cref{lemma:regret k-select upper bound}, we have the batch algorithm $\mathcal{M}$ for $1$-Select$_{d}$ is $(\varepsilon,\delta)$-DP and $ \operatorname{Reg}_{1 \text{-Select}_{d}}(\mathcal{M}) \leq \frac{\alpha}{n}$.

If $\delta = o(1/T)$, then $n=\Omega\left(\frac{n  \log d}{\varepsilon \alpha}\right)$ (\Cref{lemma:k-select pure DP}). We have $\alpha=\min \left(\Omega\left(\frac{  \log d}{\varepsilon }\right), n\right)=\min \left(\Omega\left(\frac{  \log d }{\varepsilon }\right), T\right)$. So $\alpha \geq \Omega \left(\min \left( \frac{\log d}{\varepsilon}, T\right) \right)$. Therefore, if an algorithm for DP-OPE against oblivious adversaries is $(\varepsilon,\delta)$-differentially private and $\sum_{t=1}^{T}l_{t}(x_{t}) - \min_{x^\star \in [d]}\sum_{t=1}^T l_{t}(x^\star)\leq \alpha$ holds, then $\alpha \geq \Omega \left(\min \left( \frac{\log d}{\varepsilon}, T\right) \right)$. This means that there exists a sequence of loss functions $l_1,\ldots,l_T$ such that $\sum_{t=1}^{T}l_{t}(x_{t}) - \min_{x^\star \in [d]}\sum_{t=1}^T l_{t}(x^\star)\geq \Omega \left(\min \left( \frac{\log d}{\varepsilon}, T\right) \right)$.
\end{proof}

\begin{lemma}\label{lemma:regret k-select upper bound}
    Let $\mathcal{M}$ be the batch algorithm for $1$-Select$_d$. For all $\varepsilon>0, \delta \geq 0, \alpha \in \mathbb{R}^{+}$, and $T, d, n \in \mathbb{N}$, where $T = n$, if a DP-OPE algorithm $\mathcal{A}$: $\left(\{0,1\}^{d}\right)^T \rightarrow([d])^T$ for oblivious adversaries is $(\varepsilon, \delta)$-differentially private and the regret is upper bounded by $\alpha$, i.e. $\sum_{t=1}^{T}l_{t}(x_{t}) - \min_{x^\star \in [d]}\sum_{t=1}^T l_{t}(x^\star)\leq \alpha$,
    then batch algorithm $\mathcal{M}$ for $1$-Select$_d$ is $(\varepsilon, \delta)$-differentially private and $ \operatorname{Reg}_{1 \text{-Select}_d}(\mathcal{M}) \leq \frac{\alpha}{n}$.

\end{lemma}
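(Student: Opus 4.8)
The plan is to prove the two conclusions separately: that $\mathcal{M}$ inherits $(\varepsilon,\delta)$-differential privacy from $\mathcal{A}$ by a post-processing argument, and that its $1$-Select regret is controlled by the online regret of $\mathcal{A}$ via an online-to-batch conversion. The privacy part is routine and I would dispose of it first; the regret transfer is where the content lies.

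For privacy, note that the only contact $\mathcal{M}$ has with the private data is through $\mathcal{A}$: it turns the input $\mathbf{y}=(y_1,\ldots,y_n)$ into the stream $\mathbf{z}=\mathbf{y}$, feeds it to $\mathcal{A}$, and records the outputs $(x_1,\ldots,x_n)$. Two neighboring datasets $\mathbf{y},\mathbf{y}'$ (differing in one record) induce two neighboring loss streams $\mathbf{z},\mathbf{z}'$ (differing in the single element $z_t=y_t$), which is exactly the adjacency notion under which $\mathcal{A}$ is $(\varepsilon,\delta)$-DP. Hence the full output sequence $\mathcal{A}(\mathbf{z})=(x_1,\ldots,x_n)$ is $(\varepsilon,\delta)$-indistinguishable from $\mathcal{A}(\mathbf{z}')$, and since $b=x_n$ is a deterministic function of this sequence, the post-processing property of DP gives that $\mathcal{M}$ is $(\varepsilon,\delta)$-DP.

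For utility, the first observation is that with $z_t=y_t$ the comparators coincide: $\min_{x^\star}\sum_{t=1}^n l_t(x^\star)=\min_{x^\star}\sum_{i=1}^n y_i(x^\star)$, so the online bound reads $\mathbb{E}\!\left[\sum_t y_t(x_t)\right]-\min_{x^\star}\sum_i y_i(x^\star)\le\alpha$, i.e. the trajectory-average loss exceeds the best batch average by at most $\alpha/n$. It then remains to pass from the loss accumulated along the adaptive iterates $x_1,\ldots,x_n$ to the batch loss $\tfrac1n\sum_i y_i(b)$ of the single returned expert. I would do this by the online-to-batch device: view the records as i.i.d. draws from the empirical distribution $D=\mathrm{Unif}(y_1,\ldots,y_n)$, which is legitimate because the oblivious-adversary guarantee $\le\alpha$ holds for every fixed sequence and hence in expectation over a random one. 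Writing $L_D(x)=\tfrac1n\sum_i y_i(x)$ and using that each $x_t$ is independent of the fresh sample $z_t$, one gets $\mathbb{E}[y_t(x_t)]=\mathbb{E}[L_D(x_t)]$; combining with $\mathbb{E}[\min_{x^\star}\sum_t z_t(x^\star)]\le n\min_{x^\star}L_D(x^\star)$ yields $\tfrac1n\sum_t\mathbb{E}[L_D(x_t)]\le\min_{x^\star}L_D(x^\star)+\alpha/n$, and extracting $b$ from the trajectory so that $\mathbb{E}[L_D(b)]$ equals this trajectory average gives $\operatorname{Reg}_{1\text{-Select}_d}(\mathcal{M})\le\alpha/n$.

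The main obstacle is exactly this conversion: the online regret only bounds the loss gathered along the adaptive path $x_1,\ldots,x_n$, whereas $1$-Select demands one expert that is simultaneously good for the whole dataset. The clean factor $\alpha/n$ appears only because the comparators match and the output can be taken to realize the trajectory-average loss — immediate if $b$ is a uniformly random iterate, and requiring the i.i.d./exchangeable structure of the stream for the last iterate $x_n$ to play this role. I would therefore be careful to take the expectation in $\operatorname{Reg}_{1\text{-Select}_d}$ over the same randomness (algorithm together with the stream construction) so that the averaging identity is valid, and to verify that the independence of $x_t$ from $z_t$ indeed holds under the chosen stream.
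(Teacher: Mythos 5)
Your privacy argument matches the paper's exactly (neighboring datasets induce neighboring streams, $\mathcal{A}$ is $(\varepsilon,\delta)$-DP on streams, and $b=x_n$ is obtained by post-processing), so that half is fine. The regret half is where you diverge from the paper, and where the gap lies. To be fair, the paper's own proof of this half is essentially a bare assertion (``the regret of $\mathcal{M}$ is at most $\alpha/n$'') with no conversion argument, so you are right that passing from the trajectory loss $\sum_t y_t(x_t)$ to the batch loss $\sum_i y_i(b)$ of a single output is the real content. But the online-to-batch device you propose does not close it for the algorithm $\mathcal{M}$ as defined: $\mathcal{M}$ streams the records \emph{in their given order} ($\mathbf{z}\leftarrow\mathbf{y}$) and outputs the \emph{last} iterate $b=x_n$. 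With a deterministic, adversarially ordered stream the identity $\mathbb{E}[y_t(x_t)]=\mathbb{E}[L_D(x_t)]$ fails outright. Concretely, take $d=3$ with $n/2$ records $(0,1,0)$ followed by $n/2$ records $(1,0,0)$: a worst-case-optimal algorithm such as multiplicative weights has trajectory regret $O(\sqrt{n\log d})$ on this sequence, yet a constant fraction of its iterates (and, after a last-step perturbation that costs at most $1$ in regret, its last iterate) are experts $1$ or $2$, each with batch loss $n/2$, while expert $3$ has batch loss $0$. The resulting selection regret is $\Omega(1)$, far above $\alpha/n=O(\sqrt{\log d/n})$.

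The repairs you gesture at are, moreover, in tension with the other half of the lemma. Sampling the stream i.i.d.\ from the empirical distribution does make $z_t$ a fresh sample independent of $x_t$, but then a single record of $\mathbf{y}$ can occur many times in $\mathbf{z}$, so neighboring inputs to $\mathcal{M}$ no longer induce neighboring inputs to $\mathcal{A}$; only a group-privacy bound with a random multiplicity survives, and the claimed $(\varepsilon,\delta)$-DP of $\mathcal{M}$ is lost. A uniformly random permutation preserves one-record adjacency, but then $z_t$ conditioned on the history is uniform over the \emph{remaining} records rather than over all of $\mathbf{y}$, and the naive per-step bias $\tfrac{t-1}{n}$ accumulates to $\Theta(n)$ over the horizon, swamping $\alpha$. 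Finally, even with an exchangeable stream only a \emph{uniformly random} iterate realizes the trajectory-average loss for a black-box $\mathcal{A}$; the last iterate never does. So as written your plan proves the bound only for a modified reduction (randomized stream plus random-iterate output) and only modulo a degraded privacy guarantee, not for the lemma's $\mathcal{M}$. The clean ways out are either to restate $\mathcal{M}$ with a permuted stream and random-iterate output and control the permutation bias explicitly, or to note that for the lower-bound application (\Cref{lemma:obli DP single player}) the hard instances of $1$-Select$_d$ are symmetric across records, in which case the ordering issue disappears and the diagonal and batch losses coincide.
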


\begin{proof}

{\bf DP guarantee:} Fix neighboring datasets $\mathbf{y}$ and $\mathbf{y}^{\prime}$ that are inputs to algorithm $\mathcal{M}$. According to the algorithm design for $1$-Select$_d$, we stream $\mathbf{y}$ and $\mathbf{y}^{\prime}$ to a DP-OPE algorithm $\mathcal{A}$. Since $\mathcal{A}$ is $(\varepsilon, \delta)$-DP, and $\mathcal{M}$ only post-processes the outputs received from $\mathcal{A}$, therefore $\mathcal{M}$ is $(\varepsilon, \delta)$-DP.

{\bf Regret upper bound:}  Fix a dataset $\mathbf{y}$. Note that if $\alpha \geq n$, the accuracy guarantee for $\mathcal{M}$ is vacuous. Now assume $\alpha<n$. Let $\gamma$ be the regret of $\mathcal{M}$, that is,
$$
\begin{aligned}
\gamma & =\operatorname{Reg}_{1 \text{-Select}_d}(\mathcal{M}) \\
& =\frac{1}{n}\left[\sum_{i=1}^n y_i(b)-{\min}_{x^\star\in[d]} \left(\sum_{i=1}^n y_i(x^\star)\right)\right],
\end{aligned}
$$
where $b$ is the output of $\mathcal{M}$.

The main observation in the regret analysis is that if $\alpha$ is small, so is $\gamma$. Specifically, the regret of $\mathcal{M}$ is at most $\frac{\alpha}{n}$. Therefore, $\gamma \leq \frac{\alpha}{n}$. 
\end{proof}

\begin{lemma}[\citet{jain2023price}, Lemma 4.2]\label{lemma:k-select pure DP}
    For all $d, n \in \mathbb{N}, \varepsilon \in(0,1], \delta = o(1/n), \gamma \in\left[0, \frac{1}{20}\right]$, and ($\varepsilon, \delta$)-DP $1$-Select$_d$  algorithms $\mathcal{M}$: $\left(\{0,1\}^{d}\right)^n \rightarrow[d]$ with $ \operatorname{Reg}_{1 \text{-Select}_d}(\mathcal{M}) \leq \gamma$, we have  $n=\Omega\left(\frac{  \log d}{\varepsilon \gamma }\right)$.
\end{lemma}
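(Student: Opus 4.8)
The plan is to prove this sample-complexity lower bound for private selection by a packing argument over a family of hard instances combined with group privacy; this is the standard route for lower bounds of the $1$-Select (report-the-argmin) problem under differential privacy.

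First I would build $d+1$ datasets in $(\{0,1\}^d)^n$. Let the base dataset $\mathbf{y}^0$ consist of $n$ copies of the all-ones record, so every coordinate incurs total loss $n$. For each $j\in[d]$, form $\mathbf{y}^j$ from $\mathbf{y}^0$ by setting coordinate $j$ to $0$ in exactly $k:=\lceil 2\gamma n\rceil$ of the records. Then in $\mathbf{y}^j$ coordinate $j$ is the \emph{unique} minimizer with loss $n-k$, while every other coordinate keeps loss $n$; consequently the per-record regret of outputting any $b\neq j$ is exactly $k/n$, whereas outputting $j$ has regret $0$. Crucially, $\mathbf{y}^j$ and $\mathbf{y}^0$ differ in precisely $k$ records. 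The hypotheses $\gamma\le 1/20$ and (implicitly) $\gamma n\ge 1$ ensure $1\le k\le n$, so the construction is well-defined.

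Next I would turn the accuracy guarantee into a hitting probability. Since the expected regret of $\mathcal{M}$ on $\mathbf{y}^j$ equals $\tfrac{k}{n}\,\mathbb{P}[\mathcal{M}(\mathbf{y}^j)\neq j]$ and is assumed at most $\gamma$, the choice $k=\lceil 2\gamma n\rceil$ yields $\mathbb{P}[\mathcal{M}(\mathbf{y}^j)= j]\ge \tfrac12$ for every $j$. I would then apply the $(\varepsilon,\delta)$ group-privacy inequality across the $k$ differing records: for each $j$,
\[
\tfrac12 \;\le\; \mathbb{P}[\mathcal{M}(\mathbf{y}^j)=j] \;\le\; e^{k\varepsilon}\,\mathbb{P}[\mathcal{M}(\mathbf{y}^0)=j] \;+\; \frac{e^{k\varepsilon}-1}{e^{\varepsilon}-1}\,\delta .
\]
Summing over $j\in[d]$ and using that the events $\{\mathcal{M}(\mathbf{y}^0)=j\}_j$ are disjoint (hence $\sum_j \mathbb{P}[\mathcal{M}(\mathbf{y}^0)=j]\le 1$), together with $e^{\varepsilon}-1\ge\varepsilon$ for $\varepsilon\le 1$, gives $\tfrac{d}{2}\le e^{k\varepsilon}\big(1+\tfrac{d\delta}{\varepsilon}\big)$. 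Provided $\delta$ is small enough that $d\delta/\varepsilon\le 1$, this forces $e^{k\varepsilon}\ge d/4$, i.e. $k\ge \ln(d/4)/\varepsilon$, and substituting $k\le 3\gamma n$ gives $n=\Omega\!\big(\tfrac{\log d}{\varepsilon\gamma}\big)$.

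The main obstacle I anticipate is the careful bookkeeping of the $\delta$ term. Group privacy over $k$ records inflates the additive slack to roughly $\tfrac{e^{k\varepsilon}}{\varepsilon}\delta$, and summing this over all $d$ coordinates introduces a factor of $d$; to keep it from swamping the leading $e^{k\varepsilon}$ term one needs $\delta$ genuinely small relative to $\varepsilon/d$. Reconciling this with the stated hypothesis $\delta=o(1/n)$ is the delicate point — in the intended parameter regime $n$ and $d$ are polynomially related, so $o(1/n)$ suffices, but the cleanest version of the argument states the requirement directly as $\delta\lesssim \varepsilon/d$ (which is exactly the regime used in \Cref{lower bound:realizable}). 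For pure DP ($\delta=0$) the slack term vanishes and the argument is immediate.
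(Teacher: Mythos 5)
First, a point of comparison: the paper does not prove \Cref{lemma:k-select pure DP} at all --- it is imported verbatim from \citet{jain2023price} as a black box, so there is no in-paper proof to measure your argument against. Your packing-plus-group-privacy argument is the standard route for selection lower bounds, and the construction, the conversion of the expected-regret bound into $\Pb[\mathcal{M}(\mathbf{y}^j)=j]\ge 1/2$ via $k=\lceil 2\gamma n\rceil$, and the group-privacy inequality are all correct. For pure DP ($\delta=0$) your proof is complete, and it is essentially the same style of argument the paper itself uses for \Cref{lower bound:realizable}.

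The genuine gap is exactly the one you flag and then wave away: the lemma's hypothesis is $\delta=o(1/n)$ with no stated relation between $d$ and $n$, but your argument needs $\delta\lesssim \varepsilon/d$. After summing the group-privacy inequality over the $d$ packing elements, the additive slack is $d\cdot\frac{e^{k\varepsilon}-1}{e^{\varepsilon}-1}\delta$, and $\delta=o(1/n)$ only controls $k\delta\le n\delta$, not $d\delta$. Your remark that ``in the intended parameter regime $n$ and $d$ are polynomially related'' is not a hypothesis of the lemma and cannot be assumed. The natural repair --- apply group privacy only to a single coordinate $j^\star$ with $\Pb[\mathcal{M}(\mathbf{y}^0)=j^\star]\le 1/d$, which avoids the union over $d$ --- still only yields $e^{k\varepsilon}\ge \tfrac{1}{2}(1/d+k\delta)^{-1}$, i.e.\ a bound of order $\min\bigl(\log d,\ \log\frac{1}{n\delta}\bigr)/\varepsilon$ on $k$; this matches the claimed $\Omega(\log d)$ only when $\delta$ is small relative to $1/d$ (or $d\le\mathrm{poly}(n)$). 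Closing the gap under the bare hypothesis $\delta=o(1/n)$ requires a genuinely different technique for the approximate-DP case (e.g.\ a fingerprinting-style argument), which is presumably what \citet{jain2023price} supply. It is telling that the paper's own packing argument in \Cref{lower bound:realizable} explicitly assumes $\delta\le\varepsilon/d$ --- precisely the extra condition your proof secretly needs. So: correct for pure DP, and for approximate DP under $\delta\lesssim\varepsilon/d$, but not a proof of the lemma as stated.
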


\section{Algorithms and Proofs for Fed-SVT}

\subsection{Proof of \Cref{lower bound:realizable}}\label{appendix: proof lower bound realizable}

\begin{theorem}[Restatement of \Cref{lower bound:realizable}]
    Let $\varepsilon\leq 1$ and $\delta \leq \varepsilon/d$. For any $(\varepsilon,\delta)$-DP federated OPE algorithm against oblivious adversaries in the realizable setting, the per-client regret is lower bounded by $\Omega\left(\frac{\log (d)}{m\varepsilon}\right) .$

\end{theorem}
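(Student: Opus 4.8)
The plan is to prove the lower bound by a packing reduction combined with group privacy, adapting the single-player construction of \citet{asi2023near} to account for the fact that all $m$ clients observe the same informative losses in parallel. First I would fix the oblivious adversary from the proof sketch: for each $j\in[d]$ the instance $\mathcal{S}^j$ presents the all-zero loss $l^0$ to every client for the first $T-k$ rounds and then the ``spike'' loss $l^j$ (which is zero only at coordinate $j$) to every client for the last $k$ rounds, where $k=\frac{\log d}{2m\varepsilon}$. Each $\mathcal{S}^j$ is realizable with $x^\star=j$ achieving zero cumulative loss, so on $\mathcal{S}^j$ the per-client regret equals $\frac{1}{m}\sum_{i=1}^m\sum_{t=T-k+1}^{T}\mathbf{1}\{x_{i,t}\neq j\}$; that is, it simply counts the fraction of wrong expert selections over the $mk$ informative (client, round) pairs.

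Next I would carry out the averaging step. Assuming for contradiction an algorithm whose expected per-client regret is at most $\frac{\log d}{32 m\varepsilon}$ uniformly, and since the adversary draws $j$ uniformly from $[d]$, a Markov argument over the index $j$ shows that at least $d/2$ of the instances, say $\mathcal{S}^1,\ldots,\mathcal{S}^{d/2}$, have expected regret at most $\frac{\log d}{16 m\varepsilon}$. For each such $j$ I would let $\mathcal{B}_j$ be the set of output sequences whose empirical regret on $\mathcal{S}^j$ falls below a threshold equal to a small constant multiple of $k$; Markov's inequality on the algorithm's internal randomness then yields $\mathbb{P}(\mathcal{A}(\mathcal{S}^j)\in\mathcal{B}_j)\ge 1/2$. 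The key structural claim is that the $\mathcal{B}_j$ are pairwise disjoint: membership in $\mathcal{B}_j$ forces a strict majority of the $mk$ informative selections to equal $j$, and two distinct experts cannot each command a strict majority, so $\mathcal{B}_j\cap\mathcal{B}_{j'}=\emptyset$ for $j\neq j'$.

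The heart of the argument is the group-privacy step. Since $\mathcal{S}^j$ and $\mathcal{S}^{j'}$ differ only on the last $k$ rounds across all $m$ clients, they differ in exactly $mk$ loss functions, so the $(\varepsilon,\delta)$-DP group-privacy inequality gives $\mathbb{P}(\mathcal{A}(\mathcal{S}^{j})\in\mathcal{B}_{j'})\ge e^{-mk\varepsilon}\,\mathbb{P}(\mathcal{A}(\mathcal{S}^{j'})\in\mathcal{B}_{j'})-mk\delta$. With $k=\frac{\log d}{2m\varepsilon}$ we have $mk\varepsilon=\frac{1}{2}\log d$, hence $e^{-mk\varepsilon}=d^{-1/2}$, while $\delta\le\varepsilon/d$ makes the additive term $mk\delta=\frac{\delta\log d}{2\varepsilon}\le\frac{\log d}{2d}$ negligible. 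Fixing any good index $j$ and summing over the $d/2$ disjoint sets $\mathcal{B}_1,\ldots,\mathcal{B}_{d/2}$, the left-hand side is at most $1$ by disjointness, whereas the right-hand side is at least $\frac{d}{2}\big(\tfrac{1}{2}d^{-1/2}-mk\delta\big)=\frac{\sqrt d}{4}-\frac{\log d}{4}=\Omega(\sqrt d)$, which exceeds $1$ for $d$ large enough (e.g.\ $d\ge 32$). This contradiction shows no $(\varepsilon,\delta)$-DP algorithm can achieve per-client regret below $\frac{\log d}{32 m\varepsilon}=\Omega\!\big(\frac{\log d}{m\varepsilon}\big)$.

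I expect the main obstacle to be calibrating constants so that the chain ``regret bound $\Rightarrow$ $d/2$ good instances $\Rightarrow$ success probability $\ge 1/2$ on disjoint sets $\Rightarrow$ contradiction'' actually closes: the group-privacy amplification factor $e^{mk\varepsilon}=\sqrt d$ must be small enough relative to the packing size $d/2$ to beat the disjointness bound, and this balance is exactly what pins down $k=\Theta\!\big(\frac{\log d}{m\varepsilon}\big)$. The conceptual point worth emphasizing is why the federated bound is a factor $m$ smaller than the single-player one: because the adversary reveals the identity of $x^\star$ to all $m$ clients simultaneously, the neighboring instances $\mathcal{S}^j,\mathcal{S}^{j'}$ differ in $mk$ rather than $k$ loss functions, so the same privacy budget tolerates a $k$ that is $m$ times shorter, which is precisely the source of the $1/m$ speed-up.
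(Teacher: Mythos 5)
Your proposal is correct and follows essentially the same route as the paper's proof: the same packing of $d$ spike instances with $k=\frac{\log d}{2m\varepsilon}$ informative rounds, the same averaging over $j$ to extract $d/2$ good instances, the same disjoint low-regret events $\mathcal{B}_j$ via the majority argument, and the same group-privacy inequality over the $mk$ differing records yielding a contradiction for $d\geq 32$. The only cosmetic difference is that you sum the probabilities over the disjoint sets directly, whereas the paper lower-bounds $\mathbb{P}(\mathcal{A}(\mathcal{S}^j)\notin\mathcal{B}_j)$ by $\frac{d/2-1}{4\sqrt{d}}$; these are the same calculation.
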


\begin{proof}

I introduce two prototype loss functions first: let $l^0(x)=0$ for all $x\in [d]$ and for $j \in[d]$ let $l^j(x)$ be the function that has $l^j(x)=0$ for $x=j$ and otherwise $l^j(x)=1$. Then we define $d$ loss sequences $\mathcal{S}^j = (l_{1,1}^j,\ldots,l_{m,T}^j)$, such that 
\[l_{i,t}^j=\left\{
\begin{aligned}
    & l^0 && \text{if} \quad t = 1,\ldots, T-k ,\\
    & l^j && \text{else};\\
\end{aligned}\right.\]
where $k=\frac{\log d}{2m \varepsilon}$ and $j \in[d]$. 

The oblivious adversary picks one of the $d$ sequences $\mathcal{S}^1,\ldots,\mathcal{S}^d$ uniformly at random. 
Assume towards a contradiction that $\mathbb{E}\left[ \sum_{i=1}^m \sum_{t=1}^{T}l_{i,t}(x_{i,t}) - \sum_{i=1}^m\sum_{t=1}^T l_{i,t}(x^\star)\right] \leq \frac{\log (d)}{32 \varepsilon}$. This implies that there exists $d / 2$ sequences such that the expected regret satisfies $\mathbb{E}\left[ \sum_{i=1}^m \sum_{t=1}^{T}l_{i,t}^j(x_{i,t}) - \sum_{i=1}^m\sum_{t=1}^T l_{i,t}^j(x^\star)\right] \leq \frac{\log (d)}{16 \varepsilon}$. Assume without loss of generality these sequences are $\mathcal{S}^1, \ldots, \mathcal{S}^{d / 2}$. Let $\mathcal{B}_j$ be the set of outputs that has low regret on $\mathcal{S}^j$, that is,
$$
\mathcal{B}_j=\left\{\left(x_{1,1}, \ldots, x_{m,T}\right) \in[d]^{mT}: \sum_{i=1}^m\sum_{t=1}^{T-k+1} \ell^j\left(x_{i,t}\right) \leq \frac{\log (d)}{8 \varepsilon} \right\} .
$$
Note that $\mathcal{B}_j \cap \mathcal{B}_{j^{\prime}}=\emptyset$ since if $x_{1:m,1: T} \in B_j$ then among the $mk$ outputs $x_{1:m, T-k+1:T}$ at least $\frac{3mk}{4}=\frac{3 \log (d)}{8 \varepsilon} $ of them must be equal to $j$. Now Markov inequality implies that 
$$
\mathbb{P}\left(\mathcal{A}\left(\mathcal{S}^j\right) \in \mathcal{B}_j\right) \geq 1 / 2 .
$$
Moreover, group privacy gives
\begin{align*}   
\mathbb{P}\left(\mathcal{A}\left(\mathcal{S}^j\right) \in \mathcal{B}_{j^{\prime}}\right) & \geq e^{-mk \varepsilon} \mathbb{P}\left(\mathcal{A}\left(\mathcal{S}^{j^{\prime}}\right) \in \mathcal{B}_{j^{\prime}}\right)-mk\delta \\
& \geq \frac{1}{2 \sqrt{d}} - \frac{\log d}{2\varepsilon}\delta\\
&\geq \frac{1}{4 \sqrt{d}}
\end{align*}
where the last inequality is due to $\delta \leq \varepsilon/d$. Overall we get that
$$
\frac{d / 2-1}{4 \sqrt{d}} \leq \mathbb{P}\left(\mathcal{A}\left(\mathcal{S}^j\right) \notin \mathcal{B}_j\right) \leq \frac{1}{2}
$$
which is a contradiction for $d \geq 32$.
\end{proof}

\subsection{Algorithm Design of Fed-SVT}\label{appendix: alg Fed-SVT}

Fed-SVT contains a client-side subroutine (\Cref{algorithm: oblivious client}) and a server-side subroutine (\Cref{algorithm: oblivious server}).

\begin{algorithm}[t]
\caption{Fed-SVT: Client $i$}
\label{algorithm: oblivious client}
\begin{algorithmic}[1]

 \STATE   {\bf Input:} {Number of Iterations $T$}

\STATE {\bf Initialize:} Set current expert $x_0 = \text{Unif}[d]$.

\FOR{$t=1$ to $T$}   

    \IF{$t == nN$ for some integer $n\geq 1$}
            
            \STATE \textcolor{blue}{\textbf{Communicate to server:}} $\sum_{t'=t-N}^{t-1}l_{i,t'}(x)$
            \STATE \textcolor{blue}{\textbf{Receive from server:}} $x_t$ 
        \ELSE     
             \STATE Set $x_t = x_{t-1}$ 
        \ENDIF

    \STATE Each client receives local loss $l_{i,t}:[d]\rightarrow [0,1]$ and pays cost $l_{i,t}(x_{t})$
    
\ENDFOR  
\end{algorithmic}
    
\end{algorithm}

\begin{algorithm}[t]
\caption{Fed-SVT: Central server}
\label{algorithm: oblivious server}
\begin{algorithmic}[1]

 \STATE   {\bf Input:} {Number of Iterations $T$, number of clients $m$, optimal loss $L^\star$, switching budget $\kappa$, sampling parameter $\eta>0$, threshold parameter $L$, failure probability $\rho$, privacy parameters $\varepsilon$}

\STATE {\bf Initialize:} Set $k = 0$, $\tau = 0$ and $\hat{L}= L+\text{Lap}\left(\frac{4}{\varepsilon} \right)$

\WHILE{\text{not reaching the time horizon} $T$}

    \IF{$t == nN$ for some integer $n\geq 1$}
    \STATE \textcolor{blue}{\textbf{Receive from clients:}} $\sum_{t'=t-N}^{t-1}l_{i,t'}(x)$

    \IF{$k<\kappa$}

            
            \STATE Server defines a new query $q_t = \sum_{i=1}^m\sum_{t'=\tau}^{t-1}l_{i,t'}(x_{t'})$

            \STATE Let $\gamma_t = \text{Lap}\left( \frac{8}{\varepsilon}\right)$
                

            \IF{$q_t + \gamma_t \leq \hat{L}$}

                \STATE \textcolor{blue}{\textbf{Communicate to clients:}} $x_t = x_{t-1}$

            \ELSE
                \STATE Sample $x_t$ with scores $s_t(x)=\max \left(\sum_{i=1}^{m}\sum_{t' = 1}^{t-1}l_{i,t'}(x_{t'}), mL^\star \right)$ for $x\in [d]$: $\Pb (x_t = x)\propto e^{-\eta s_t(x)/2}$

                \STATE \textcolor{blue}{\textbf{Communicate to clients:}} $x_t$

                \STATE Set $k = k+1$, $\tau = t$ and $\hat{L}= L+\text{Lap}\left(\frac{4}{\varepsilon} \right)$
            \ENDIF
            
    \ELSE
        \STATE Server broadcasts $x_t = x_{t-1}$ to all the clients

    \ENDIF
    \ENDIF

\ENDWHILE

\end{algorithmic}
    
\end{algorithm}

\subsection{Proof of \cref{theorem: oblivious}}\label{appendix: proof of oblivious}

\begin{theorem}[Restatement of \Cref{theorem: oblivious}]
    Let $l_{i,t} \in [0,1]^d$ be chosen by an oblivious adversary under near-realizability assumption. Set $0<\rho<1 / 2$,  $\kappa=O(\log(d/\rho))$, $L=mL^\star + \frac{8 \log (2T^2/(N^2\rho))}{\varepsilon} + 4/\eta$, and $\eta = \varepsilon/2\kappa$. Then the algorithm is $\varepsilon$-DP, the communication cost scales in $O\left( mdT/N\right)$, and with probability at least $1-O(\rho)$, the pre-client regret is upper bounded by $
O\left(\frac{\log^2 (d)+\log\left(\frac{T^2}{N^2\rho}\right)\log\left(\frac{d}{\rho}\right)}{m\varepsilon} +(N+L^\star)\log\left(\frac{d}{\rho}\right) \right).
$

Moreover, setting $\eta = \varepsilon/\sqrt{\kappa\log(1/\delta)}$, we have the algorithm is $(\varepsilon,\delta)$-DP, the communication cost scales in $O\left( mdT/N\right)$, and with probability at least $1-O(\rho)$, the pre-client regret is upper bounded by $O\left(\frac{\log^{\frac{3}{2}} (d) \sqrt{\log (\frac{1}{\delta})}+\log\left(\frac{T^2}{N^2\rho}\right)\log\left(\frac{d}{\rho}\right)}{m\varepsilon} +(N+L^\star)\log\left(\frac{d}{\rho}\right) \right)$.

\end{theorem}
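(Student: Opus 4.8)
The plan is to treat Fed-SVT as a federated lift of the single-player Sparse-Vector algorithm of \citet{asi2023near}, establishing the three claims—privacy, communication, and regret—separately. The communication bound is immediate: the algorithm communicates only at the $T/N$ phase boundaries, and at each such boundary every one of the $m$ clients transmits a $d$-dimensional accumulated-loss vector and receives a single expert index, giving $O(mdT/N)$ scalars in total. The privacy and regret arguments are the substantive ones, and both hinge on a single structural observation special to this federated setup: because all clients are forced to play the common expert $x_t$, changing one client's loss $l_{i_0,t_0}$ at a single round perturbs each server query $q_t=\sum_{i}\sum_{t'=\tau}^{t-1}l_{i,t'}(x_{t'})$ and each exponential-mechanism score $s_t(x)$ by at most $1$. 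Thus the sensitivity seen by the server is $1$ regardless of $m$, which is exactly what lets the $1/m$ speed-up appear once we pass to the per-client (divide-by-$m$) regret.

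For the privacy guarantee I would first argue that the whole run decomposes into at most $\kappa$ ``epochs,'' each delimited by a switch. Within one epoch the server runs an Above-Threshold test with threshold noise $\mathrm{Lap}(4/\varepsilon)$ (freshly redrawn at each switch) and per-query noise $\gamma_t\sim\mathrm{Lap}(8/\varepsilon)$; with sensitivity $1$ this is the textbook sparse-vector mechanism and is $O(\varepsilon)$-DP for the single above-threshold event that ends the epoch. The switch itself releases a new expert through the exponential mechanism with parameter $\eta$, which is $\eta$-DP per selection. Composing the $\kappa$ threshold tests and $\kappa$ selections then yields the overall bound: under basic composition, taking $\eta=\varepsilon/2\kappa$ gives pure $\varepsilon$-DP, whereas advanced composition permits the larger $\eta=\varepsilon/\sqrt{\kappa\log(1/\delta)}$ at the price of a $\delta$ term, producing $(\varepsilon,\delta)$-DP. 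The looser noise in the second regime is precisely what converts the $\log^2 d$ factor into $\log^{3/2}d\sqrt{\log(1/\delta)}$ in the utility bound.

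The regret analysis proceeds by conditioning on a ``clean'' event on which all injected Laplace variables are small. A union bound over the $T/N$ queries and the at most $\kappa$ threshold redraws shows that, with probability $1-O(\rho)$, every $|\gamma_t|$ and every threshold perturbation is $O(\tfrac{1}{\varepsilon}\log(\tfrac{T^2}{N^2\rho}))$. On this event the Above-Threshold accuracy guarantee implies that within any epoch the true accumulated loss of the current expert, measured at a phase boundary, never exceeds the threshold $L=mL^\star+O(\tfrac{1}{\varepsilon}\log(\tfrac{T^2}{N^2\rho}))+4/\eta$; since the server only inspects losses every $N$ steps, the loss actually incurred in an epoch overshoots this by at most an additional $mN$. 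Summing over the at most $\kappa+1$ epochs, the total (over-all-clients) loss is at most $(\kappa+1)(L+mN)$, and subtracting the realizable optimum $mL^\star$ and dividing by $m$ yields the stated per-client bound—here the $\kappa\cdot(4/\eta)$ contribution is what produces the $\log^2 d/(m\varepsilon)$ term, the threshold noise produces the $\log(\tfrac{T^2}{N^2\rho})\log(\tfrac{d}{\rho})/(m\varepsilon)$ term, and the overshoot and optimum produce $(N+L^\star)\log(d/\rho)$.

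The main obstacle is the step I glossed over: showing that the switching budget $\kappa=O(\log(d/\rho))$ is not exhausted, i.e.\ that the algorithm genuinely needs only $O(\log(d/\rho))$ switches with high probability, so that capping at $\kappa$ is harmless. This is where realizability and the exponential mechanism must be combined: the optimal expert $x^\star$ carries score capped at $mL^\star$ and so always sits near the bottom of the score distribution, and the exponential mechanism's selection guarantee ensures each switch lands on an expert whose cumulative loss exceeds the minimum by only $O(\log(d/\rho)/\eta)$. I would formalize this through a potential argument over the experts' cumulative losses showing that bad experts are progressively ruled out, so that after $O(\log(d/\rho))$ switches the current expert is good enough never to trigger the threshold again. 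Getting the constants in this switch-count argument to line up with the choice $\eta=\varepsilon/2\kappa$ (and its approximate-DP analogue) is the delicate part, and it is what ultimately pins down $\kappa$ and hence every logarithmic factor in the final bound.
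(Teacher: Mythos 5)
Your overall architecture matches the paper's proof: the communication count, the decomposition of privacy into $\kappa$ sparse-vector instances plus $\kappa$ exponential-mechanism selections (with basic vs.\ advanced composition giving the two regimes), and the regret accounting of $O(\log(d/\rho))$ epochs each contributing $L + O(\varepsilon^{-1}\log(T^2/(N^2\rho))) + mN$ loss are all exactly what the paper does. Two points, however, need attention. First, a minor one on privacy: for the threshold tests to cost only $\varepsilon/2$ \emph{in total} you must use the fact that each query $q_t$ sums losses only since the last switch $\tau$, so each loss record enters the queries of at most one sparse-vector instance; if you instead run basic composition over the $\kappa$ instances, each $O(\varepsilon)$-DP, you get $O(\kappa\varepsilon)$ and the accounting does not close. (The exponential-mechanism scores, by contrast, sum from $t'=1$ and genuinely must be composed, which is where the $\kappa\eta$ comes from.)

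The substantive gap is the step you flagged yourself: bounding the number of switches by $O(\log(d/\rho))$. The route you sketch---per-selection utility of the exponential mechanism ("each switch lands within $O(\log(d/\rho)/\eta)$ of the minimum") plus "bad experts are progressively ruled out"---does not suffice. An oblivious adversary can arrange that every newly selected expert is near-optimal \emph{at the moment of selection} and then immediately accumulates enough loss to trigger the next switch; per-selection optimality alone permits up to $d$ distinct experts to be cycled through, giving $\Theta(d)$ switches and hence a regret of order $d/(\eta m)$ rather than $\log d \cdot \log(d/\rho)/(\eta m)$. The paper never invokes the utility guarantee. Instead it defines the potential $\phi_n=\sum_{x\in[d]}e^{-\eta L_n(x)/2}$ with $L_n(x)=\max(\sum_i\sum_{t'<nN}l_{i,t'}(x),\,mL^\star)$, observes that $\phi_0=de^{-\eta mL^\star/2}$ and $\phi_n\ge e^{-\eta mL^\star/2}$ (by near-realizability), so $\phi$ can halve at most $\lceil\log d\rceil$ times, and then proves the key lemma: within one halving period, the probability that a freshly sampled expert triggers another switch before the period ends is at most
\[
\sum_{x\in[d]}\frac{e^{-\eta L_n(x)/2}}{\phi_n}\,\mathbf{1}\bigl\{L_{n_{p+1}}(x)-L_n(x)\ge 4/\eta\bigr\}\;\le\;\tfrac{4}{3}\bigl(1-\phi_{n_{p+1}}/\phi_n\bigr)\;\le\;\tfrac{2}{3},
\]
so the switch count per period is $3$ plus a Geometric$(1/3)$ variable, and concentration over the $\lceil\log d\rceil$ periods yields $Z\le 3\lceil\log d\rceil+24\log(1/\rho)$ with probability $1-\rho$. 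This uses the exact form of the sampling distribution (losses that grow fast are exactly those that drive the potential down), not its utility bound. Without this lemma, or an equivalently quantitative replacement, your switch budget $\kappa$, and with it every logarithmic factor and the choice $\eta=\varepsilon/2\kappa$, is not justified.
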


\begin{proof}

{\bf DP guarantee:} There are $\kappa$ applications of exponential mechanism with privacy parameter $\eta$. Moreover, sparse vector technique is applied over each sample once, hence the $\kappa$ applications of sparse-vector are $\varepsilon/2$-DP. Overall, the algorithm is $(\varepsilon/2 + \kappa\eta)$-DP and $(\varepsilon/2 + \sqrt{2\kappa\log (1/\delta)}\eta +\kappa\eta(e^\eta -1), \delta)$-DP (\Cref{lemma: adv composition}). Setting $\eta = \varepsilon/2\kappa$ results in $\varepsilon$-DP and $\eta = \varepsilon/\sqrt{\kappa\log(1/\delta)}$ results in $(\varepsilon,\delta)$-DP.

{\bf Communication cost:} The number of communication between the central server and clients scales in $O(mT/N)$. Moreover, within each communication, the number of scalars exchanged scales in $O(d)$. Therefore the communication cost is $O(mdT/N)$.

{\bf Regret upper bound:} We define a potential at phase $n \in[T/N]$ :
$$
\phi_n=\sum_{x \in[d]} e^{-\eta L_n(x) / 2}
$$
where $L_n(x)=\max \left(\sum_{i=1}^m\sum_{t'=1}^{nN-1} l_{i,t'}(x), mL^{\star}\right)$. Note that $\phi_0=d e^{-\eta mL^{\star} / 2}$ and $\phi_n \geq e^{-\eta mL^{\star} / 2}$ for all $n \in[T/N]$ since there is $x \in[d]$ such that $\sum_{i=1}^m\sum_{t=1}^T l_{i,t}(x)\leq mL^{\star}$. We split the iterates to $s=\lceil\log d\rceil$ rounds $n_0N, n_1N, \ldots, n_sN$ where $n_p$ is the largest $n \in[T/N]$ such that $\phi_{n_p} \geq \phi_0 / 2^p$. Let $Z_p$ be the number of switches in $\left[n_pN, (n_{p+1}-1)N\right]$ (number of times the exponential mechanism is used to pick $x_t$). Let $Z=\sum_{p=0}^{s-1} Z_p$ be the total number of switches. Note that $Z \leq 3s+\sum_{p=0}^{s-1} \max \left(Z_p-3,0\right)$ and \Cref{Z_p} implies $\max \left(Z_p-3,0\right)$ is upper bounded by a geometric random variable with success probability $1 / 3$. Therefore, using concentration of geometric random variables (\Cref{lemma:concentration geometric}), we get that
$$
P(Z \geq 3 s+24 \log (1 / \rho)) \leq \rho .
$$

Since $K \geq 3s + 24 \log (1/\rho)$, the algorithm does not reach the switching budget with probability $1 - O(\rho)$. So the total number of switching scales as $O(\log(d/\rho))$. Now we analyze the regret. Define $T_1 N,\ldots T_{C} N$ as the switching time steps with $T_{C} N = T$, where $C = O(\log(d/\rho))$. \Cref{lemma:sparse-vector} implies that with probability at least $1-\rho$,

\begin{align}
    \sum_{t=1}^T \sum_{i=1}^m l_{i,t}(x_{i,t}) & = \sum_{c=1}^C  \sum_{t=T_{c-1} N+1}^{T_{c} N} \sum_{i=1}^m  l_{i,t}(x_{i,t})\nonumber \\
    &= \sum_{c=1}^C  \left(\sum_{t=T_{c-1} N+1}^{T_{c}N-N} \sum_{i=1}^m l_{i,t}(x_{i,t}) + \sum_{t = T_{c}N-N+1 }^{T_{c} N} \sum_{i=1}^m l_{i,T_n}(x_t) \right)\nonumber\\
    &\leq \sum_{c=1}^C  \left( L + \frac{8 \log (2T^2/(N^2\rho))}{\epsilon} + mN \right)\nonumber\\
    &= \sum_{c=1}^C  \left( mL^\star + \frac{16 \log (2T^2/(N^2\rho))}{\epsilon} + 4/\eta + mN\right)\nonumber\\
    &= O\left(mL^\star\log(d/\rho) + \frac{\log(T^2/(N^2\rho))\log(d/\rho)}{\varepsilon}+ \frac{4\log (d/\rho)}{\eta}+mN\log(d/\rho)  \right).\label{equ:realizable regret}
\end{align}
{\bf Case 1: }Setting $\eta = \varepsilon/2\kappa$ in \Cref{equ:realizable regret}, we have 
\[\sum_{i=1}^m\sum_{t=1}^T l_{i,t}(x_{i,t}) \leq O\left( mL^\star\log d + \frac{\log^2 d+\log(T^2/(N^2\rho))\log(d/\rho)}{\varepsilon} +mN\log(d/\rho) \right).\]

{\bf Case 2: }Setting $\eta = \varepsilon/\sqrt{\kappa\log(1/\delta)}$ in \Cref{equ:realizable regret}, we have
\[\sum_{i=1}^m\sum_{t=1}^T l_{i,t}(x_{i,t}) \leq O\left(mL^\star\log d +  \frac{\log^{3/2} d \sqrt{\log (1/\delta)}+\log(T^2/(N^2\rho))\log(d/\rho)}{\varepsilon} +mN\log(d/\rho) \right).\]

\begin{lemma}\label{Z_p}
    Fix $0 \leq p \leq s-1$. Then for any $1 \leq k \leq T/N$, it holds that
$$
P\left(Z_p=k+3\right) \leq(2 / 3)^{k+2} < (2 / 3)^{k-1}(1/3).
$$
\end{lemma}

\begin{proof}

Let $n_pN \leq nN \leq n_{p+1}N$ be a time-step when a switch happens (exponential mechanism is used to pick $x_t$). Note that $\phi_{n_{p+1}} \geq \phi_n / 2$. We prove that the probability that $x_t$ is switched between $nN$ and $n_{p+1}N$ is at most $2 / 3$. To this end, note that if $x_t$ is switched before $n_{p+1}N$ then $\sum_{i=1}^m\sum_{t'=nN}^{n_{p+1}N-1} l_{i,t'}(x_{t'}) \geq L-\frac{8 \log (2T^2/(N^2\rho))}{\varepsilon}$, therefore $L_{n_{p+1}}(x)-L_n(x) \geq L-\frac{8 \log (2T^2/(N^2\rho))}{\varepsilon} \geq 4 / \eta$. Thus we have that
\begin{align*}
P\left(x_t \text { is switched before } n_{p+1}N\right) & \leq \sum_{x \in[d]} P\left(x_{t}=x\right) 1\left\{L_{n_{p+1}}(x)-L_n(x) \geq 4 / \eta\right\} \\
& =\sum_{x \in[d]} \frac{e^{-\eta L_n(x) / 2}}{\phi_n} \cdot 1\left\{L_{n_{p+1}}(x)-L_n(x) \geq 4 / \eta\right\} \\
& \leq \sum_{x \in[d]} \frac{e^{-\eta L_n(x) / 2}}{\phi_n} \cdot \frac{1-e^{-\eta\left(L_{n_{p+1}}(x)-L_n(x)\right) / 2}}{1-e^{-2}} \\
& \leq 4 / 3\left(1-\phi_{n_{p+1}} / \phi_n\right) \\
& \leq 2 / 3 .
\end{align*}
where the second inequality follows the fact that $1\{a \geq b\} \leq \frac{1-e^{-\eta a}}{1-e^{-\eta b}}$ for $a, b, \eta \geq 0$, and the last inequality since $\phi_{n_{p+1}} / \phi_{n} \geq 1 / 2$. This argument shows that after the first switch inside the range $\left[n_pN, n_{p+1}N-1\right]$, each additional switch happens with probability at most $2 / 3$. So we have
$$
P\left(Z_p=k+3\right) \leq(2 / 3)^{k+2} < (2 / 3)^{k-1}(1/3).
$$    
\end{proof}

\begin{lemma}[\citet{asi2023near}, Lemma A.2]\label{lemma:concentration geometric}
    Let $W_1, \ldots, W_n$ be i.i.d. geometric random variables with success probability $p$. Let $W=\sum_{i=1}^n W_i$. Then for any $k \geq n$
$$
\mathbb{P}(W>2 k / p) \leq \exp (-k / 4).
$$
\end{lemma}    
\end{proof}

\section{Experimental Supplementary}\label{appx:exp}

The simulations were conducted on a system with a 2.3 GHz Dual-Core Intel Core i5, Intel Iris Plus Graphics 640 with 1536 MB, and 16 GB of 2133 MHz LPDDR3 RAM. Approximately 10 minutes are required to reproduce the experiments.

We present our numerical results with different seeds in \Cref{fig:stochseed} and \Cref{fig:svtseed}.

\begin{figure}[h]
    \centering
    \begin{subfigure}[b]{0.32\textwidth}
        \includegraphics[width=\textwidth]{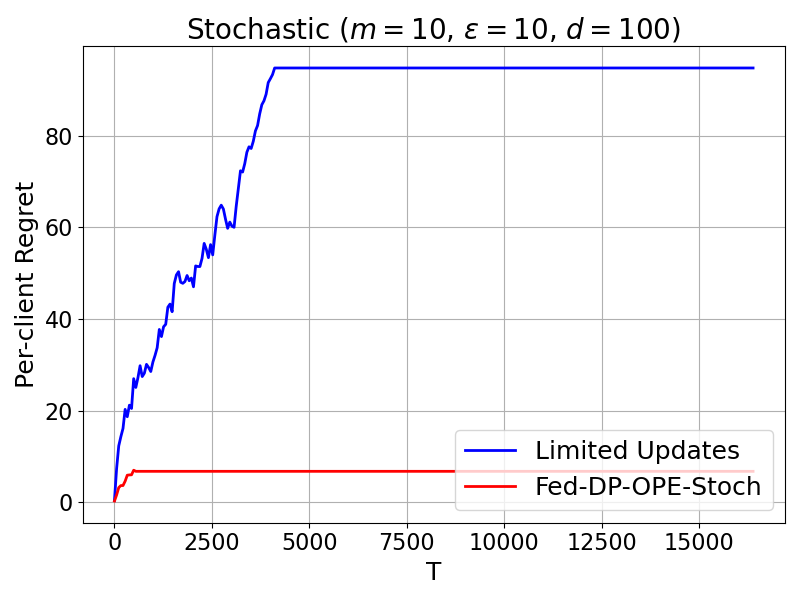}
        \label{fig:figure1}
    \end{subfigure}
    \hfill
    \begin{subfigure}[b]{0.32\textwidth}
        \includegraphics[width=\textwidth]{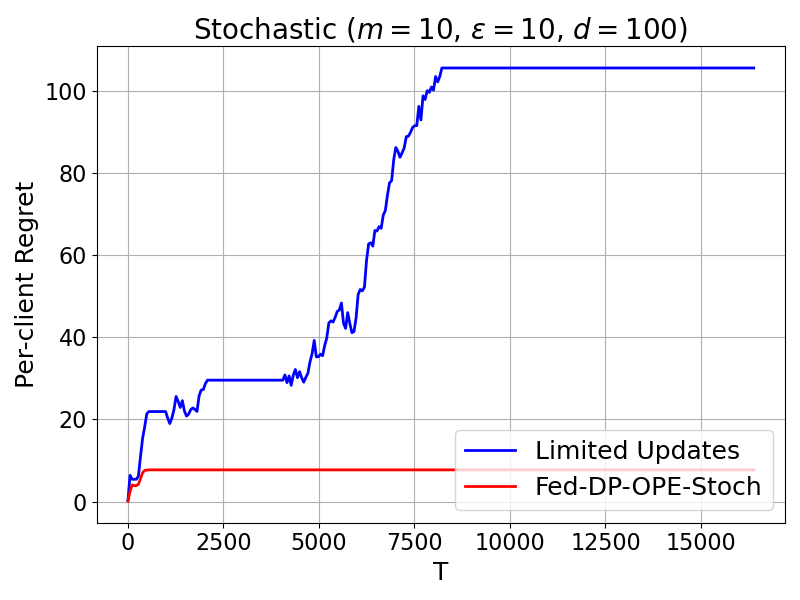}
        \label{fig:figure2}
    \end{subfigure}
    \hfill
    \begin{subfigure}[b]{0.32\textwidth}
        \includegraphics[width=\textwidth]{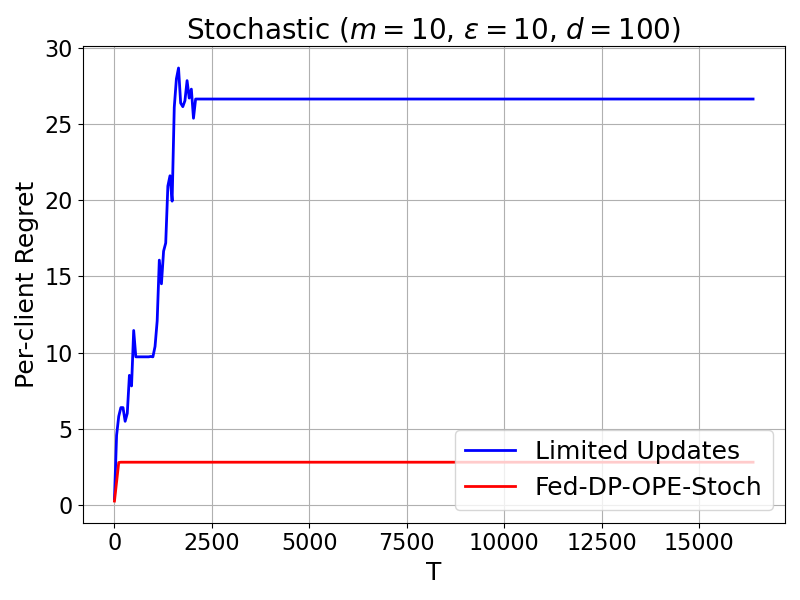}
        \label{fig:figure3}
    \end{subfigure}
    
    \begin{subfigure}[b]{0.32\textwidth}
        \includegraphics[width=\textwidth]{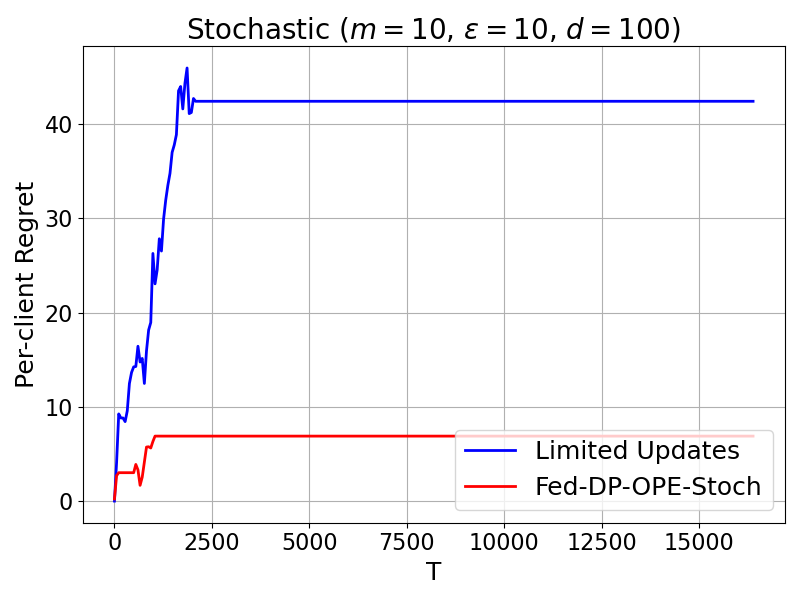}
        \label{fig:figure4}
    \end{subfigure}
    \hfill
    \begin{subfigure}[b]{0.32\textwidth}
        \includegraphics[width=\textwidth]{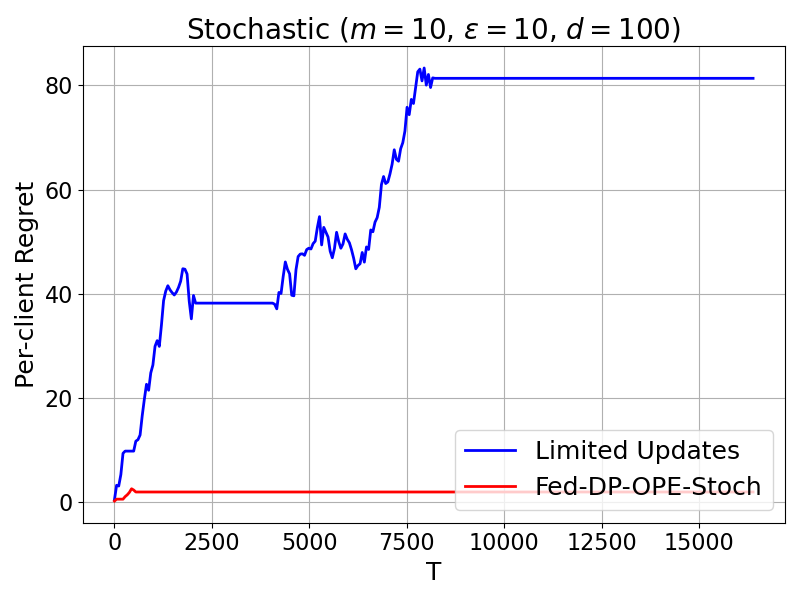}
        \label{fig:figure5}
    \end{subfigure}
    \hfill
    \begin{subfigure}[b]{0.32\textwidth}
        \includegraphics[width=\textwidth]{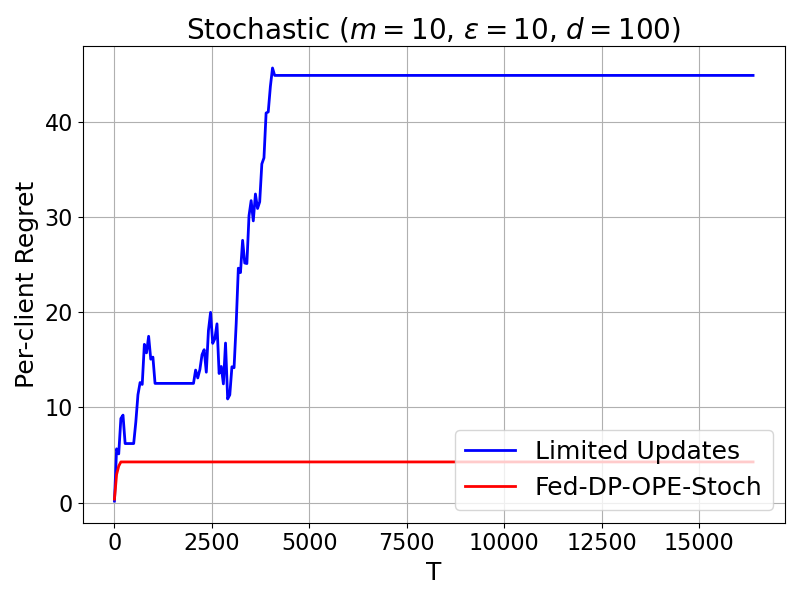}
        \label{fig:figure6}
    \end{subfigure}
    
    \caption{Comparison between Fed-DP-OPE-Stoch and Limited Updates with different random seeds.}
    \label{fig:stochseed}
\end{figure}

\begin{figure}[h]
    \centering
    \begin{subfigure}[b]{0.32\textwidth}
        \includegraphics[width=\textwidth]{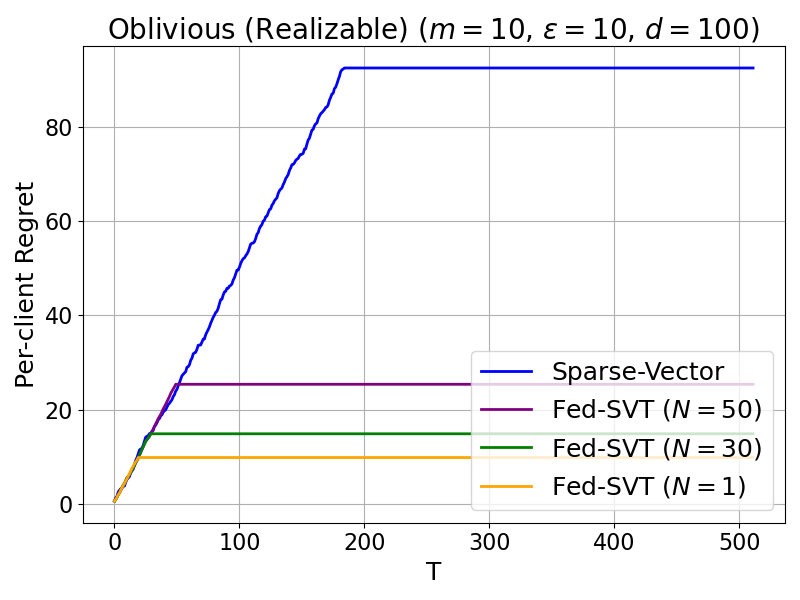}
        \label{fig:figure1}
    \end{subfigure}
    \hfill
    \begin{subfigure}[b]{0.32\textwidth}
        \includegraphics[width=\textwidth]{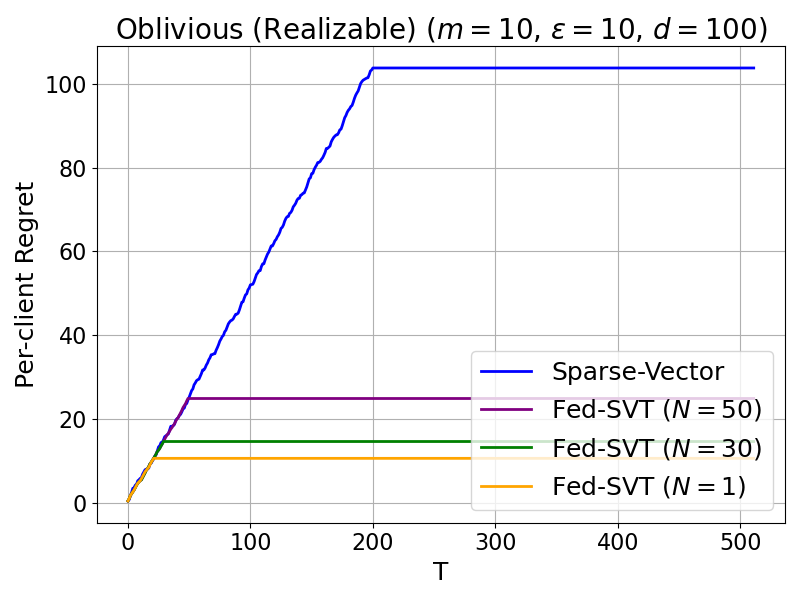}
        \label{fig:figure2}
    \end{subfigure}
    \hfill
    \begin{subfigure}[b]{0.32\textwidth}
        \includegraphics[width=\textwidth]{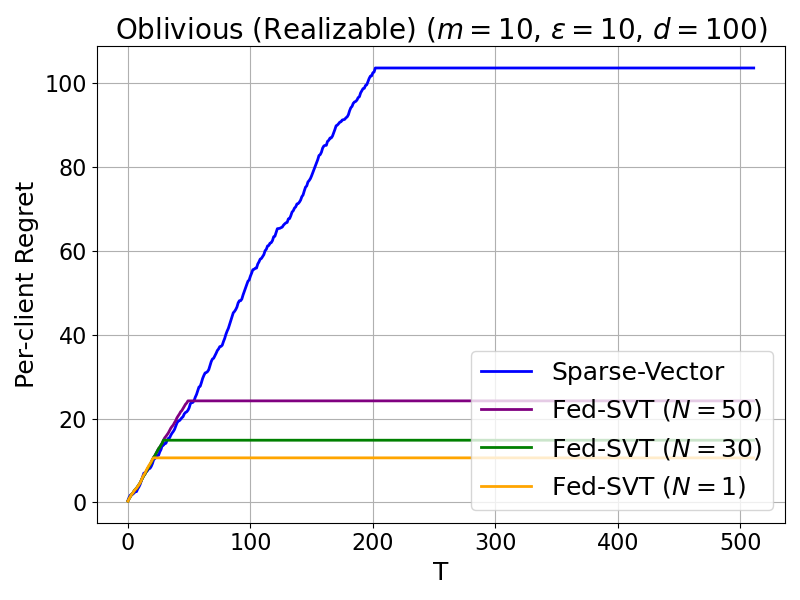}
        \label{fig:figure3}
    \end{subfigure}
    
    \begin{subfigure}[b]{0.32\textwidth}
        \includegraphics[width=\textwidth]{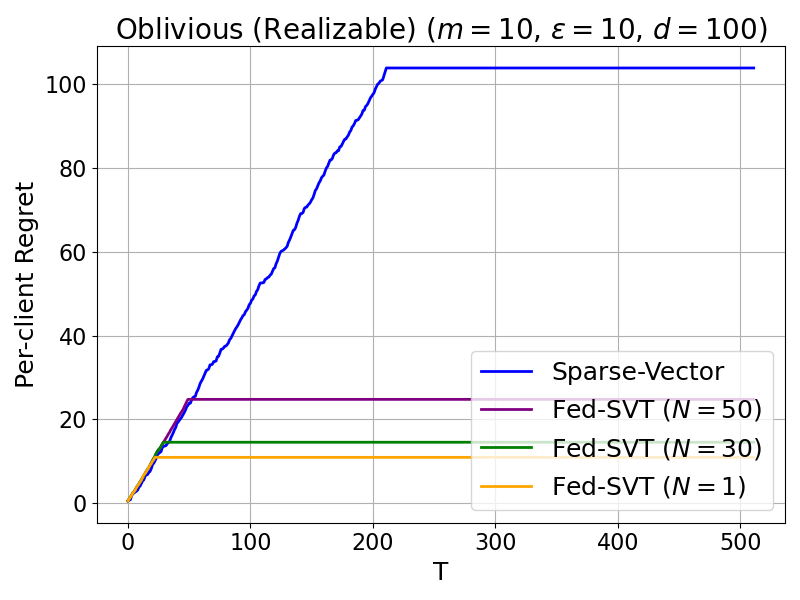}
        \label{fig:figure4}
    \end{subfigure}
    \hfill
    \begin{subfigure}[b]{0.32\textwidth}
        \includegraphics[width=\textwidth]{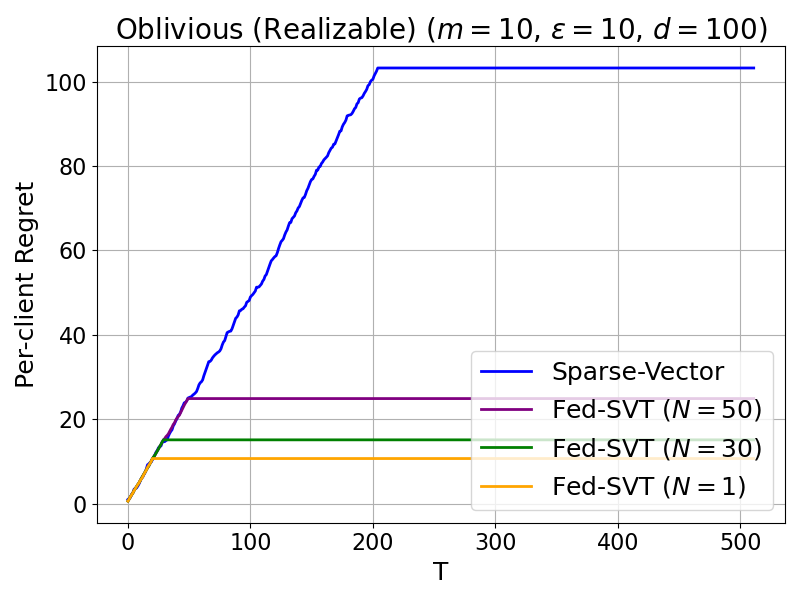}
        \label{fig:figure5}
    \end{subfigure}
    \hfill
    \begin{subfigure}[b]{0.32\textwidth}
        \includegraphics[width=\textwidth]{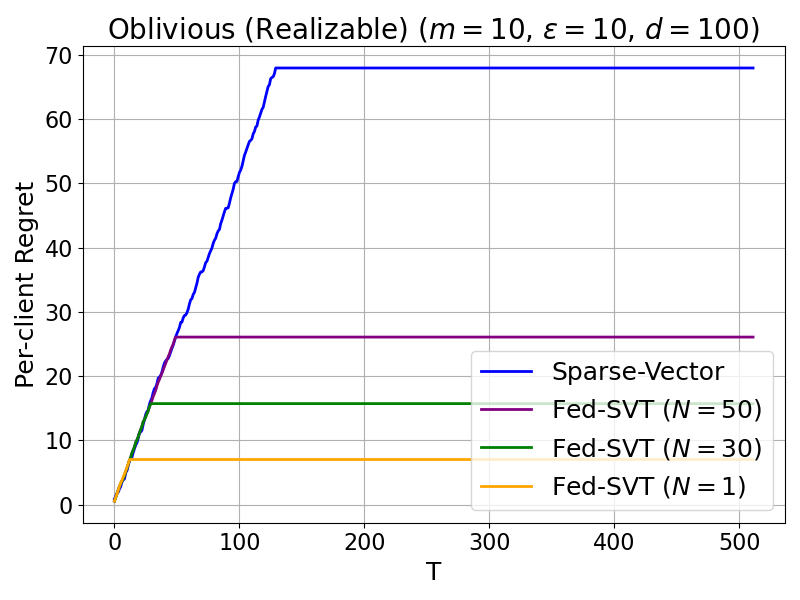}
        \label{fig:figure6}
    \end{subfigure}
    
    \caption{Comparison between Fed-SVT and Sparse-Vector with different random seeds.}
    \label{fig:svtseed}
\end{figure}

\section{Experiments on MovieLens-1M}\label{appx:movielens}

We use the MovieLens-1M dataset \citep{harper2015movielens} to evaluate the performances of Fed-SVT, comparing it with the single-player model Sparse-Vector \citep{asi2023near}. We first compute the rating matrix of 6040 users to 18 movie genres (experts) $R = [r_{u,g}] \in \mathbb{R}^{6040\times 18}$, and then calculate $L = [\max (0,r_{u,g^\star}-r_{u,g})]\in \mathbb{R}^{6040\times 18}$ where $g^\star = \arg \max_g \left( \frac{1}{6040} \sum_{u=1}^{6040} r_{u,g} \right)$. We generate the sequence of loss functions $\{l_u\}_{u\in [6040]}$ where $l_u = L_{u,:}$. In our experiments, we set $m=10$, $T = 604$, $\varepsilon = 10$, $\delta = 0$ and run 10 trials. In Fed-SVT, we experiment with communication intervals $N =1, 30, 50$, where communication cost scales in $O(mdT/N)$. The per-client cumulative regret as a function of $T$ is plotted in \Cref{fig:movielens}. Our results show that Fed-SVT significantly outperforms Sparse-Vector with low communication costs (notably in the $N=50$ case). These results demonstrate the effectiveness of our algorithm in real-world applications.

\begin{figure}[h]
    \centering
    \includegraphics[width=0.5\linewidth]{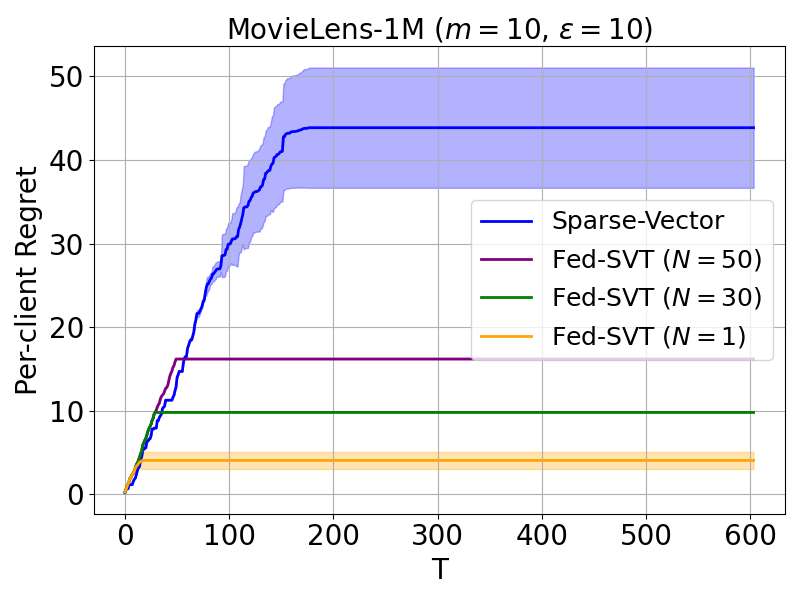}
    \caption{Regret performance with MovieLens dataset. Shaded area indicates the standard deviation.}
    \label{fig:movielens}
\end{figure}

\section{Background on Differential Privacy}\label{appendix: DP}

\subsection{Advanced Composition}\label{appendix: DP adv composition}

\begin{lemma}[Advanced composition, \citet{dwork2014algorithmic}]\label{lemma: adv composition}
    If $\mathcal{A}_1,\ldots,\mathcal{A}_k$ are randomized algorithms that each is $(\varepsilon, \delta)$-DP, then their composition $(\mathcal{A}_1(\mathcal{S}),\ldots,\mathcal{A}_k(\mathcal{S}))$ is $(\sqrt{2k\log (1/\delta^\prime)}\varepsilon+k\varepsilon (e^\varepsilon-1), \delta^\prime +k\delta)$-DP.
\end{lemma}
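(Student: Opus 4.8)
The plan is to establish the advanced composition bound for the adaptive composition of $k$ mechanisms, each $(\varepsilon,\delta)$-differentially private, via the privacy-loss random variable argument of \citet{dwork2014algorithmic}. First I would reduce to the $\delta=0$ case by invoking the standard characterization that any $(\varepsilon,\delta)$-DP mechanism $\mathcal{A}_i$ is, up to total variation $\delta$, indistinguishable from a mechanism whose output distributions on neighboring inputs are pointwise $e^{\varepsilon}$-close; concretely, for each $i$ there is an event of probability at least $1-\delta$ on which the realized privacy loss is genuinely bounded in magnitude by $\varepsilon$. A union bound over the $k$ mechanisms absorbs a total of $k\delta$ into the final additive slack, so it suffices to control the accumulated privacy loss of $k$ mechanisms whose per-step loss is $\varepsilon$-bounded on a high-probability event.

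Next I would fix neighboring datasets $\mathcal{S},\mathcal{S}'$ and a realized output sequence $(o_1,\ldots,o_k)$ drawn from the composed mechanism on $\mathcal{S}$, and define the per-step privacy loss $Z_i = \ln\bigl(\Pb[\mathcal{A}_i(\mathcal{S})=o_i \mid o_{<i}]/\Pb[\mathcal{A}_i(\mathcal{S}')=o_i \mid o_{<i}]\bigr)$. The key structural observation is that, conditioned on the past outputs $o_{<i}$, the conditional expectation $\Eb[Z_i \mid o_{<i}]$ is at most $\varepsilon(e^{\varepsilon}-1)$ (the standard bound on the expected, KL-type privacy loss of an $\varepsilon$-bounded mechanism), while $|Z_i|\le\varepsilon$ on the good event. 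Hence $\sum_i \bigl(Z_i - \Eb[Z_i\mid o_{<i}]\bigr)$ is a bounded martingale-difference sum, and Azuma--Hoeffding yields that $\sum_i Z_i \le k\varepsilon(e^{\varepsilon}-1) + \sqrt{2k\ln(1/\delta')}\,\varepsilon$ except with probability at most $\delta'$.

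Finally I would convert this high-probability bound on the aggregate privacy loss into an indistinguishability guarantee: whenever the total loss is at most $\varepsilon' := \sqrt{2k\ln(1/\delta')}\,\varepsilon + k\varepsilon(e^{\varepsilon}-1)$, the likelihood ratio of the composed outputs is at most $e^{\varepsilon'}$, and the exceptional set together with the earlier union bound contributes at most $\delta'+k\delta$ to the additive term, yielding the claimed $(\sqrt{2k\ln(1/\delta')}\,\varepsilon + k\varepsilon(e^{\varepsilon}-1),\, \delta'+k\delta)$-DP guarantee. The main obstacle is the bookkeeping at the two places where $\delta$ enters: (i) the reduction from $(\varepsilon,\delta)$-DP to an $\varepsilon$-bounded privacy loss on a high-probability event, which needs the ``dense'' coupling lemma rather than a naive pointwise bound, and (ii) translating the tail bound on the privacy loss back into the event-based definition of DP while ensuring the two $\delta$-contributions add rather than compound. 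The martingale step itself is routine once the conditional expectation bound $\Eb[Z_i\mid o_{<i}]\le \varepsilon(e^{\varepsilon}-1)$ is in hand.
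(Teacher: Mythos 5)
The paper does not prove this lemma; it is imported verbatim as background from \citet{dwork2014algorithmic}, and your outline is precisely the standard Dwork--Rothblum--Vadhan argument from that source (reduce $(\varepsilon,\delta)$ to $\varepsilon$-bounded privacy loss via the coupling characterization, bound the conditional mean of the per-step loss by $\varepsilon(e^{\varepsilon}-1)$, apply Azuma to the privacy-loss martingale, and convert the tail bound back to $(\varepsilon',\delta'+k\delta)$-indistinguishability). The outline is correct; the only step needing care is that the Azuma/Hoeffding step must exploit that each $Z_i$ lies in an interval of length $2\varepsilon$ (so the deviation term is $\sqrt{2k\log(1/\delta')}\,\varepsilon$ rather than picking up an extra $e^{\varepsilon}$ factor from the centered bound), which is exactly how the cited proof proceeds.
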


\subsection{Report Noisy Max}\label{appendix: DP report noisy max}

The "Report Noisy Max" mechanism can be used to privately identify the counting query among $m$ queries with the highest value. This mechanism achieves this by adding Laplace noise independently generated from $\text{Lap}(\Delta/\varepsilon)$ to each count and subsequently determining the index corresponding to the largest noisy count (we ignore the possibility of a tie), where $\Delta$ is the sensitivity of the queries. Report noisy max gives us the following guarantee.

\begin{lemma}[\citet{dwork2014algorithmic}, Claim 3.9]\label{lemma: Report Noisy Max}
    The Report Noisy Max algorithm is $\varepsilon$-differentially private.
\end{lemma}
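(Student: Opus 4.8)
The plan is to establish the $\varepsilon$-DP guarantee by the standard conditioning argument of \citet{dwork2014algorithmic}, reducing the claim to a one-dimensional statement about the Laplace tail. Write $c_1,\dots,c_m$ for the true query values on a database $D$ and $c_1',\dots,c_m'$ for the values on a neighbor $D'$, and recall that Report Noisy Max outputs $\arg\max_{i}(c_i+\xi_i)$, where $\xi_1,\dots,\xi_m\sim\text{Lap}(\Delta/\varepsilon)$ are i.i.d.\ and $\Delta$ is the sensitivity of the queries. Because the output space $[m]$ is discrete, it suffices to fix an arbitrary index $i\in[m]$ and show $\Pb[\mathcal{A}(D)=i]\le e^{\varepsilon}\Pb[\mathcal{A}(D')=i]$; ranging over all $i$ and all output events then gives the definition of $\varepsilon$-DP.

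First I would condition on the noise on all coordinates other than $i$. Fixing $\xi_{-i}=(\xi_j)_{j\ne i}$, the event $\{\mathcal{A}(D)=i\}$ collapses to a threshold event on the single variable $\xi_i$: setting $r^\star=\max_{j\ne i}(c_j+\xi_j-c_i)$, index $i$ is the noisy maximizer under $D$ exactly when $\xi_i> r^\star$ (ties having probability zero under the continuous noise). Hence $\Pb[\mathcal{A}(D)=i\mid \xi_{-i}]=\Pb[\xi_i\ge r^\star]$, and the corresponding quantity for $D'$ is a Laplace tail probability at some shifted threshold.

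The key step is to control how far this winning threshold can move between $D$ and $D'$. Here I would invoke the monotonicity of counting queries: passing from $D$ to a neighbor moves every count in the same direction by at most $\Delta$, so in the relevant case $c_j\le c_j'\le c_j+\Delta$ for all $j$. Using this I claim that whenever $\xi_i> r^\star+\Delta$, index $i$ also wins under $D'$ with the same $\xi_{-i}$: for every $j\ne i$,
$$c_i'+\xi_i > c_i+r^\star+\Delta \ge (c_j+\xi_j)+\Delta \ge c_j'+\xi_j,$$
where the first (strict) inequality uses $c_i'\ge c_i$ together with $\xi_i>r^\star+\Delta$, the middle uses $c_i+r^\star=\max_{j'}(c_{j'}+\xi_{j'})\ge c_j+\xi_j$, and the last uses $c_j'\le c_j+\Delta$. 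Consequently $\Pb[\mathcal{A}(D')=i\mid \xi_{-i}]\ge \Pb[\xi_i\ge r^\star+\Delta]$.

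Finally I would apply the Laplace tail ratio. With scale $\Delta/\varepsilon$ the density $f$ satisfies $f(x)/f(x+\Delta)=\exp\!\big(\varepsilon(|x+\Delta|-|x|)/\Delta\big)\le e^{\varepsilon}$ for every $x$, so integrating over the tail gives $\Pb[\xi_i\ge r^\star]\le e^{\varepsilon}\,\Pb[\xi_i\ge r^\star+\Delta]$. Chaining the three displays yields $\Pb[\mathcal{A}(D)=i\mid \xi_{-i}]\le e^{\varepsilon}\,\Pb[\mathcal{A}(D')=i\mid \xi_{-i}]$ for each fixed $\xi_{-i}$; taking expectation over $\xi_{-i}$ removes the conditioning, and since $i$ was arbitrary this establishes $\varepsilon$-DP. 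I expect the threshold-shift step to be the main obstacle: it is precisely the monotone, counting-query structure that keeps the shift at $\Delta$ (rather than $2\Delta$) and thereby delivers the factor $e^{\varepsilon}$ rather than $e^{2\varepsilon}$, so I would be careful to flag that the cited $\varepsilon$-DP guarantee depends on this monotonicity of the queries to which the mechanism is applied.
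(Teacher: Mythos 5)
Your overall structure is the standard proof of Claim 3.9 in Dwork and Roth (2014) --- note the paper does not reprove this lemma, it only cites it --- and your conditioning on $\xi_{-i}$, the threshold $r^\star$, and the Laplace tail ratio are exactly the cited argument, including the correct observation that monotonicity of the counting queries is what keeps the threshold shift at $\Delta$ rather than $2\Delta$ (and hence yields $e^{\varepsilon}$ rather than $e^{2\varepsilon}$).

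There is, however, one genuine gap: you prove $\Pb[\mathcal{A}(D)=i]\le e^{\varepsilon}\Pb[\mathcal{A}(D')=i]$ only in what you call ``the relevant case'' $c_j\le c_j'\le c_j+\Delta$, i.e., when passing from $D$ to $D'$ raises every count. Differential privacy quantifies over \emph{ordered} pairs of neighbors, and the neighboring relation is symmetric, so for the same unordered pair you must also establish $\Pb[\mathcal{A}(D')=i]\le e^{\varepsilon}\Pb[\mathcal{A}(D)=i]$ --- equivalently, the case where the counts \emph{decrease} from the first database to the second. That case is not a relabeling of the one you wrote: in your display, the first inequality $c_i'+\xi_i> c_i+r^\star+\Delta$ relies on $c_i'\ge c_i$, which fails when counts decrease. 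The repair is the mirrored chain in which monotonicity and the Lipschitz bound swap roles: with $c_j'\le c_j\le c_j'+\Delta$ and $\xi_i>r^\star+\Delta$, one has for every $j\ne i$ that $c_i'+\xi_i\ge (c_i-\Delta)+\xi_i> c_i+r^\star\ge c_j+\xi_j\ge c_j'+\xi_j$. The Dwork--Roth proof carries out both directions explicitly; as written, yours covers exactly half of it. Two cosmetic points: $r^\star$ should be $\max_{j\ne i}(c_j+\xi_j)-c_i$, so the maximum in your middle inequality ranges over $j'\ne i$ rather than all $j'$; and your parenthetical that ties occur with probability zero is indeed what licenses conflating $>$ with $\ge$ throughout.
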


\subsection{Sparse Vector Technique}\label{appendix: DP sparse vector technique}

We recall the sparse-vector technique here. Given an input $\mathcal{S}=\left(z_1, \ldots, z_n\right) \in \mathcal{Z}^n$, the algorithm takes a stream of queries $q_1, q_2, \ldots, q_T$ in an online manner. We assume that each $q_i$ is 1-sensitive, i.e., $\left|q_i(\mathcal{S})-q_i\left(\mathcal{S}^{\prime}\right)\right| \leq 1$ for neighboring datasets $\mathcal{S}, \mathcal{S}^{\prime} \in \mathcal{Z}^n$ that differ in a single element. We have the following guarantee.

\begin{lemma}[\citet{dwork2014algorithmic}, Theorem 3.24]\label{lemma:sparse-vector}
    Let $\mathcal{S}=\left(z_1, \ldots, z_n\right) \in \mathcal{Z}^n$. For a threshold $L$ and $\rho>0$, there is an $\varepsilon$-DP algorithm (AboveThreshold) that halts at time $k \in[T+1]$ such that for $\alpha=\frac{8(\log T+\log (2 / \rho))}{\varepsilon}$ with probability at least $1-\rho$, we have $q_i(\mathcal{S}) \leq L+\alpha$ for all $t<k$, and $q_k(\mathcal{S}) \geq L-\alpha$ or $k=T+1$.
\end{lemma}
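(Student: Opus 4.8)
The statement is the standard AboveThreshold guarantee, so the plan is to exhibit the algorithm, prove its $\varepsilon$-privacy by a coupling of the Laplace noises, and then read off accuracy from Laplace tail bounds. The algorithm draws a noisy threshold $\hat{L}=L+\mathrm{Lap}(2/\varepsilon)$ once, and then for each incoming $1$-sensitive query $q_t$ draws fresh $\nu_t\sim\mathrm{Lap}(4/\varepsilon)$, outputting $\bot$ as long as $q_t(\mathcal{S})+\nu_t<\hat{L}$ and halting with $\top$ at the first index $k$ where $q_k(\mathcal{S})+\nu_k\ge\hat{L}$; if no such index exists it returns $k=T+1$. The output is therefore a single halting time, equivalently the string $(\bot,\dots,\bot,\top)$, and I will use $1$-sensitivity of every $q_t$ throughout.

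For privacy, fix neighboring $\mathcal{S},\mathcal{S}'$ and a target output $(\bot^{k-1},\top)$ with $k\le T$. I would condition on the noises $\nu_1,\dots,\nu_{k-1}$ on the $\bot$-steps (identical for both databases) and set $g(\mathcal{S})=\max_{t<k}(q_t(\mathcal{S})+\nu_t)$, so that the conditional probability of this output, taken over $\hat{L}$ and $\nu_k$, equals $\Pb[\hat{L}>g(\mathcal{S})\text{ and }\nu_k\ge\hat{L}-q_k(\mathcal{S})]$. The key step is a two-part shift from $\mathcal{S}$ to $\mathcal{S}'$: a change of variables $\hat{L}\mapsto\hat{L}+(g(\mathcal{S}')-g(\mathcal{S}))$ rewrites the threshold condition as $\hat{L}>g(\mathcal{S}')$ at the cost of a density factor $e^{(\varepsilon/2)\lvert g(\mathcal{S}')-g(\mathcal{S})\rvert}\le e^{\varepsilon/2}$ (threshold scale $2/\varepsilon$, and $\lvert g(\mathcal{S}')-g(\mathcal{S})\rvert\le 1$ by $1$-sensitivity), while the tail term is bounded by shifting the argument of $\Pb[\nu_k\ge\cdot]$ by at most $2$, which costs $e^{(\varepsilon/4)\cdot 2}=e^{\varepsilon/2}$ (query scale $4/\varepsilon$). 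Multiplying the two factors yields $e^{\varepsilon}$, and integrating out $\nu_{<k}$ gives the bound for every output; the terminal output $k=T+1$ involves only the $\bot$-conditions and follows from the same threshold shift. I expect this coupling — in particular arguing that raising $\hat{L}$ never costs extra budget on the $k-1$ below-threshold steps, which is the monotonicity underlying why sparse-vector pays only once — to be the main obstacle.

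For accuracy, set $\alpha=\frac{8(\log T+\log(2/\rho))}{\varepsilon}$. Using the Laplace tail bound $\Pb[\lvert\mathrm{Lap}(b)\rvert\ge z]=e^{-z/b}$, the per-query noises (scale $4/\varepsilon$) satisfy $\Pb[\lvert\nu_t\rvert>\alpha/2]=e^{-\alpha\varepsilon/8}\le\rho/(2T)$, so a union bound over the $T$ steps gives $\max_{t\le T}\lvert\nu_t\rvert\le\alpha/2$ with probability at least $1-\rho/2$, and the threshold noise (scale $2/\varepsilon$) obeys $\lvert\hat{L}-L\rvert\le\alpha/2$ with probability at least $1-\rho/2$. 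On the intersection of these events the conclusion is deterministic: for every $t<k$ the $\bot$-rule gives $q_t(\mathcal{S})<\hat{L}-\nu_t\le L+\lvert\hat{L}-L\rvert+\lvert\nu_t\rvert\le L+\alpha$, while at the halting step $q_k(\mathcal{S})\ge\hat{L}-\nu_k\ge L-\lvert\hat{L}-L\rvert-\lvert\nu_k\rvert\ge L-\alpha$, which is exactly the claim (the alternative $k=T+1$ needs no bound). Matching the constant $8$ in $\alpha$ to the two noise scales is the only remaining calculation and is routine.
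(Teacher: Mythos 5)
The paper does not prove this lemma at all---it imports it by citation as Theorem 3.24 of \citet{dwork2014algorithmic}---and your proposal is a correct reconstruction of exactly that standard argument: the AboveThreshold algorithm with threshold noise $\mathrm{Lap}(2/\varepsilon)$ and per-query noise $\mathrm{Lap}(4/\varepsilon)$, privacy via conditioning on $\nu_{<k}$, collapsing the $\bot$-steps into the single quantity $g(\mathcal{S})=\max_{t<k}(q_t(\mathcal{S})+\nu_t)$, and a change of variables shifting $\hat{L}$ by at most $1$ and the argument of $\nu_k$ by at most $2$, giving $e^{\varepsilon/2}\cdot e^{\varepsilon/2}=e^{\varepsilon}$; accuracy via Laplace tails and a union bound, recovering the constant $8$ in $\alpha$. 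Your derivation is sound and matches the source the paper cites, so there is nothing to correct.
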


\section{Broader Impacts}

This work improves online learning and decision-making through collaboration among multiple users without exposing personal information, which helps balance the benefits of big data with the need to protect individual privacy, promoting ethical data usage and fostering societal trust in an increasingly data-driven world.

\end{document}